\documentclass{amsart}
\usepackage[margin=1.4in]{geometry}

\PassOptionsToPackage{numbers, compress}{natbib}

\usepackage{amsmath,amsfonts,bm}
\usepackage{amssymb}

\def\eqref#1{equation~\ref{#1}}

\def\1{\bm{1}}

\def\vq{{\bm{q}}}
\def\vr{{\bm{r}}}
\def\vs{{\bm{s}}}

\def\vu{{\bm{u}}}
\def\vv{{\bm{v}}}
\def\vw{{\bm{w}}}
\def\vx{{\bm{x}}}
\def\vy{{\bm{y}}}
\def\vz{{\bm{z}}}

\DeclareMathAlphabet{\mathsfit}{\encodingdefault}{\sfdefault}{m}{sl}
\SetMathAlphabet{\mathsfit}{bold}{\encodingdefault}{\sfdefault}{bx}{n}

\def\sB{{\mathbb{B}}}

\def\sI{{\mathbb{I}}}

\def\sU{{\mathbb{U}}}
\def\sV{{\mathbb{V}}}

\newcommand{\R}{\mathbb{R}}

\def\cA{{\mathcal{A}}}
\def\cB{{\mathcal{B}}}
\def\cC{{\mathcal{C}}}
\def\cD{{\mathcal{D}}}
\def\cE{{\mathcal{E}}}
\def\cF{{\mathcal{F}}}
\def\cG{{\mathcal{G}}}

\def\cJ{{\mathcal{J}}}
\def\cK{{\mathcal{K}}}

\def\cM{{\mathcal{M}}}
\def\cN{{\mathcal{N}}}
\def\cO{{\mathcal{O}}}

\def\cS{{\mathcal{S}}}

\def\cU{{\mathcal{U}}}
\def\cV{{\mathcal{V}}}

\def\cY{{\mathcal{Y}}}

\allowdisplaybreaks

\usepackage{amsthm}

\newtheorem{theorem}{Theorem}[section]
\newtheorem{lemma}[theorem]{Lemma}

\theoremstyle{definition}
\newtheorem{definition}[theorem]{Definition}

\theoremstyle{remark}

\usepackage[utf8]{inputenc} 
\usepackage[T1]{fontenc}    
\usepackage{hyperref}       
\usepackage{url}            
\usepackage{booktabs}       
\usepackage{amsfonts}       
\usepackage{nicefrac}       
\usepackage{microtype}      
\usepackage{xcolor}         

\usepackage{algorithm}
\usepackage{algpseudocode}
\usepackage{amsmath}
\usepackage{graphicx}
\usepackage{listings}
\usepackage{enumitem}
\usepackage{float}
\usepackage{multirow}
\usepackage{amsrefs}

\makeatletter
\renewcommand{\eqref}[1]{\textup{(\ref{#1})}}
\makeatother

\title[Set-Valued Solution Map Learning]{Bifurcation Models: Learning Set-Valued Solution Maps with Weight-Tied Dynamics}

\author{Caleb Jore}
\address{(CJ) School of Data, Mathematical, and Statistical Sciences, University of Central Florida, Orlando, FL 32826.}
\email{calebjore@gmail.com}
\author{Jialin Liu}
\address{(JL) School of Data, Mathematical, and Statistical Sciences, University of Central Florida, Orlando, FL 32826.}
\email{jialin.liu@ucf.edu}
\date{\today}

\thanks{The authors are listed alphabetically. Corresponding author: Jialin Liu, jialin.liu@ucf.edu.}

\begin{document}

\begin{abstract}
Many scientific and combinatorial problems admit multiple correct solutions, not a single label. Standard supervised learning resolves this ambiguity by choosing one solution as the target, but this hidden selector can be arbitrary, discontinuous, and harder to learn than the underlying solution set. We study bifurcation models, a weight-tied dynamical view in which different initializations can converge to different stable equilibria, so the model represents an attractor landscape rather than one chosen branch. We prove that broad set-valued maps with locally Lipschitz branches can be represented by regular equilibrium dynamics and that the induced selectors are almost everywhere regular, while manual selectors can be arbitrarily irregular. Experiments on frustrated Ising models show that such dynamics can discover multiple valid equilibria without branch labels and outperform single-branch supervision. Allen--Cahn experiments further show that diversity is not automatic: it can be encouraged explicitly, but with an accuracy--diversity tradeoff.
\end{abstract}

\maketitle

\section{Introduction}

Most supervised learning formulations begin by assuming that the target is a function: given an input $\vx$, there is a single desired output $\vy_\star=f(\vx)$. This assumption is so standard that it is often invisible. Yet in many scientific and engineering problems, it is not the right abstraction. A frustrated spin system may admit many low-energy configurations \cite{kim2010quantum}; a nonlinear PDE may have multiple steady states \cite{uecker2021continuation}; an inverse-design problem may have several physically valid designs \cite{you2024deep}.
In such settings, the object to be learned is not a single-valued map $f:\cK\to \cY$, but a set-valued map 
\[
\cF:\cK \rightrightarrows \cY, \qquad \vx \mapsto \cF(\vx), 
\]

where $\cF(\vx)$ is the set of admissible solutions for the same input $\vx$.

The usual supervised workaround is to choose one element
$u(x)\in \mathcal F(x)$ and train a model to imitate this branch. This is benign when a regular selector is known, but in many multi-solution problems no such selector is specified: labels may come from solver initialization \cite{farrell2015deflation,zou2025learning}, a symmetry-breaking convention \cite{lawrence2024improving}, or an arbitrary choice among valid solutions. The labeling procedure therefore defines a hidden selector, which may switch between branches as $x$ varies and inject artificial irregularity into the learning problem.
An alternative is to learn the target as set-valued. Neural  architectures for sets can represent and process unordered collections \cite{qi2017pointnet,zaheer2017deep}. In multi-solution scientific problems, however, the target is not merely an unordered list but a solution set whose elements may correspond to distinct physical, geometric, or combinatorial branches. This raises a basic question: \textit{should multi-solution problems be represented by choosing one branch, or by learning the solution set itself?}

We study this question through the lens of weight-tied parameterized dynamics. Starting from an initial state $\vy_0$ that may be fixed, sampled, or chosen by the user, the model iterates
\[
\vy_{t+1}=g_\theta(\vy_t,\vx), \qquad t=0,1,2,\ldots .
\]
Here, $g_\theta$ may itself be a multi-layer neural module, such as an MLP, GNN, or neural operator. The term \emph{\textbf{weight-tied}} means that, when the dynamics are unrolled over time, the same parameters $\theta$ are reused at every iteration $t$, rather than using a different layer at each depth.

For a fixed input $\vx$, different initializations $\vy_0$ may converge to different stable equilibria. Thus the model does not need to output a single label. Instead, it can represent the attractor set 
\begin{equation}
\label{eq:attractor-theta}
    \cA_\theta(\vx):= 
    \left\{ 
    \lim_{t\to\infty} g_\theta^t(\vy_0,\vx) \;:\; \vy_0\in \cY
    \right\},
\end{equation}

where $g^t(\vy_0,\vx)$ denotes applying the update $g(\cdot,\vx)$ for $t$ steps
starting from $\vy_0$. For the same input $\vx$, trajectories from different initial states $\vy_0$ may split over iterations into different basins of attraction and converge to different solution branches. We refer to this viewpoint as a \textbf{\textit{bifurcation model}}. 

Rather than introducing a new architecture, we focus on three fundamental questions:
\begin{itemize}
    \item[(Q1)] \textbf{Representation.} Can weight-tied models represent set-valued mappings in a regular, learnable way? More precisely, for what class of set-valued maps $\cF:\cK\rightrightarrows \cY$ does there exist an operator $g:\cY\times \cK\to \cY$ whose stable fixed points recover $\cF(\vx)$?
    \item[(Q2)] \textbf{Label selection.} If such a representation exists, and suppose all solution branches are available during training. Should we learn the whole set-valued map, or select one element of $\cF(\vx)$ as the label and train a single-valued predictor? What regularity or stability can be lost when a set-valued problem is collapsed into a manually chosen solution selector?
    \item[(Q3)] \textbf{Discovery without complete branch information.} In practice, the full set of branches is often not observed. Given only an unsupervised physical objective and multiple initializations, will a weight-tied dynamic model automatically discover multiple solutions without full branch information during training? 
\end{itemize}
Centered around the three questions, we contribute the following:
\begin{itemize}
    \item \textbf{Representation.} We prove that broad set-valued maps with locally Lipschitz branches can be represented by regular weight-tied equilibrium dynamics: for almost every initialization, trajectories converge to valid branches, and varying the initialization recovers all branches. 
\item \textbf{Selection.} We show that selecting one element of $\cF(\vx)$ as the label can create arbitrarily irregular supervised targets, even when the underlying branches are regular. In contrast, the selectors induced by equilibrium dynamics are locally Lipschitz almost everywhere. 
\item \textbf{Discovery.} We show that in favorable cases, weight-tied dynamics can discover multiple valid equilibria even without complete branch information during training. In addition, we observe that such multi-branch discovery is problem-dependent rather than automatic. It emerges naturally in our Ising experiments, but in Allen--Cahn energy minimization can collapse to a dominant branch despite the model's multi-equilibrium capacity. Explicit diversity promotion recovers multiple branches, revealing an accuracy--diversity tradeoff.
\end{itemize}


Here we discuss four lines of related works. The first line of work involves implicit or equilibrium models (e.g., \cite{bai2019deep,el2021implicit,fung2022jfb,gilton2021deep,geng2021training}). While these models usually focus on designing network layers that converge to a single, unique fixed point for deterministic prediction \cite{el2021implicit,winston2020monotone,jafarpour2021robust,revay2020lipschitz,havens2023exploiting,liu2026expressive}, here we treat the model as a dynamical system explicitly capable of representing an entire set of valid solutions through multiple stable equilibria. 
The second line of work is unrolling networks (e.g., \cite{monga2021algorithm,gregor2010learning,yang2016deep,xin2016maximal,metzler2017learned,zhang2018ista,adler2018learned,liu2019alista}), including recurrent networks \cite{bansal2022end,wang2025hierarchical}, looped transformers \cite{giannou2023looped,geiping2025scaling}, etc. We share their computational realization of unrolling weight-tied iterations during training, but our focus is fundamentally different: rather than optimizing a finite-depth refinement map, we study the limiting attractor structure of the shared operator to capture multi-basin landscapes. 
The third line of related work, \cite{li2023learning}, also uses an iterated map to recover multiple optima, but specifically aims to emulate the proximal-point algorithm for an explicit objective. In contrast, we study weight-tied dynamics more broadly as a fundamental representation of set-valued solution maps, analyzing their representational capacity, label regularity, and unsupervised branch discovery.
Finally, \cite{zou2025learning} uses deep ensembles of PINNs to discover multiple PDE solutions, which requires training an independent ensemble for each individual problem instance. Conversely, we learn an amortized solver over a distribution of instances, where solution multiplicity naturally emerges from different state initializations within a single shared weight-tied model.

The remainder of this paper is organized as follows. In Section \ref{sec:theory}, we analyze weight-tied dynamics within a nonparametric setting so that our theoretical results capture the generic nature of dynamical representations, not restricted to specific neural parameterizations. Sections \ref{sec:ising} and \ref{sec:allen-cahn} instantiate the same underlying principle using different parameterized operators in concrete applications.

\section{Theoretical Foundations}
\label{sec:theory}


In this section, we address (Q1) and (Q2) theoretically. We consider finite-branch set-valued maps $\cF(\vx)=\{f_1(\vx),\cdots,f_n(\vx)\}$, allowing different branches to coincide at some inputs.

\subsection{Can Weight-Tied Dynamics Represent Set-Valued Maps?}
\label{sec:expressivity}

To isolate the fundamental capacity of weight-tied dynamics from specific network structures, we study this in a generic nonparametric setting; thus, we drop the parameter subscript $\theta$ and study the existence/properties of general operators $g$. Accordingly, the attractor set in \eqref{eq:attractor-theta} can be written as 
\begin{equation}
\label{eq:attractor-set-g}
    \mathcal A_g(\vx) := \left\{ \lim_{t\to\infty} g^t(\vy_0,\vx) : \vy_0\in\mathcal Y
    \right\}.
\end{equation}

We now formalize (Q1) in this context: \textit{Let
$\mathcal F$ be a set-valued map. Does there exist $g$ such that its attractor set exactly recover the target, i.e., $\mathcal A_g(\vx)=\mathcal F(\vx)$, for all $\vx\in \cK$? }

In fact, a naive answer satisfies the above condition. Consider the following $g$ with $0<\eta<1$: 
\begin{equation}
\label{eq:naive}
g(\vy,\vx) := (1-\eta) \vy + \eta \Pi_{\cF(\vx)}(\vy)
\end{equation}

where $\Pi_{\cF(\vx)}(\vy)$ denotes a projection of $\vy$ onto $\cF(\vx)$, with an arbitrary fixed tie-breaking rule when the projection is nonunique.
Because this operator explicitly forces $\vy$ toward the closest point in the target set, any trajectory will unconditionally converge to a branch in $\mathcal{F}(\vx)$, and sweeping across all initializations $\vy_0$ will successfully recover every branch.

However, this naive construction is not practically insightful. Let's consider an example: $\cK=\cY=\R$ and $\cF(x) = \{x+1, x-1\}$ with two branches $f_1(x)=x+1$ and $f_2(x)=x-1$. Then \eqref{eq:naive} becomes
\[ g(y,x) = (1-\eta) y + \eta (x+1) \sI(y \geq x) + \eta (x-1) \sI(y<x) \]

Now we fix $y=0$. It can be shown that $g(0,x)$ is not Lipschitz continuous near $x=0$:
\[ \lim_{x \to 0^{+}} g(0,x) = \eta , \quad \lim_{x \to 0^{-}} g(0,x) = -\eta , \quad \lim_{\delta \to 0} \sup_{x_1,x_2 \in \sB(0,\delta)} \frac{|g(0,x_1)-g(0,x_2)|}{|x_1-x_2|} = \infty\]

Therefore, $(0,0)$ is a singular point of $g(0,x)$. In fact, all the points on the switching boundaries $\{(x,y):y=x\}$ are singular. This reveals a key limit of this naive construction: \textit{it's hard to parameterize such an operator $g$ with neural networks.} While neural networks have the ability  to approximate operators with singular points, doing so demands much higher network complexity—i.e., increasing depth/width to capture the growing steepness near the singularity (see, e.g., \cite{telgarsky2017neural}).

Therefore, (Q1) becomes: \textit{whether, given $\cF$, there exists an operator $g$ that satisfies the above conditions while also being regular (specifically, Lipschitz continuous) with respect to the input}. This regularity is essential, as it makes $g$ an easier operator for a neural network to model and parameterize. 

Our findings show an affirmative result, provided we exclude the \textit{unstable collapse set} $\mathcal{U}$: a specific subset of inputs where branch intersections are not locally constant.

\begin{definition}[Stable and Unstable Collapse]
\label{def:main-stable-collapse}
Let $f_1, \dots, f_n : \mathcal{K} \to \mathcal{Y}$ be branch maps. A point $\vx \in \mathcal{K}$ exhibits a \textit{stable collapse pattern} if there exists a neighborhood around $\vx$ wherein the equality relationship between any pair of branches $f_i$ and $f_j$ remains constant. We denote the set of all such points as the stable set $\mathcal{S}$. Its complement, $\mathcal{U} = \mathcal{K} \setminus \mathcal{S}$, is defined as the \textit{unstable collapse set}.
\end{definition}


In practice, the unstable collapse set $\cU$ typically forms a zero-measure set. For example, given a 1D two-branch case with $f_1(x)=0$ and $f_2(x)=\mathrm{ReLU}(x)$, $\cU$ contains only a single point: $\cU=\{0\}$. Only in rare, artificial scenarios does $\cU$ form a substantial subset. For instance, if $f_1(x)=0$ and $f_2(x)=\mathrm{dist}(x,\cC)$ where $\cC$ is a fat Cantor set, then $\cU = \cC$, which has a strictly positive measure.

With these definitions, we are ready to state our main finding. In contrast to the naive construction \eqref{eq:naive}, proving the existence of an operator that is also regular with respect to its input requires a highly non-trivial construction. The theorem is provided below, with the detailed proof in Appendix \ref{app:express}.

\begin{theorem}
\label{thm:expressivity}
Let $\cK\subset \mathbb{R}^d$ be bounded and $\cY=\mathbb{R}^m$. Assume the branch maps $f_1,\dots,f_n:\cK\to \cY$ are locally Lipschitz. Then there exists an operator $g:\cY\times\cK\to\cY$ satisfying:
\begin{enumerate}[leftmargin=15pt]
\item {(Input Regularity).} For every $\vy\in \cY$, the map $\vx\mapsto g(\vy,\vx)$ is globally Lipschitz on $\cK$.
\item {(Reliable Convergence).} For every stable input $\vx\in \cK\setminus\cU$, the iteration $\vy_{t+1}=g(\vy_t,\vx)$ converges to a valid branch in $\cF(\vx)$ for Lebesgue-almost every initialization $\vy_0\in \cY$. (The exceptional initializations form a zero-measure set $\Sigma^\circ(\vx)$).
\item {(Attractor Completeness).} For every $\vx\in \cK\setminus\cU$, varying the initialization completely recovers the target solution set. Formally, $\left\{ \lim_{t\to\infty} g^t(\vy_0,\vx) : \vy_0 \in \cY \setminus \Sigma^\circ(\vx) \right\} = \cF(\vx).$
\end{enumerate}
\end{theorem}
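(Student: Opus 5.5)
The plan is to build $g$ as a damped, normalized descent step on a potential $V(\vy,\vx)$ whose zero set in $\vy$ is exactly $\cF(\vx)$ and which has no spurious local minima. Input regularity then comes from the algebraic, scale-free dependence of $V$ on the branch values $f_i(\vx)$. The point is to avoid any switching rule or $\vx$-dependent length scale, since these are what made the naive construction \eqref{eq:naive} singular.

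\emph{Step 1: the potential.} Set $V(\vy,\vx)=\prod_{i=1}^n|\vy-f_i(\vx)|^2$. This is a polynomial in $\vy$ and in the $f_i(\vx)$, and coincident branches simply become repeated roots, so no separation scale is needed. The update is
\[
g(\vy,\vx)=\vy-\eta\,\frac{\nabla_\vy V(\vy,\vx)}{1+|\nabla_\vy V(\vy,\vx)|}.
\]
The normalization keeps the displacement below $\eta$ and makes the step well behaved for large $|\vy|$.

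\emph{Step 2: input regularity.} For fixed $\vy$, the map $\vx\mapsto g(\vy,\vx)$ is a smooth function of $(f_1(\vx),\dots,f_n(\vx))$. I would first reduce to $f_i$ bounded and Lipschitz on $\cK$; this is immediate if the local Lipschitz hypothesis is read on $\overline{\cK}$, which is compact since $\cK$ is bounded. Because the normalized step has derivative bounded uniformly in its argument, the Lipschitz constant is controlled by $\sup|\partial_{\vz}\nabla_\vy V|$ over the relevant range. This supremum grows polynomially in $|\vy|$, but for each fixed $\vy$ it is finite, which is all that item (1) requires.

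\emph{Step 3: convergence.} Fix $\vx\in\cK\setminus\cU$. On a neighborhood of $\vx$ the coincidence pattern is frozen, so the distinct roots are $a_1,\dots,a_k$ with fixed multiplicities, and $V=\prod_j|\vy-a_j|^{2m_j}$. I would show three things.
\begin{itemize}
\item Sublevel sets of $V$ are bounded and $V$ decreases strictly along iterates away from critical points, for $\eta$ small. Hence the $\omega$-limit sets consist of critical points.
\item The only local minima of $V$ are the roots $a_j$. Each root is attracting, and every $\vy_0$ in the small ball where $V$ is dominated by $|\vy-a_j|^{2m_j}$ converges to $a_j$. This gives item (3), completeness, since every branch value has a basin of positive measure.
\item The remaining critical points are avoided by almost every initialization. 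Here I would use the center–stable manifold theorem for $C^1$ local diffeomorphisms, in the form used for saddle avoidance in gradient descent, together with the fact that $g(\cdot,\vx)$ is a local diffeomorphism for small $\eta$. This yields that $\Sigma^\circ(\vx)$, the set of initializations attracted to non-minimizing critical points, is a countable union of null sets.
\end{itemize}
Convergence of the whole trajectory, rather than only of subsequences, would come from the Łojasiewicz inequality, since $V$ is real-analytic in $\vy$.

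\emph{Main obstacle.} The hardest step is excluding spurious local minima and degenerate non-strict saddles in dimension $m\ge 3$.
\begin{itemize}
\item For $m=1$ the absence of interior minima is elementary.
\item For $m=2$, $\log V=\sum_j 2m_j\log|\vy-a_j|$ is harmonic off the roots, so the minimum principle rules out spurious local minima.
\item For $m\ge 3$, $\log|\vy-a|$ is only subharmonic, so spurious minima cannot be excluded this way.
\end{itemize}
If the product potential fails in $m\ge 3$, I would first try replacing it by a regularized Newtonian potential. The function $-\sum_j|\vy-a_j|^{2-m}$ is harmonic and hence has only saddle-type critical points. I would then cap its singularity by a smooth profile in a ball of radius proportional to $|\vy-a_j|$ itself, which stays scale-free; this preserves both harmonicity away from the roots and the Lipschitz dependence on $\vx$.

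Non-strict saddles can still occur in the harmonic setting. For these I would try to show that the set of initializations attracted to them is null using analyticity: the stable sets of analytic maps at such points are analytic subvarieties of positive codimension.
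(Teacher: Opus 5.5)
There are two genuine gaps, and each one breaks the construction on its own.

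\textbf{Input regularity, and convergence, under the actual hypothesis.} In Step 2 you read the hypothesis as ``Lipschitz on $\overline{\cK}$''. The theorem only assumes local Lipschitzness on a bounded, possibly non-compact $\cK$. It is deliberately meant to cover branches such as $\sqrt{x}$, $\log x$, $\sin(1/x)$ on $(0,1]$, or $\pm 1/|x|$ (the paper's first toy example). There is no reduction to the bounded, globally Lipschitz case, and with a constant step your $g$ fails in two ways.
\begin{enumerate}
\item For $n=m=d=1$ and $f_1(x)=\sqrt{x}$ on $(0,1]$, you get $g(0,x)=2\eta\sqrt{x}/(1+2\sqrt{x})$. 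This is not Lipschitz as $x\to0^+$, so (1) fails.
\item For $f_{1,2}(x)=\pm1/|x|$ you have $V=(y^2-a^2)^2$ with $a=1/|x|$. Since $V'(a)=0$ and the normalization has derivative $1$ at $0$, you get $\partial_y g(a,x)=1-\eta V''(a)=1-8\eta a^2$. This is $<-1$ whenever $|x|<2\sqrt{\eta}$, so both roots are repelling there and (2) fails.
\end{enumerate}
What is missing is a damping that depends on $(\vy,\vx)$ and vanishes near the singular set at a rate calibrated to the blow-up of local Lipschitz constants. In the paper, $\eta(\vy,\vx)$ is obtained by integrating $\lambda_R(s)=1/(H_1(R,s)+H_2(R,s)+1)$ up to $\omega=\rho/(1+\rho)$, where $\rho=\operatorname{dist}((\vy,\vx),\cB)$. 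The bad set $\cB\supset\cY\times\cD$ contains the boundary points where some branch stops being Lipschitz. This keeps $\eta$ times the local Lipschitz constant of the nearest-branch selector $P_\vy$ at most $1$.

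\textbf{The landscape for $m\ge3$.} The product potential itself fails here, not only the harmonicity argument. Take constant branches at the vertices $a_1,\dots,a_{m+1}$ of a regular simplex centred at $0$ with circumradius $r$, so $\cU=\varnothing$. With $\hat u_j=a_j/r$ one has $\sum_j\hat u_j\hat u_j^\top=\tfrac{m+1}{m}I$. Hence $\nabla\log V(0)=0$ and $\nabla^2\log V(0)=\tfrac{2(m+1)}{r^2}\bigl(1-\tfrac{2}{m}\bigr)I\succ0$. So $0\notin\cF(\vx)$ is a strict local minimum of $V$ with an open basin, which violates (2).

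Your fallback is not yet a construction.
\begin{enumerate}
\item A cap ``of radius proportional to $|\vy-a_j|$'' is not well defined.
\item Any genuine cap must stay below the separation between distinct branch values. That is exactly the $\vx$-dependent scale that degenerates near collapse, which you set out to avoid.
\item Without a cap, the normalized step is essentially $\eta(\vy-a_j)/|\vy-a_j|$. It jumps by about $2\eta$ when $f_j(\vx)$ passes through $\vy$, which breaks (1).
\end{enumerate}
The tool you invoke for non-strict saddles is also false in general. Gradient descent on the polynomial $x_1^2-x_2^3$ attracts a whole open set $\{x_2<0\}$ near the origin to the origin, which is not a minimum.

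The paper avoids landscape analysis altogether by stepping toward the nearest branch $P_\vy(\vx)$. For $\vy_0\notin\Sigma^\circ(\vx)$ the iterates stay on the segment $[\vy_0,P_{\vy_0}(\vx)]$, along which the nearest branch never changes. Convergence is therefore linear in every dimension, and the exceptional set is a finite union of bisector hyperplanes. The discontinuity of $P_\vy$ across those hyperplanes, which is what you wanted to avoid, is harmless because the damping vanishes on the closed graph of $\widetilde\Sigma$, which is part of $\cB$.
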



The assumptions of Theorem~\ref{thm:expressivity} are deliberately weak. Each branch is required only to be locally Lipschitz, not globally Lipschitz, and \(\cK\) is assumed bounded but need not be compact or connected. This distinction matters because on bounded non-compact domains, local Lipschitzness does not automatically upgrade to global Lipschitzness. For example, \(x\mapsto \log x\) and \(x\mapsto \sqrt{x}\) on \((0,1]\), \(x\mapsto 1/x\) on \((-1,0)\cup(0,1)\), and \(x\mapsto \tan x\) on \((-\pi/2,\pi/2)\) are locally Lipschitz at every point of their bounded domains, but none is globally Lipschitz because the domain excludes a singular endpoint or interior gap. Thus the theorem covers genuinely non-compact, disconnected, and locally singular branch structures while still guaranteeing a globally input-Lipschitz recurrent operator.


\textbf{Numerical verification.}
As an initial verification of Theorem~\ref{thm:expressivity}, we evaluate two toy multi-solution problems in an idealized setting where all valid labels are known. Crucially, we do not force the model to fit a predefined branch for a given input $x$. Instead, we supply the full set of labels (Algorithm~\ref{alg:toy-supervised}), sample random initializations $y_0$, dynamically match each initialization to its nearest label, and train the weight-tied dynamics to converge accordingly. Details are in Appendix~\ref{app:toy-experiments}.

Figure~\ref{fig:toy-1} provides the numerical evidence. In both toy examples, the iterates of the learned weight-tied dynamics evolve regularly with respect to $(y_0,x)$, and all different initializations are driven toward different valid branches of the target set-valued map. At the same time, the collection of converged states recovers the full branch structure rather than collapsing to a single solution. This is exactly the behavior predicted by Theorem~\ref{thm:expressivity}: a single regular weight-tied dynamics can represent a multi-branch function, and the branch realized at convergence can depend on the initialization while remaining consistent with the target set-valued mapping. Quantitative results are given in Table \ref{tab:alternating-branch}.

\begin{figure}[t]
  \centering
  \includegraphics[width=\linewidth]{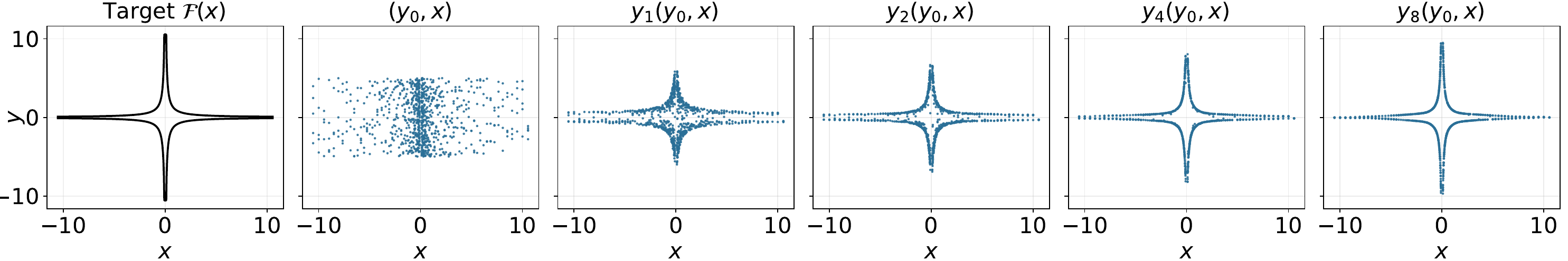}

  
  \includegraphics[width=\linewidth]{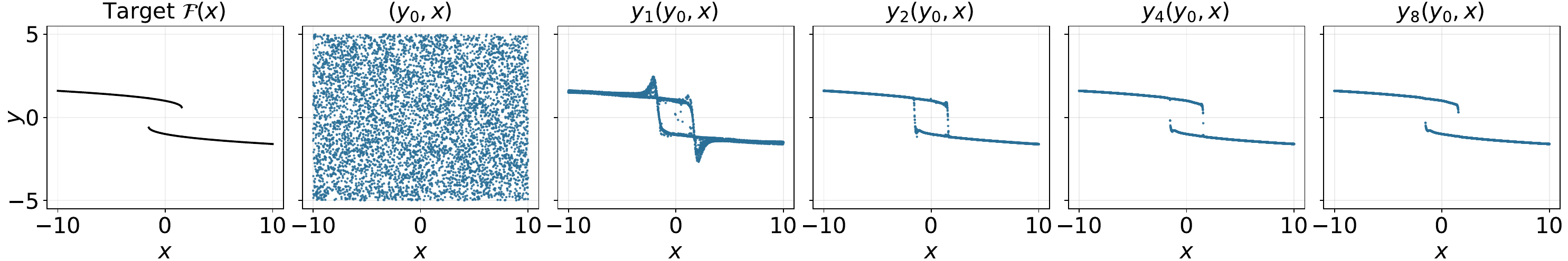}

  \caption{ Numerical verification of Theorem~\ref{thm:expressivity}. Each row shows one toy set-valued map. The first column shows the target branches \(\cF(x)\); the second column shows sampled initial states \(y_0\); and the remaining columns show the iterates \(y_1,y_2,y_4,y_8\) produced by the same weight-tied update map. For each fixed input \(x\), different random initializations move toward different admissible branches. As the iteration proceeds, the point cloud aligns with the full target set.}
  \label{fig:toy-1}
\end{figure}
%

\subsection{When is Multi-Branch Learning Preferable?}
\label{sec:multi-branch-benefit}


Let us now address (Q2): if multiple valid solution branches exist, what is the fundamental theoretical benefit of representing the entire set-valued map rather than isolating a single branch? Returning to the recurrent operator $g$ constructed in Theorem~\ref{thm:expressivity}, we find that multi-branch dynamics provide a natural, built-in regularization. Fixing an initialization $\vy_0$ implicitly defines a solution selector $u(\vy_0, \vx)$ that routes the input to a specific branch. The following theorem guarantees that this dynamically induced selector is exceptionally well-behaved—it is locally Lipschitz almost everywhere.

\begin{theorem}[Lipschitz selector]
\label{thm:main-lip-selector}
Let \(\cK\subset \mathbb{R}^d\) be bounded and measurable, \(\cY=\mathbb{R}^m\), and \(f_1,\ldots,f_n:\cK\to \cY\) be locally Lipschitz branch maps. Let \(\cK^\star := \cK\setminus\cU\) denote the stable domain, and let \(g:\cY\times \cK\to \cY\) be the operator constructed in Theorem \ref{thm:expressivity}. For \(\vy_0\in \cY\), define  $u(\vy_0,\vx) := \lim_{t\to\infty} g^t(\vy_0,\vx)$.
Then, for Lebesgue-almost every initialization \(\vy_0 \in \cY\), it holds that
\[ \Big| \big\{  \vx \in \cK^\star: \vx \mapsto u(\vy_0,\vx) \text{~is not locally Lipschitz at $\vx$} \big\} \Big| = 0 \]

Here, $|\cdot|$ denotes the $d$-dimensional Lebesgue measure. 
\end{theorem}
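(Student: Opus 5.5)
Since the operator $g$ is only asserted to exist by Theorem~\ref{thm:expressivity}, the argument has to rest on the features of its construction in Appendix~\ref{app:express}. The working hypothesis is that $g$ is assembled from a locally finite partition of the stable domain $\cK^\star$ into pieces on which the collapse pattern is constant (locally constant on $\cK^\star$ by Definition~\ref{def:main-stable-collapse}). On each piece the distinct branch values $f_{i_1}(\vx),\dots,f_{i_k}(\vx)$ are locally Lipschitz and pairwise separated locally, and $g(\cdot,\vx)$ is a contraction-type flow toward these points. The flow is arranged so that the exceptional set $\Sigma^\circ(\vx)$ is a union of basin boundaries, each contained in a Lipschitz graph or hypersurface that moves Lipschitz-continuously with $\vx$. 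The proof then has three steps.

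\textbf{Step 1: local Lipschitz stability of the limit off the boundaries.} Fix $\vx_0\in\cK^\star$ and $\vy_0\notin\Sigma^\circ(\vx_0)$, so that $\vy_0$ lies in the open basin of some branch $f_i(\vx_0)$. We show that for $\vx$ near $\vx_0$, $\vy_0$ stays in the basin of the same branch index. This uses openness of the basins together with continuity of $g$ in $\vx$ (the input regularity of Theorem~\ref{thm:expressivity}) and of the basin boundaries in $\vx$. It follows that $u(\vy_0,\vx)=f_i(\vx)$ in a neighbourhood of $\vx_0$, which is locally Lipschitz at $\vx_0$ by assumption. Hence the set of non-Lipschitz points satisfies
\[
N(\vy_0):=\{\vx\in\cK^\star:\ u(\vy_0,\cdot)\text{ not locally Lipschitz at }\vx\}\subseteq B(\vy_0):=\{\vx\in\cK^\star:\ \vy_0\in\Sigma^\circ(\vx)\}.
\]

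\textbf{Step 2: Fubini.} Let $E=\{(\vy,\vx)\in\cY\times\cK^\star:\ \vy\in\Sigma^\circ(\vx)\}$. For each fixed $\vx$ the slice is $\Sigma^\circ(\vx)$, which has $m$-dimensional measure zero by Theorem~\ref{thm:expressivity}(2). If $E$ is measurable, Tonelli gives $|E|_{m+d}=0$, and swapping the order of integration gives $|B(\vy_0)|_d=0$ for almost every $\vy_0$. Since $N(\vy_0)\subseteq B(\vy_0)$, the theorem follows after completing the measure if needed, as Lebesgue measure is complete.

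\textbf{Main obstacle: Step 1 and the measurability in Step 2.} Both depend on structural facts about $\Sigma^\circ(\vx)$ from the construction, not just its null measure. I would first try to show that $E$ is closed relative to $\cY\times\cK^\star$, or at least Borel. The natural description is that $E$ is the set where the limit map fails to be locally constant in branch index, or the locus where two basin "scores" (for example, distances to distinct branch values) tie. Such a tie set is the zero set of a continuous function of $(\vy,\vx)$ and is therefore closed. Its complement is open, which gives the openness needed in Step 1 jointly in $(\vy,\vx)$ rather than only in $\vy$.

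If the construction is not of this tie-set form, the fallback is to use joint continuity of $g$ and of each iterate $g^t$. One then writes the set of $(\vy,\vx)$ converging to branch $i$ with a uniform margin as a countable union of closed sets. This gives Borel measurability of $E$, and a local uniform-contraction estimate near each attractor then recovers Step 1.
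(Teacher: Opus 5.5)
Your plan is the paper's proof. You show that the non-Lipschitz set is contained in $\cN(\vy_0)=\{\vx\in\cK^\star:\vy_0\in\Sigma^\circ(\vx)\}$ because the selected branch persists near $\vx_0$, and you apply Fubini/Tonelli to the incidence set $\{(\vx,\vy)\in\cK^\star\times\cY:\vy\in\Sigma^\circ(\vx)\}$, whose $\vx$-slices lie in finitely many bisector hyperplanes and which is measurable since it is cut out by continuous equalities and inequalities; the paper does the same two steps in the opposite order. The hedge in your Step~1 is settled by your ``distance-tie'' description: for $\vx\in\cK^\star$ and $\vy_0\notin\Sigma^\circ(\vx)$ the iterates stay on $[\vy_0,P_{\vy_0}(\vx)]$ and converge to the nearest branch value $P_{\vy_0}(\vx)$ (Theorem~\ref{thm:convergence-fix-x}), so basins are open Voronoi cells and a strict-margin argument over the stable-collapse cluster representatives gives $u(\vy_0,\cdot)=f_{i_*}$ near $\vx_0$, which makes the $\vx$-continuity of $g$, the partition picture, and the joint-continuity fallback unnecessary (the paper proves only $\vx$-regularity of $g$ for fixed $\vy$, not joint continuity).
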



Theorem~\ref{thm:main-lip-selector} says that, for almost every initialization, the recurrent dynamics does not choose branches in an arbitrary way. It may switch between branches, but the set of inputs where the induced selector fails to be locally Lipschitz has measure zero. This provides a structural contrast with the usual supervised reduction of a multi-solution problem. In that reduction, one first constructs a manual selector \(u:\cK^\star\to\cY\) by choosing \(u(\vx)\in\cF(\vx)\) for every input \(\vx\). Such a choice may come from a solver initialization, an ordering convention, a symmetry-breaking rule, or simply from whichever solution is observed. Unless additional regularity is imposed, there is no reason for this selector to respect the geometry of the branches: nearby inputs can be assigned to far-apart solutions even when all underlying branches are regular. The following theorem formalizes this failure mode.

\begin{theorem}
\label{thm:arbitrary-irregular-selector}
Let $\cK^\star\subset \mathbb{R}^d$ be bounded and measurable, and let $\cY=\mathbb{R}^m$. Let $f_1,\ldots,f_n:\cK^\star\to\cY$ be locally Lipschitz branch maps.
Define the genuinely multi-branch region
\[
    \cM := \Big\{\vx\in\cK^\star:\exists i\neq j
    \text{ such that } f_i(\vx)\neq f_j(\vx)\Big\}.
\]

Then, for every $a\in[0,|\cM|]$, there exists a measurable solution selector $u_a:\cK^\star\to\cY$ such that
\[
   u_a(\vx)\in\cF(\vx), ~~\text{for all }\vx \in \cK^\star, \qquad \text{ and } \big| \operatorname{Disc}_{\cK^\star}(u_a) \big| = a.
\]

Here $\operatorname{Disc}_{\cK^\star}(u_a)$ denotes the set of discontinuity points of $u_a$ relative to the domain $\cK^\star$.
\end{theorem}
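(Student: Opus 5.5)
The plan is to perturb the continuous selector $f_1$ on a countable set, chosen so that the perturbation is dense in a prescribed set $C\subset\cM$ with $|C|=a$; the discontinuity set of the resulting selector will then be exactly $C$.

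\textbf{Preliminaries.} Each $f_k$ is locally Lipschitz, hence continuous, so $\{\vx\in\cK^\star: f_i(\vx)\neq f_j(\vx)\}$ is relatively open in $\cK^\star$. Therefore $\cM$ is relatively open in $\cK^\star$, and in particular measurable. For $\vx\in\cM$, some branch differs from $f_1(\vx)$, since otherwise all branches would coincide at $\vx$. Let $j(\vx)$ be an index maximizing $\|f_j(\vx)-f_1(\vx)\|$, with ties broken by the smallest index. For a set $S\subset\cM$ define
\[
u_S(\vx)=f_{j(\vx)}(\vx)\ \text{if } \vx\in S,\qquad u_S(\vx)=f_1(\vx)\ \text{otherwise}.
\]
Then $u_S(\vx)\in\cF(\vx)$ for every $\vx\in\cK^\star$. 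If $S$ is countable, $u_S$ differs from the measurable map $f_1$ only on a null set, so it is measurable. The case $a=0$ is covered by $u_0=f_1$, which is continuous.

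\textbf{Key lemma.} Let $C\subset\cM$ and let $S\subset C$ be countable and dense in $C$. Assume every point of $\overline{C}^{\cK^\star}\setminus C$ lies outside $\cM$, which holds automatically if $C$ is relatively closed. Then $\operatorname{Disc}_{\cK^\star}(u_S)=C$. The proof has three parts.
\begin{enumerate}
\item \emph{Points of $C$.} Fix $\vx\in C$ and choose $k$ with $\delta:=\|f_k(\vx)-f_1(\vx)\|>0$. By continuity and the maximality defining $j$, every $\vs\in S$ close to $\vx$ satisfies $\|u_S(\vs)-f_1(\vs)\|\ge\|f_k(\vs)-f_1(\vs)\|>\delta/2$. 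Now compare two kinds of nearby points. Points of $S$ accumulate at $\vx$, because $S$ is dense in $C$. Points where $u_S=f_1$ also accumulate at $\vx$, because $S$ is countable while $\cM$ is relatively open and has positive measure near $\vx$. Hence $u_S$ has two separated cluster values at $\vx$ and is discontinuous there.
\item \emph{Points off $\overline{C}$.} If $\vx\notin\overline{C}$, then $u_S=f_1$ on a neighbourhood of $\vx$, so $u_S$ is continuous at $\vx$.
\item \emph{Points of $\overline{C}\setminus C$.} At such $\vx$ all branches coincide, so $\|u_S(\vs)-f_1(\vs)\|\le\max_k\|f_k(\vs)-f_1(\vs)\|\to0$ as $\vs\to\vx$. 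Since also $u_S(\vx)=f_1(\vx)$, the map $u_S$ is continuous at $\vx$.
\end{enumerate}

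\textbf{Choice of $C$.}
\begin{enumerate}
\item \emph{The case $a=|\cM|$.} Take $C=\cM$ and $S$ a countable dense subset. Boundary points of $\cM$ relative to $\cK^\star$ lie outside $\cM$ because $\cM$ is relatively open, so the key lemma gives $\operatorname{Disc}=\cM$.
\item \emph{The case $0<a<|\cM|$.} By inner regularity of Lebesgue measure, pick a compact $K\subset\cM$ with $|K|>a$. The function $r\mapsto|K\cap\overline{\sB(0,r)}|$ is continuous, since spheres are null, and increases from $0$ to $|K|$. By the intermediate value theorem some $r$ gives measure exactly $a$. Set $C=K\cap\overline{\sB(0,r)}$, which is compact and hence relatively closed in $\cK^\star$, and apply the key lemma.
\end{enumerate}

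\textbf{Main obstacle.} The delicate step is forcing discontinuity on all of $C$ while keeping continuity everywhere else. Choosing $j$ to maximize the gap to $f_1$ handles the first half: a fixed lexicographic choice of $j$ could let the selected value drift back toward $f_1(\vx)$ near points where the branch structure changes. The relative-closedness requirement handles the second half by confining the switching to a set whose boundary meets $\cM$ only inside $C$.
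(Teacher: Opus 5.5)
Your proof is correct up to one overstatement (noted below), and it takes a different route from the paper. Both arguments use the same core trick: switch branches on a countable dense set versus its complement. They differ in how they build the set of measure $a$.

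The paper covers $\cM$ up to a null set by disjoint Vitali balls $B_\ell$, each with $B_\ell\cap\cK^\star$ inside a single set $\{f_{i_\ell}\neq f_{j_\ell}\}$. From these it assembles a relatively \emph{open} $G_a\subset\cM$ with $|G_a|=a$, using finitely many pieces plus a half-space slice. It then alternates between the two fixed branches of each piece. This requires the Lebesgue density theorem to get discontinuity a.e.\ in $G_a$, and a null-boundary estimate so that $\partial_{\cK^\star}G_a$ contributes nothing.

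You instead take a \emph{compact} $C\subset\cM$ with $|C|=a$, via inner regularity and continuity of $r\mapsto|K\cap\overline{\sB(0,r)}|$, and perturb $f_1$ only on a countable dense $S\subset C$. Because $C$ is closed and $u_S=f_1$ off $C$, the inclusion $\operatorname{Disc}_{\cK^\star}(u_S)\subset C$ is immediate. The Vitali covering, the slicing of a ball, and all boundary bookkeeping disappear. Density of $S$ then gives discontinuity at every point of $C\setminus S$ without density points.

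What you give up is a fixed pair of distinct branches on each piece. Your farthest-branch rule $j(\vx)$ is the right substitute, and your remark that a lexicographic choice could drift back toward $f_1$ is accurate. As a side benefit, your selector visibly coincides with the continuous branch $f_1$ outside a countable set.

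The overstatement is in part 1 of your key lemma, for $\vx\in C\cap S$. There you assert that points where $u_S=f_1$ accumulate at $\vx$ because $\cM$ has positive measure near $\vx$. Since $\cK^\star$ is only assumed measurable, this can fail, e.g.\ at an isolated point of $\cK^\star$ lying in $\cM$. Such a point must belong to every $S$ dense in $C$, and every function is continuous there. So the equality $\operatorname{Disc}_{\cK^\star}(u_S)=C$ is false in general.

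The fix costs nothing. For $\vx\in C\setminus S$ you have $u_S(\vx)=f_1(\vx)$, and the points of $S$ near $\vx$ alone produce the jump. What your argument actually proves is $C\setminus S\subset\operatorname{Disc}_{\cK^\star}(u_S)\subset C$. Since $S$ is countable, this still gives $|\operatorname{Disc}_{\cK^\star}(u_S)|=|C|=a$.
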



Theorem~\ref{thm:arbitrary-irregular-selector} describes a critical danger in standard pipelines: arbitrarily selected branches inject artificial noise into the mapping. If a neural network is forced to fit these discontinuous manual labels, it must learn unphysical, steep decision boundaries that severely harm generalization.

\textit{Importantly, this does not mean multi-branch learning should unconditionally replace conventional supervised learning.} If a smooth selector is known, or if the task explicitly requires convergence to a particular branch, then single-branch supervision is a natural objective. However, when no such selector is available, forcing the model to imitate one arbitrary branch can introduce avoidable jumps.


\textbf{Numerical verification.}
We numerically compare the selector induced by the learned weight-tied dynamics with manually prescribed single-branch selectors on the same two toy examples. We construct a family of artificial single-branch datasets by partitioning the input domain into $N_{\rm int}$ intervals and forcing one fixed branch to be chosen on each interval, with neighboring intervals alternating between the two branches. Thus, as $N_{\rm int}$ increases, the manually selected target becomes increasingly oscillatory and introduces more unnecessary, artificial switching. 

\begin{figure}[t]
  \centering
  \includegraphics[width=\linewidth]{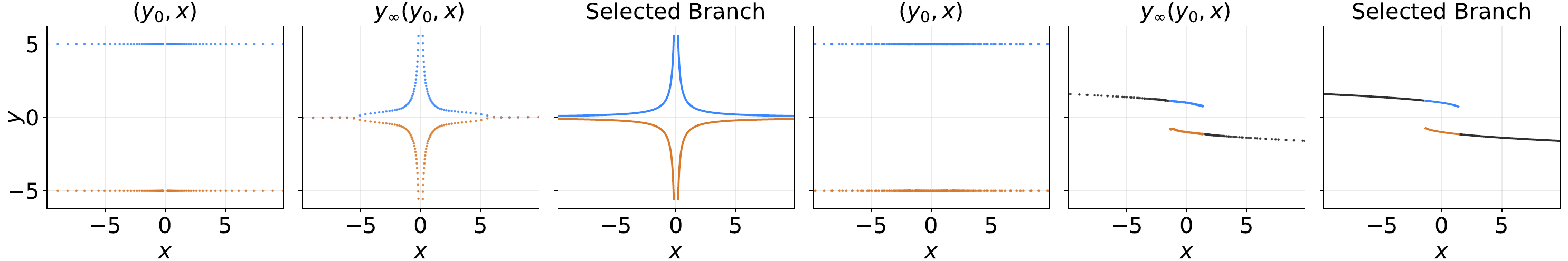}


  \includegraphics[width=\linewidth]{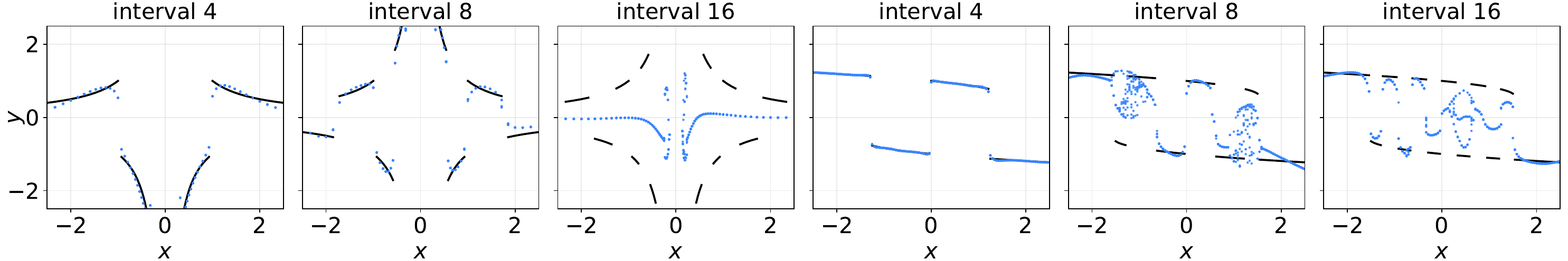}

  \caption{
  Comparison of dynamical (top row) and manual (bottom row) solution selectors for Example 1 (left) and Example 2 (right). The dynamic model naturally maps a fixed initialization $y_0$ to a regular converged state $y_\infty(y_0,x)$. Conversely, manual selection introduces arbitrary switching and artificial discontinuities as the interval increases, hindering training and generalization.
  }
  \label{fig:toy-2}
\end{figure}
\begin{table}[t]
  \begin{center}
    \caption{
    Test error for the bifurcation-model selector and manually selected branch targets.
    The manual selectors alternate between branches over \(N_{\mathrm{int}}\) intervals. $N_{\mathrm{int}} \in \{2,4,8,16,32,64\}$.
    }
    \label{tab:alternating-branch}
    \begin{tabular}{lccccccc}
    \toprule
    & \multirow{2}{*}{Bifurcation}
    & \multicolumn{6}{c}{Manual selector: \(N_{\mathrm{int}}\)} \\
    \cmidrule(lr){3-8}
    Problem & & 2 & 4 & 8 & 16 & 32 & 64 \\
    \midrule
    Toy example 1   & 0.0028 & 0.0181 & 0.1847 & 0.8982 & 5.7052 & 9.6052 & 10.5995 \\
    Toy example 2 & 0.0002 & 0.0159 & 0.0321 & 0.2401 & 0.2698 & 0.5019 & 0.5767 \\
    \bottomrule
    \end{tabular}
  \end{center}
\end{table}

Figure~\ref{fig:toy-2} shows the effect: the selector induced by the implicit dynamics remains smooth and natural, while the human-designed selectors become progressively more irregular. Table~\ref{tab:alternating-branch} shows the same phenomenon quantitatively: as the interval count increases, both training and test errors grow substantially. These experiments validate Theorem~\ref{thm:main-lip-selector} by illustrating that the selector induced by the weight-tied dynamics is regular, and they validate Theorem~\ref{thm:arbitrary-irregular-selector} by showing that arbitrary branch assignment can create artificial discontinuities that make the learning problem significantly harder.

\section{Discovering Multiple Solutions Without Branch Labels}
\label{sec:ising}


The toy experiments in Section~\ref{sec:theory} verify the theory in an idealized setting: for each input \(\vx\), all valid branch labels \((\vy_1,\ldots,\vy_n)\) are known, and each sampled initialization \(\vy_0\) is trained to converge to its nearest branch label. In realistic scientific systems, however, enumerating all solution branches is usually impossible or prohibitively expensive. What is often available instead is an unsupervised objective \(E(\vy,\vx)\), such as an energy functional, whose low values indicate physically valid solutions.

We therefore replace the nearest-branch fitting loss in Algorithm~\ref{alg:toy-supervised} with the unsupervised procedure in Algorithm~\ref{alg:unsupervised}. For each input \(\vx\), we sample multiple initial states \(\vy_0\), unroll the same weight-tied dynamics, and minimize the terminal physical objective \(E(\vy_T,\vx)\). No branch labels are provided, 
and the dynamics are free to converge to any physically valid branch.

This raises several questions. \textit{Can a branch-flexible recurrent model discover multiple solutions for the same input without being shown all branches?} \textit{Does unsupervised branch-flexible training outperform supervised fitting to a single solver-selected label?} \textit{And does avoiding a prescribed branch make the learned solution map less brittle?} We study these questions below in a frustrated Ising system, where many low-energy spin configurations can coexist for the same graph.

\begin{figure}
\centering

\begin{minipage}[t]{0.495\textwidth}
\begin{algorithm}[H]
\caption{Multi-branch fitting}
\label{alg:toy-supervised}
\small
\begin{algorithmic}[1]
\Require Full branch data \(\cD = \left\{\vx_i,\{\vy_{i,j}\}_{j=1}^{n_i}\right\}_i\)
    \For{each mini-batch \(\mathcal B\subset\mathcal D\)}
    \State Sample initialized states \(\vy_{i,k}^{(0)}\sim\rho\) for 
    \[
    \left(\vx_i,\{\vy_{i,j}\}_{j=1}^{n_i}\right)\in\mathcal B, 
    \quad k = 1,\cdots,M
    \]

    \State Choose the closest branch:
    \[
    \vy^\star_{i,k} 
    = 
    \arg\min_{\vy\in\{\vy_{i,1},\ldots,\vy_{i,n_i}\}} 
    \|\vy-\vy^{(0)}_{i,k}\|
    \]

    \State Unroll for \(t=0,1,\cdots,T-1\): 
    \[
    \vy^{(t+1)}_{i,k}
    =
    g_\theta\left(\vy^{(t)}_{i,k},\vx_i\right)
    \]

    \State Minimize $\mathcal{L}(\theta) := \mathrm{mean}\left(\|\vy^{(T)}_{i,k}-\vy^\star_{i,k}\|_2^2\right)$
    \EndFor
\end{algorithmic}
\end{algorithm}
\end{minipage}
\hfill
\begin{minipage}[t]{0.495\textwidth}
\begin{algorithm}[H]
\caption{Multi-branch discovery}
\label{alg:unsupervised}
\small
\begin{algorithmic}[1]
\Require Input data \(\cD = \left\{\vx_i\right\}_i\)
    \For{each mini-batch \(\mathcal B\subset\mathcal D\)}
    \State Sample initialized states \(\vy_{i,k}^{(0)}\sim\rho\) for 
    \[
    \vx_i\in\mathcal B, 
    \quad k = 1,\cdots,M
    \]

    \Statex \phantom{Choose the closest branch:}
    \[
    \phantom{
    \vy^\star_{i,k} 
    = 
    \arg\min_{\vy\in\{\vy_{i,1},\ldots,\vy_{i,n_i}\}} 
    \|\vy-\vy^{(0)}_{i,k}\|
    }
    \]

    \State Unroll for \(t=0,1,\cdots,T-1\): 
    \[
    \vy^{(t+1)}_{i,k}
    =
    g_\theta\left(\vy^{(t)}_{i,k},\vx_i\right)
    \]

    \State Minimize $\mathcal{L}(\theta) := \mathrm{mean}\left(E\left(\vy^{(T)}_{i,k}, \vx_i\right)\right)$
    \EndFor
\end{algorithmic}
\end{algorithm}
\end{minipage}
\end{figure}

\textbf{Basic formulation.} The Ising model \cite{ising1925beitrag} is a fundamental mathematical model in statistical mechanics used to study the magnetic properties of materials. Each Ising instance is a weighted graph $\cG=(\cV,\cE)$ with coupling weights $J_{ij}$ on the edges. A spin assignment is a vector $\vs\in\{-1,+1\}^{|\cV|}$, representing the binary magnetic dipole (up or down) of each node. Finding the system's stable physical ground state at zero temperature corresponds to solving the minimal energy problem:
\begin{equation}
\label{eq:physical-energy}
\min_{\vs\in\{-1,+1\}^{|\cV|}} E_{\rm Ising}(\vs, \cG) = -\frac{1}{2}\sum_{(i,j)\in \cE} J_{ij} s_i s_j.
\end{equation}

When all couplings are positive ($J_{ij}>0$), the ferromagnetic ground states are trivial: all spins align, giving the uniform all-$(+1)$ or all-$(-1)$ states. We therefore focus on the antiferromagnetic case ($J_{ij}<0$), where \textit{geometric frustration} \cite{moessner2006geometrical} makes \eqref{alg:toy-supervised} a degenerate landscape with many valid low-energy configurations, i.e., the map from a graph to its low-energy states is set-valued.

\textbf{Model.}
To parameterize the recurrent update $g$, we use a standard message-passing graph neural network (GNN) \cite{gilmer2017neural} on
the weighted graph $\cG=(\cV,\cE)$. At iteration $t$, the current continuous spin vector $\vs_t$ is
treated as the node feature, and the GNN updates it using the graph structure and edge weights
$J_{ij}$: $\vs_{t+1}=g_{\rm GNN}(\vs_t,\cG)$.
Starting from different initializations $\vs_0$, the same weight-tied update is unrolled for $T$ steps,
and the final prediction is projected to the spin range by $\hat{\vs}=\vs_T$.
The goal of training is that, as $T$ grows, the iterates converge to low-energy spin configurations of
the same graph; different initializations may therefore lead to different low-energy states. 

As a non-recurrent counterpart, we also consider a vanilla GNN baseline. This model applies message passing to the weighted graph as a feed-forward predictor and directly outputs a spin $\hat{\vs}=h_{\rm GNN}(\cG)$.

\textbf{Loss and evaluation.}During neural training, the model outputs continuous spins $\hat \vs\in[-1,1]^{|\cV|}$. To penalizes non-binary outputs, we use the following unsupervised training loss:
\begin{equation}
\label{eq:unsuper-loss}
E(\hat \vs;\cG) = \frac{1}{|\cV|}\Big(-\frac{1}{2}\sum_{(i,j)\in \cE} J_{ij} \hat s_i \hat s_j + \sum_{i\in \cV} (\hat s_i^2-1)^2\Big).
\end{equation}

Dataset and other experiment details are in Appendix \ref{app:ising-scaling}.

\textbf{Can unsupervised training discover many solutions?}
Yes. In this Ising setting, the energy-trained weight-tied dynamics shows clear multi-solution behavior. Figure~\ref{fig:ising-solutions} gives a representative visual example, and Table~\ref{tab:ising-accuracy} provides a quantitative summary. The energy-trained recurrent GNN finds many distinct solutions on average: its trajectories bifurcate over the iterates. A fuller histogram of the solution counts is provided in Appendix~\ref{app:ising-scaling}.

\begin{figure}[t]
    \centering
    \includegraphics[width=\linewidth]{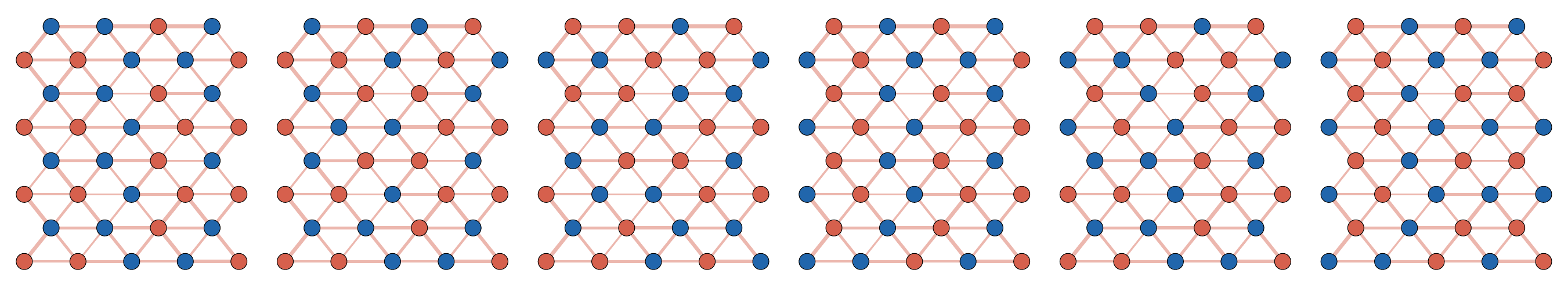}

    \caption{Unsupervised multi-solution discovery. The same Ising graph instance yields distinct solutions when starting from different initializations. Red nodes denote $+1$ spins and blue for $-1$.}
    \label{fig:ising-solutions}
\end{figure}

\textbf{Label fitting or energy minimization?}
We next compare two training objectives: fitting one solver-selected label versus minimizing the physical energy directly.  
For the label-fitting objective, we use Gurobi \cite{gurobi} to solve the binary quadratic Ising problem in~\eqref{eq:physical-energy}, and train the model to imitate one returned spin configuration for each graph.  
For the energy objective, we train directly with \eqref{eq:unsuper-loss}.
Table~\ref{tab:ising-accuracy} shows that energy minimization gives lower rounded-spin energy for both vanilla and recurrent GNNs. 
The vanilla GNN still outputs a single solution, but the energy objective lets it choose a low-energy branch instead of imitating an arbitrary Gurobi-selected one. 
The recurrent GNN has additional freedom: different initializations can follow different trajectories and settle into different low-energy basins. 
Thus, energy-based training not only avoids a potentially brittle single-branch label, but also enables the recurrent model to exploit its multi-solution dynamics.

  \begin{table}[t]
  \centering
    \caption{Quantitative summary of the Ising experiments. ``Train energy'' and ``test energy'' denote the physical Ising energy $E_{\rm Ising}(\bar{\vs})/|\cV|$ evaluated after rounding the predicted spins $\bar{\vs}=\mathrm{sign}(\hat\vs)$, on the training and test sets respectively. The number of solutions is computed on 200 test graphs using 20 trajectories per graph. All values are reported as mean $\pm$ standard deviation.}
    \label{tab:ising-accuracy}
  \begin{tabular}{llccc}
  \toprule
Objective &  Model & Train energy & Test energy & \# Solution (Out of 20) \\
  \midrule
Gurobi label &  Vanilla GNN & -0.478 $\pm$ 0.327 & -0.059 $\pm$ 0.358 & 1.000 $\pm$ 0.000\\
Gurobi label &  Recurrent GNN & -0.513 $\pm$ 0.315 & -0.092 $\pm$ 0.361 & 1.000 $\pm$ 0.000 \\\midrule
Energy-based &  Vanilla GNN & -0.768 $\pm$ 0.181 & -0.750 $\pm$ 0.190 & 1.000 $\pm$ 0.000 \\
Energy-based &  Recurrent GNN & \textbf{-0.979 $\pm$ 0.101} & \textbf{-0.983 $\pm$ 0.101} & 13.635 $\pm$ 5.760 \\
  \bottomrule
  \end{tabular}
  
  \vspace{5mm}

\caption{Empirical perturbation sensitivity at noise level $\varepsilon=0.005$. Values represent percentiles of the normalized output variation, $\|\hat{\vs}(J+\Delta J)-\hat{\vs}(J)\|_2/\|\Delta J\|_2$, across the test set. To illustrate, $p_{40} = 15.184$ indicates that 40\% of the tested perturbations resulted in an output variation of 15.184 or lower. Energy-based models remain highly stable for the vast majority of inputs, whereas models trained on Gurobi labels exhibit sensitivity globally, which supports our theory in Section \ref{sec:multi-branch-benefit}.}
\label{tab:ising-lipschitz}
\begin{tabular}{llcccccccc}
\toprule
Obj. & Model & $p_{40}$ & $p_{50}$ & $p_{60}$ & $p_{70}$ & $p_{80}$ & $p_{90}$ & $p_{95}$ \\\midrule
Gurobi & Vanilla GNN  & 15.184 & 16.730 & 18.411 & 20.693 & 23.767 & 27.601 & 31.606 \\
Gurobi & Recurrent GNN  & 9.919 & 10.762 & 11.600 & 12.591 & 14.020 & 15.612 & 17.523 \\\midrule
Energy & Vanilla GNN   & 0.000 & 0.000 & 0.000 & 0.000 & 0.000 & 0.000 & 25.978  \\
Energy & Recurrent GNN   & 0.000 & 0.000 & 0.000 & 0.000 & 0.000 & 0.025 & 38.168  \\\bottomrule
 \end{tabular}
\end{table}

\textbf{Why does energy minimization yield a more robust solver?}
This gap reflects the regularity viewpoint in Section~\ref{sec:multi-branch-benefit}. 
Under energy minimization, the model is not forced to imitate a solver-selected branch; it can dynamically route each input to a low-energy attraction basin. Theorem~\ref{thm:main-lip-selector} shows that such induced selectors are locally Lipschitz almost everywhere, whereas manual selectors can be arbitrarily irregular (Theorem~\ref{thm:arbitrary-irregular-selector}). Thus, single-label fitting may introduce artificial branch switches.
Table~\ref{tab:ising-lipschitz} supports this contrast empirically. Energy-based  models have near-zero perturbation sensitivity for most inputs, while Gurobi-label models remain sensitive across the distribution. This suggests that energy minimization avoids fitting an unnecessarily irregular selector.

\textbf{Larger graphs.} Appendix~\ref{app:ising-scaling} extends the evaluation to graphs with up to $\sim$15,000 nodes.

\section{Encouraging Bifurcations: Explicit Diversity in Training }
\label{sec:allen-cahn}


The Ising experiment shows a favorable regime where unsupervised weight-tied dynamics naturally uses different initializations to discover multiple low-energy solutions. We now ask a complementary question: \textit{is such diversity automatic once the model has multi-equilibrium capacity?} To answer this question, we use the Allen--Cahn equation as a testbed.

\textbf{Problem formulation.} 
Allen--Cahn equation \cite{allen1979microscopic} is a fundamental reaction-diffusion equation widely used to study complex pattern formation and phase transitions in continuous media.  On the domain $\Omega=(0,2\pi)^2$, with phase field $u:\Omega\to\mathbb R$, forcing $f:\Omega\to\mathbb R$, and $\varepsilon>0$, we consider
\[-\varepsilon^2 \Delta u + u^3 - u - f = 0 \qquad \text{in } \Omega,\]

with periodic boundary conditions, i.e., on the flat torus $\mathbb T^2$. It is the Euler--Lagrange equation of
\[E_{\rm AC}(u;f) = \frac{1}{|\Omega|} \int_{\Omega} \left[ \frac{\varepsilon^2}{2}\|\nabla u(x)\|_2^2 + \frac{1}{4}\bigl(u(x)^2-1\bigr)^2 - f(x)u(x) \right]\,\mathrm{d}x.\]

The energy balances interface smoothing, external forcing, and a double-well potential that drives the system toward distinct macroscopic phases ($u \approx \pm 1$). This competition naturally creates multiple stable steady states for the exact same forcing $f$ \cite{andrade2026bubbles}. This continuous, multi-basin landscape provides an ideal testbed, requiring the network to genuinely capture coexisting attraction basins.

\textbf{Model.} Given a forcing field \(f\), our goal is to learn a solver that maps \(f\) to one or more steady phase fields \(u\). Since both the input and output are periodic functions on \(\Omega\), we parameterize the update map with a Fourier neural operator (FNO)~\cite{li2021fourier}. 
We do not use the FNO as a one-shot predictor \(f \mapsto u\). Instead, following the implicit neural operator viewpoint of~\cite{marwah2023deep}, we use it as a shared recurrent update: $u_{t+1} = g_{\rm FNO}(u_t, f)$.
Starting from a randomly generated initial state $u_0$, we unroll the weight-tied dynamics for $T$ iterations. During training, the model parameters are optimized entirely unsupervised by directly minimizing the terminal physical energy $E_{\rm AC}(u_T; f)$.

\textbf{Evaluation metrics.} To evaluate physical accuracy, we measure the steady-state PDE residual using the discrete mean squared error (MSE) over an $N \times N$ spatial grid:
\[\mathrm{MSE}_{\mathrm{res}}(u;f) = \frac{1}{N^2} \sum_{p=1}^{N}\sum_{q=1}^{N} \left(-\varepsilon^2 \Delta u + u^3 - u - f\right)_{p,q}^2.\]

\textbf{Does multi-equilibrium capacity imply automatic branch discovery?} Not necessarily.
Figure~\ref{fig:eps001-lambda0-vs-p3} shows that the energy-only model maps very different initial states to visually similar final states, indicating collapse to a dominant branch. 
Table~\ref{tab:fixedeps_eps001_implicit_5model} confirms this quantitatively: the energy-only model has only one discovered solution. 
Thus, while the weight-tied dynamics has the capacity to represent multiple attractors, minimizing the energy alone does not force those attractors to be used.

\textbf{How does energy training compare with IMEX-label fitting?}
Although the energy-only model does not automatically bifurcate, it is still much more accurate
than fitting a single IMEX-generated label; see
Table~\ref{tab:fixedeps_eps001_implicit_5model}. For the single-label target, we use the
classical periodic IMEX solver~\cite{chen1998applications} to construct one target branch for
each forcing field. Forcing the network to fit this selected branch gives substantially worse
physical residuals and energy than the energy-only model. By contrast, energy training does not
prescribe which branch the network must imitate; it only asks the dynamics to reach a low-energy
steady state. The learned model can therefore settle on the branch that is easiest to represent and
optimize, rather than spending capacity matching a particular label-selection rule.

\textbf{If we still want multiple solutions, how do we achieve that?}
We add an explicit diversity-promoting term to the energy objective. For a fixed forcing field \(f\),
we sample \(M\) independent initial states \(u_k^{(0)}\sim \rho\), evolve each of them $k=1,\ldots,M$ with the same
weight-tied dynamics, $u_k^{(t+1)} = g_\theta(u_k^{(t)},f)$.
Denote the final states by \(\hat u_k := u_k^{(T)}\). The diversity-regularized objective is
\[
  \mathcal{L}_{\lambda}(\theta;f)
  =
  \frac{1}{M}\sum_{k=1}^{M} E_{\rm AC}(\hat u_k;f)
  -
  \lambda\,
  \mathrm{Diversity}\!\left(\{\hat u_k\}_{k=1}^{M}\right),
\]

where
\[
  \mathrm{Diversity}\!\left(\{\hat u_k\}_{k=1}^{M}\right)
  =
  \frac{1}{M(M-1)}
  \sum_{i\neq j}
  \frac{1}{N^2}\|\hat u_i-\hat u_j\|_2^2 .
\]

Here, the penalty weight $\lambda$ is decayed over the progress of training. For clarity, we refer to each specific decay schedule by its initial penalty weight, $\lambda_{\rm init}$ (full details in Appendix \ref{app:ac-additional}).

Figure~\ref{fig:eps001-lambda0-vs-p3} illustrates the effect. The top row shows five independent initial states for the same forcing field. With energy-only training, \(\lambda_{\rm init}=0\), the middle row shows that these trajectories collapse to visually similar final states. With diversity regularization, \(\lambda_{\rm init}=1.2\), the bottom row shows that the same initial states lead to visibly distinct steady-state candidates. Table~\ref{tab:fixedeps_eps001_implicit_5model} confirms the same trend: the average number of detected solution clusters increases from \(1.00\) at \(\lambda_{\rm init}=0\) to \(16.11\) at \(\lambda_{\rm init}=1.2\).

\begin{figure}[t]
  \centering
  \includegraphics[width=\linewidth]{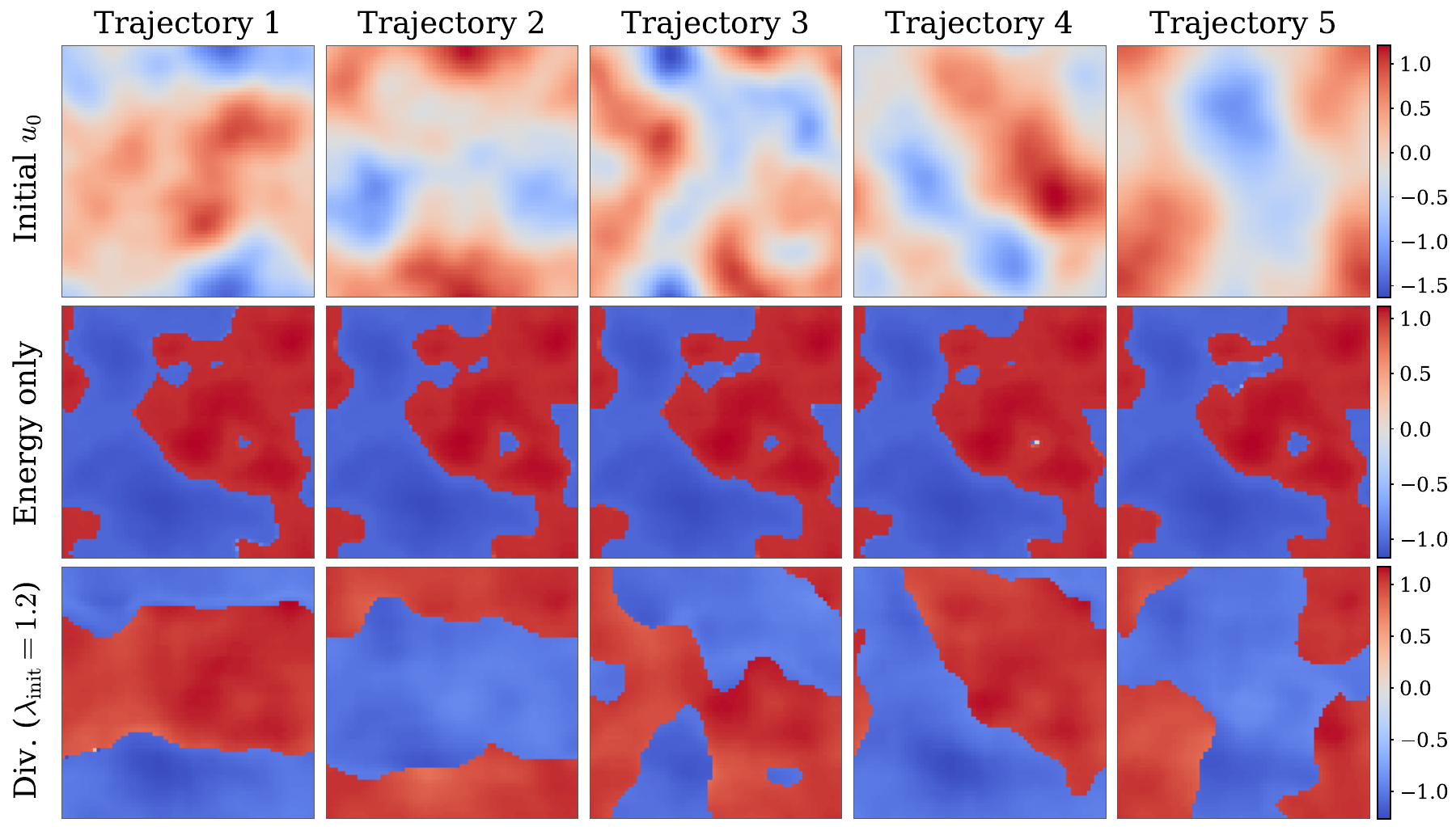}

  \caption{Five trajectories for the same \(f\). \textbf{Top row:} five independent initial states.
\textbf{Middle row:} final predictions from the energy-only model: collapse to visually similar states. 
\textbf{Bottom row:} final predictions from the diversity-regularized model: discover distinct steady-state candidates.}
  \label{fig:eps001-lambda0-vs-p3}
\end{figure}

\textbf{Is there an accuracy--diversity tradeoff?}
Yes. Table~\ref{tab:fixedeps_eps001_implicit_5model} shows a clear tradeoff. The energy-only model gives the best physical accuracy, with the lowest residual error and energy, but it discovers only one solution cluster. Increasing the initial diversity weight \(\lambda_{\rm init}\) produces more distinct outputs: for example, \(\lambda_{\rm init}=1.2\) increases the average number of solution clusters from \(1.00\) to \(16.11\). However, this comes at a cost: the test residual error increases from \(9.19\times 10^{-4}\) to \(5.74\times 10^{-3}\), and the test energy becomes less favorable.
Thus, in this experiment, multi-solution discovery is not free. 
This tradeoff gives a practical way to tune the solver: setting $\lambda=0$ prioritizes physical accuracy, while increasing $\lambda$ encourages broader exploration of the solution landscape. 

\begin{table}[t]
\centering
\caption{Full-dataset Allen--Cahn evaluation. For each forcing instance, we run 20 trajectories for 50
iterations. Train/Test error denote the residual MSE \(\mathrm{MSE}_{\mathrm{res}}(u;f)\), and
Train/Test energy denote the energy \(E_{\rm AC}(u;f)\). Lower error and lower
energy indicate better physical accuracy; residual MSE is nonnegative, while the energy may be
negative. ``\# Sols'' is the average number of distinct output clusters among the final states,
so larger values indicate greater solution diversity.}
\label{tab:fixedeps_eps001_implicit_5model}
\begin{tabular}{lrrrrr}
\toprule
Training obj. & Train error & Test error & Train energy & Test energy & \# Sols \\
\midrule
IMEX label
& $9.58 \times 10^{-2}$ & $9.55 \times 10^{-2}$
& $1.43 \times 10^{-1}$ & $1.43 \times 10^{-1}$ & $1.00$ \\
\midrule
Energy only
& $\mathbf{9.08 \times 10^{-4}}$ & $\mathbf{9.19 \times 10^{-4}}$
& $\mathbf{-1.15 \times 10^{-1}}$ & $\mathbf{-1.15 \times 10^{-1}}$ & $1.00$ \\\midrule
Div. $\lambda_{\rm init}=0.5$
& $1.54 \times 10^{-3}$ & $1.55 \times 10^{-3}$
& $\mathbf{-1.15 \times 10^{-1}}$ & $-1.14 \times 10^{-1}$ & $1.00$ \\
Div. $\lambda_{\rm init}=1.2$
& $5.77 \times 10^{-3}$ & $5.74 \times 10^{-3}$
& $-4.29 \times 10^{-2}$ & $-4.22 \times 10^{-2}$ & $16.11$ \\
Div. $\lambda_{\rm init}=2.0$
& $4.06 \times 10^{-3}$ & $4.07 \times 10^{-3}$
& $-8.11 \times 10^{-3}$ & $-7.95 \times 10^{-3}$ & ${18.84}$ \\
\bottomrule
\end{tabular}
\end{table}

\textbf{Additional experiments.}
Appendix~\ref{app:ac-additional} gives two further studies: learned warm starts improve IMEX solver, and the same accuracy--diversity tradeoff persists beyond the \(64\times64\) setting.

\section{Conclusions}

We studied learning problems where one input admits multiple valid solutions. Bifurcation models represent this multiplicity through weight-tied dynamics, whose different initializations can converge to different stable equilibria. We proved that broad set-valued maps can be represented by regular dynamics and that the induced selectors are almost everywhere regular, unlike arbitrary manual branch labels. Empirically, these dynamics can naturally discover many low-energy Ising solutions without branch labels. In Allen--Cahn, however, energy minimization alone may collapse to a dominant branch; explicitly promoting diversity recovers multiple branches, revealing an accuracy--diversity tradeoff. This makes diversity a tunable modeling objective: the training objective can bias the solver toward either higher physical accuracy or broader exploration of the solution set.

\newpage
\bibliographystyle{amsxport}
\bibliography{references}


\newpage 

\appendix

\section{Proof of Theorem \ref{thm:expressivity}}
\label{app:express}


Throughout this section, let \(\cK\subset\mathbb R^d\) be bounded, let \(\cY=\mathbb R^m\), and let
\[
f_1,\dots,f_n:\cK\to \cY
\]
be locally Lipschitz branch maps. These branches may collapse. Define the finite set of distinct branch values
\[
\cF(x):=\{f_1(\vx),\dots,f_n(\vx)\}\subset \cY.
\]

\subsection{Nearest solution selector}

\begin{definition}[Priority nearest selector]
\label{def:nearest-selector}
Fix a priority order
\[
1\prec 2\prec\cdots\prec n.
\]
For \(\vy\in \cY\) and \(\vx\in \cK\), define
\[
\pi_\vy(\vx) := \min \operatorname*{argmin}_{1\le i\le n}\|\vy-f_i(\vx)\|,
\]
where the minimum is taken according to the fixed priority order. Define
\[
P_\vy(\vx):=f_{\pi_\vy(\vx)}(\vx).
\]
Thus \(P_\vy(\vx)\) is single-valued for every \((\vy,\vx)\in \cY\times \cK\).
\end{definition}

\begin{definition}[Voronoi switching set]
Define the reduced switching set
\[
\Sigma^\circ(\vx) := \Bigl\{ \vy\in \cY: \exists i\ne j,\ f_i(\vx)\ne f_j(\vx),\ \|\vy-f_i(\vx)\|=\|\vy-f_j(\vx)\| =\min_k\|\vy-f_k(\vx)\| \Bigr\}.
\]
Define the graph-closure switching set
\[
\widetilde\Sigma(\vx_0) := \left\{
\vy\in \cY: \exists \vx_\ell\to \vx_0,\ \exists \vy_\ell\to \vy \text{ such that } \vy_\ell\in\Sigma^\circ(\vx_\ell)
\right\}.
\]
Equivalently,
\[
\operatorname{graph}(\widetilde\Sigma)
=
\overline{\operatorname{graph}(\Sigma^\circ)}
\subset \cK\times \cY.
\]
In particular, \(\operatorname{graph}(\widetilde\Sigma)\) is closed.
\end{definition}

\begin{lemma}[Continuity of solution selector]
\label{lemma:lip-p-operator}
Let \(K\subset \mathbb{R}^d\) and let \(Y=\mathbb{R}^m\) with the Euclidean norm. Let $f_1,\dots,f_n:\cK\to \cY$ be locally Lipschitz. For fixed \(\vy\in \cY\), define
\[
\cJ_\vy:=\{\vx\in \cK:(\vx,\vy)\in\operatorname{graph}(\widetilde\Sigma)\}.
\]
Then \(P_\vy\) is locally Lipschitz on $\cK\setminus \cJ_\vy$.
\end{lemma}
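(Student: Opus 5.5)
Fix $\vy\in\cY$ and a point $\vx_0\in\cK\setminus\cJ_\vy$, so that $(\vx_0,\vy)\notin\operatorname{graph}(\widetilde\Sigma)$. The plan is to show that $P_\vy$ coincides with a single branch $f_{i}$ on a relative neighborhood $\sB(\vx_0,\delta)\cap\cK$. We take $i$ to be $\pi_\vy(\vx_0)$ or a branch that agrees with $f_{\pi_\vy(\vx)}$ throughout that neighborhood. Local Lipschitzness of $P_\vy$ at $\vx_0$ is then inherited from $f_i$. Since $\operatorname{graph}(\widetilde\Sigma)$ is closed, we first choose $\delta>0$ such that $\vy\notin\Sigma^\circ(\vx)$ for every $\vx\in\sB(\vx_0,\delta)\cap\cK$. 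Shrinking $\delta$ further, we may also assume every $f_k$ is Lipschitz on this neighborhood, with a common constant $L$.

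The key observation is as follows. For $\vx$ in the neighborhood, $\vy\notin\Sigma^\circ(\vx)$ means that all minimizers of $k\mapsto\|\vy-f_k(\vx)\|$ share the same value $f_k(\vx)$. Hence $P_\vy(\vx)$ equals the unique nearest \emph{point} of $\cF(\vx)$ to $\vy$, independent of tie-breaking. Write $I(\vx)=\operatorname{argmin}_k\|\vy-f_k(\vx)\|$. By continuity of the distances $d_k(\vx)=\|\vy-f_k(\vx)\|$, strict inequalities $d_k(\vx_0)>\min_j d_j(\vx_0)$ persist nearby. So after shrinking $\delta$ we get $I(\vx)\subseteq I(\vx_0)$ for all nearby $\vx$.

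It remains to show that the branches indexed by $I(\vx_0)$ give the same value near $\vx_0$, or at least that the nearest value varies Lipschitzly. I would argue by contradiction. Suppose $i,j\in I(\vx_0)$ and there are points $\vx$ arbitrarily close to $\vx_0$ with $f_i(\vx)\neq f_j(\vx)$. If $f_i(\vx_0)\ne f_j(\vx_0)$, then $\vy\in\Sigma^\circ(\vx_0)$ directly, which is a contradiction. So $f_i(\vx_0)=f_j(\vx_0)$, and we must rule out splitting nearby. Here I would use a connectedness or intermediate-value argument along the perturbation in $\vy$. Consider the points $\vx_\ell\to\vx_0$ at which the $f_k(\vx_\ell)$ for $k\in I(\vx_0)$ are not all equal. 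Among the distinct values $\{f_k(\vx_\ell):k\in I(\vx_0)\}$, pick two, $a_\ell\ne b_\ell$, that realize the minimal distance to some suitable point. Then move $\vy$ slightly, to $\vy_\ell$, onto the bisector of the two nearest values restricted to the $I(\vx_0)$-cluster. Branches outside $I(\vx_0)$ stay strictly farther away, since the distance gap is uniform. Concretely, take $\vy_\ell$ on the segment from $\vy$ toward a point equidistant from the current nearest value and the next-nearest distinct value in the cluster. These values all converge to $f_i(\vx_0)$, so this moves $\vy$ by an amount tending to zero. This yields $\vy_\ell\in\Sigma^\circ(\vx_\ell)$ with $\vy_\ell\to\vy$, hence $\vy\in\widetilde\Sigma(\vx_0)$, a contradiction. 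Constructing $\vy_\ell$ so that it is genuinely a minimal tie (the distance along a path from $\vy$ is continuous, and the identity of the nearest value must change) is the step I expect to be the main obstacle, and I would handle it by a continuity/IVT argument along the segment joining $\vy$ to $a_\ell$'s competitor $b_\ell$.

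Given this, all $f_k$ with $k\in I(\vx_0)$ agree on a neighborhood. Together with $I(\vx)\subseteq I(\vx_0)$, this gives $P_\vy=f_{i}$ there for any fixed $i\in I(\vx_0)$. Therefore $\|P_\vy(\vx)-P_\vy(\vx')\|\le L\|\vx-\vx'\|$ locally, completing the proof.
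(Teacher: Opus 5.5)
Your plan rests on showing that $P_\vy$ coincides with a single branch near $\vx_0$, but that intermediate claim is false under the lemma's hypotheses. Nothing in the statement excludes $\vx_0\in\cU$, and at an unstable collapse point the selector can genuinely switch labels while staying Lipschitz. Take $d=m=1$, $\cK=(-1,1)$, $f_1\equiv 0$, $f_2(x)=x$, and $\vy=1$. Then $\Sigma^\circ(x)=\{x/2\}$ for $x\neq 0$ and $\Sigma^\circ(0)=\varnothing$, so $\widetilde\Sigma(0)=\{0\}$, and in fact $\cJ_1=\varnothing$. Here $I(0)=\{1,2\}$, yet $f_1(x)\neq f_2(x)$ for every $x\neq 0$, and $P_1(x)=\max(x,0)$ agrees with neither branch on any neighborhood of $0$.

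Your contradiction step fails exactly where you expected. The bisector of two cluster values $a_\ell\neq b_\ell$ passes through $\tfrac12(a_\ell+b_\ell)$, which lies near $f_i(\vx_0)$, not near $\vy$. Its distance from $\vy$ is $\bigl|\langle \vy-\tfrac12(a_\ell+b_\ell),\,(b_\ell-a_\ell)/\|b_\ell-a_\ell\|\rangle\bigr|$, which stays of order $\|\vy-f_i(\vx_0)\|$ when $b_\ell-a_\ell$ is aligned with $\vy-f_i(\vx_0)$. So convergence of the cluster values does not let you move $\vy$ onto a bisector by a vanishing amount. The intermediate-value argument along a segment from $\vy$ does produce a tie point, but not one near $\vy$: in the example the tie points are $x_\ell/2\to 0$, while $\vy=1$.

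The missing idea is a quantitative bound that survives label switching. The paper uses the full strength of the hypothesis. Closedness of $\operatorname{graph}(\widetilde\Sigma)$ gives a product neighborhood $\sV\times\sB_\varepsilon(\vy)$ disjoint from $\operatorname{graph}(\Sigma^\circ)$, whereas you extract only the slice condition $\vy\notin\Sigma^\circ(\vx)$. By connectedness of the ball, $\vu=P_\vy(\vx)$ is a nearest value for every $\vq\in\sB_\varepsilon(\vy)$. Testing $\vq=\vy+\tfrac{\varepsilon}{2}(\vw-\vu)/\|\vw-\vu\|$ then gives the uniform margin $\|\vy-\vw\|^2-\|\vy-\vu\|^2\ge\varepsilon\|\vw-\vu\|$ for all $\vx\in\sV$ and $\vw\in\cF(\vx)$. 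Next comes an exchange argument: compare $\vu=f_i(\vx)$ with $f_j(\vx)$, and $\vv=f_j(\vz)$ with $f_i(\vz)$, then add the two margins. This yields $\|P_\vy(\vx)-P_\vy(\vz)\|\le L(1+2B/\varepsilon)\|\vx-\vz\|$, with $B$ a local bound on $\|\vy-f_k\|$, whether or not the selected label changes.

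Your single-branch argument is sound only under the extra assumption $\vx_0\notin\cU$. Stability then keeps the cluster $I(\vx_0)$ collapsed on a neighborhood, and no contradiction step is needed. That is how the paper argues later in Lemma~\ref{lemma:lip-p-operator-joint} and in the proof of Theorem~\ref{thm:app-lip-selector}, but it does not prove the lemma as stated.
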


\begin{proof}
Fix \(\vx_0\in \cK\setminus \cJ_\vy\). Since
\[
(\vx_0,\vy)\notin \operatorname{graph}(\widetilde\Sigma)
\]
and \(\operatorname{graph}(\widetilde\Sigma)\) is closed, there exist a relative neighborhood
\[
\sV\subset \cK
\]
of \(\vx_0\), and a number \(\varepsilon>0\), such that
\[
(\sV\times \sB_\varepsilon(\vy))
\cap \operatorname{graph}(\Sigma^\circ)
=\varnothing.
\]
Equivalently, for every \(\vx\in \sV\) and every \(\vq\in \sB_\varepsilon(\vy)\), the point \(\vq\) is not on the reduced Voronoi switching set of \(\cF(\vx)\).

For fixed \(\vx\in \sV\), the ball \(\sB_\varepsilon(\vy)\) is connected. Since no point in this ball crosses a reduced switching boundary, the same distinct branch value remains the unique nearest value throughout the ball. Therefore, if
\[
\vu=P_\vy(\vx),
\]
then for every \(\vq\in \sB_\varepsilon(\vy)\) and every \(\vw\in \cF(\vx)\),
\[
\|\vq-\vu\|\le \|\vq-\vw\|.
\]
Moreover, if \(\vw\ne \vu\), choose
\[
\vq = \vy+\frac{\varepsilon}{2}\frac{\vw-\vu}{\|\vw-\vu\|}.
\]
and plug it into $\|\vq-\vu\|^2\le \|\vq-\vw\|^2$ gives
\[
\begin{aligned}
\left\|\vy-\vu+\frac{\varepsilon}{2}\frac{\vw-\vu}{\|\vw-\vu\|}  \right\|^2 
\le & \left\|\vy-\vw+\frac{\varepsilon}{2}\frac{\vw-\vu}{\|\vw-\vu\|} \right\|^2 \\
\|\vy-\vu\|^2 + \varepsilon \left\langle  \vy-\vu, \frac{\vw-\vu}{\|\vw-\vu\|}  \right\rangle + \frac{\varepsilon^2}{4} \leq & \|\vy-\vw\|^2 + \varepsilon \left\langle  \vy-\vw, \frac{\vw-\vu}{\|\vw-\vu\|} \right\rangle + \frac{\varepsilon^2}{4} 
\end{aligned}
\]
and hence by possibly shrinking \(\varepsilon\), we have
\[
\|\vy-\vw\|^2-\|\vy-\vu\|^2 \ge \varepsilon\|\vw-\vu\|
\]
for all \(\vx\in \sV\), \(\vu=P_\vy(\vx)\), and \(\vw\in \cF(\vx)\).

Now shrink \(\sV\) if necessary so that all \(f_i\) are \(L\)-Lipschitz on \(\sV\). Also, let
\[
B:=\sup_{\vx\in \sV,\,1\le i\le n}\|\vy-f_i(\vx)\|<\infty.
\]

Take arbitrary \(\vx,\vz\in \sV\). Let
\[
\vu=P_\vy(\vx),\qquad \vv=P_\vy(\vz).
\]
Choose labels \(i,j\) such that
\[
\vu=f_i(\vx),\qquad \vv=f_j(\vz).
\]
Set
\[
\vr:=f_i(\vz),\qquad \vw:=f_j(\vx).
\]
By \(L\)-Lipschitzness,
\[
\|\vu-\vr\|\le L\|\vx-\vz\|,
\qquad
\|\vv-\vw\|\le L\|\vx-\vz\|.
\]

Apply the margin estimate at \(\vx\) with competitor \(\vw\), and at \(\vz\) with competitor \(\vr\):
\[
\begin{aligned}
\|\vy-\vw\|^2-\|\vy-\vu\|^2 \ge& \varepsilon\|\vw-\vu\|, \\
\|\vy-\vr\|^2-\|\vy-\vv\|^2 \ge& \varepsilon\|\vr-\vv\|.
\end{aligned}
\]
Adding,
\[
\varepsilon(\|\vw-\vu\|+\|\vr-\vv\|) \le \bigl(\|\vy-\vw\|^2-\|\vy-\vv\|^2\bigr) + \bigl(\|\vy-\vr\|^2-\|\vy-\vu\|^2\bigr).
\]
The right-hand side is bounded by $4BL\|\vx-\vz\|$ because
\[
\begin{aligned}
\|\vy-\vw\|^2-\|\vy-\vv\|^2 = & \langle (\vy-\vw) + (\vy-\vv) , \vw - \vv \rangle \\
 \leq & \Big( \|\vy-\vw\| + \|\vy-\vv\| \Big) \|\vw - \vv\| \leq 2B \|\vw - \vv\| \leq 2BL\|\vx-\vz\|,
\end{aligned}
\]
and similarly for $\bigl(\|\vy-\vr\|^2-\|\vy-\vu\|^2\bigr)$.

On the other hand,
\[
\|\vw-\vu\|\ge \|\vu-\vv\|-L\|\vx-\vz\|,
\]
and
\[
\|\vr-\vv\|\ge \|\vu-\vv\|-L\|\vx-\vz\|.
\]
Hence
\[
\varepsilon(2\|\vu-\vv\|-2L\|\vx-\vz\|)
\le
4BL\|\vx-\vz\|.
\]
Therefore
\[
\|\vu-\vv\| \le L\left(1+\frac{2B}{\varepsilon}\right)\|\vx-\vz\|.
\]
Since \(\vu=P_\vy(\vx)\) and \(\vv=P_\vy(\vz)\), this proves the local Lipschitz of \(P_\vy\) at $\vx_0$. Thus \(P_\vy\) is locally Lipschitz on \(\cK\setminus \cJ_\vy\), which concludes the proof.
\end{proof}

\subsection{Bad sets}

Here we restate Definition \ref{def:main-stable-collapse} and provide a more detailed version below.

\begin{definition}[Stable and unstable collapse pattern]
\label{def:app-stable-collapse}
Let \(f_1,\dots,f_n:\cK\to \cY\) be branch maps, and fix \(\vx_0\in \cK\). We say that the labeled branches have a stable collapse pattern near \(\vx_0\) if there exists a relative neighborhood \(\sU\subset \cK\) of \(\vx_0\) and a partition
\[
\{1,\dots,n\}=C_1\cup\cdots\cup C_q
\]
such that, for every \(\vx\in \sU\),
\[
f_i(\vx)=f_j(\vx) \quad\Longleftrightarrow\quad i,j\in C_\alpha
\text{ for some }\alpha.
\]
Thus labels in the same cluster remain collapsed throughout \(\sU\), while labels
in different clusters remain distinct throughout \(\sU\). Let
\[
\mathcal S := \{\vx\in \cK:\text{the branches have a stable collapse pattern near }\vx\}.
\]
Define the unstable collapse set
\[
\mathcal U:=\cK\setminus\mathcal S.
\]
\end{definition}

\begin{lemma}[Closedness of the unstable collapse set]
\label{lemma:closed-1}
The set \(\mathcal U\) is relatively closed in \(\cK\).
\end{lemma}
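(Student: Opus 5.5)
We show that the stable set $\mathcal S$ is relatively open in $\cK$; then $\mathcal U=\cK\setminus\mathcal S$ is relatively closed. The point is that the defining condition in Definition~\ref{def:app-stable-collapse} is itself a neighborhood condition, so it propagates automatically to nearby points.

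Fix $\vx_0\in\mathcal S$. By definition there is a relative neighborhood $\sU\subset\cK$ of $\vx_0$ and a partition $\{1,\dots,n\}=C_1\cup\cdots\cup C_q$ such that, for every $\vx\in\sU$,
\[
f_i(\vx)=f_j(\vx)\iff i,j\in C_\alpha \text{ for some }\alpha.
\]
Shrinking $\sU$ if needed, we may take it relatively open, e.g. $\sU=\cK\cap\sB_r(\vx_0)$.

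Now take any $\vx_1\in\sU$. Since $\sU$ is relatively open, it is also a relative neighborhood of $\vx_1$. On this neighborhood the same partition $C_1,\dots,C_q$ describes the equality relations among the branches. Hence the branches have a stable collapse pattern near $\vx_1$, so $\vx_1\in\mathcal S$. This gives $\sU\subset\mathcal S$.

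Therefore every point of $\mathcal S$ has a relatively open neighborhood contained in $\mathcal S$, so $\mathcal S$ is relatively open in $\cK$. Its complement $\mathcal U$ is then relatively closed in $\cK$.

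The argument uses no regularity of the $f_i$: continuity is not needed, only the fact that the definition quantifies over an open set. The only point requiring care is that the neighborhood in the definition must be taken relatively open, or replaced by its relative interior, so that it serves as a neighborhood of each of its own points.
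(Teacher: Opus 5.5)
Your proposal is correct and follows essentially the same argument as the paper: show $\mathcal S$ is relatively open because the partition valid on a neighborhood of $\vx_0$ also serves every point of that neighborhood, then take complements. Your explicit shrinking to the relatively open set $\cK\cap\sB_r(\vx_0)$ fixes a point the paper's one-line proof leaves implicit, namely that the neighborhood must actually be a neighborhood of each of its own points.
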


\begin{proof}
The stable set \(\mathcal S\) is relatively open. Indeed, if a stable collapse partition holds on a relative neighborhood \(\sU\) of \(\vx_0\), then the same partition holds on a smaller relative neighborhood of every point in \(\sU\). Therefore $\mathcal U=\cK\setminus\mathcal S$ is relatively closed.
\end{proof}

\begin{definition}[Switching set.] For each fixed \(\vy\in \cY\), define the switching-in-\(\vx\) set
\[
\mathcal J_\vy := \{\vx\in \cK:\vy\in\widetilde\Sigma(\vx)\}.
\]
\end{definition}

\begin{lemma}[Closedness of switching set]
\label{lemma:closed-2}
For each \(\vy\in \cY\), the set \(\mathcal J_\vy\) is relatively closed in \(\cK\).
\end{lemma}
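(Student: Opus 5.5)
The plan is to deduce relative closedness of \(\mathcal J_\vy\) directly from the closedness of \(\operatorname{graph}(\widetilde\Sigma)\) in \(\cK\times\cY\), which was recorded in the definition of the Voronoi switching set. I would first rewrite the set as a slice of this graph:
\[
\mathcal J_\vy=\{\vx\in\cK:(\vx,\vy)\in\operatorname{graph}(\widetilde\Sigma)\}.
\]
This is exactly the set \(\cJ_\vy\) already used in Lemma~\ref{lemma:lip-p-operator}.

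The argument is then sequential. Take \(\vx_\ell\in\mathcal J_\vy\) with \(\vx_\ell\to\vx_0\in\cK\); I must show \(\vx_0\in\mathcal J_\vy\). The pairs \((\vx_\ell,\vy)\) lie in \(\operatorname{graph}(\widetilde\Sigma)\) and converge to \((\vx_0,\vy)\in\cK\times\cY\). Relative closedness of the graph in \(\cK\times\cY\) then gives \((\vx_0,\vy)\in\operatorname{graph}(\widetilde\Sigma)\), that is, \(\vy\in\widetilde\Sigma(\vx_0)\). Equivalently, \(\mathcal J_\vy\) is the preimage of a relatively closed set under the continuous map \(\vx\mapsto(\vx,\vy)\) from \(\cK\) to \(\cK\times\cY\).

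The only step needing care is whether the graph is really closed, since the paper asserts this without proof; I would verify it by a diagonal argument. Suppose \((\vx_k,\vy_k)\in\operatorname{graph}(\widetilde\Sigma)\) converge to \((\vx_0,\vy_0)\) with \(\vx_0\in\cK\). By definition of \(\widetilde\Sigma\), for each \(k\) there are points \((\vx_{k,\ell},\vy_{k,\ell})\) in \(\operatorname{graph}(\Sigma^\circ)\) converging to \((\vx_k,\vy_k)\) as \(\ell\to\infty\). I would choose \(\ell_k\) so that \((\vx_{k,\ell_k},\vy_{k,\ell_k})\) lies within \(1/k\) of \((\vx_k,\vy_k)\). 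These chosen points lie in \(\operatorname{graph}(\Sigma^\circ)\) and converge to \((\vx_0,\vy_0)\), so \(\vy_0\in\widetilde\Sigma(\vx_0)\).

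With that confirmed, the lemma follows immediately.
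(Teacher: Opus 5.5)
Your proposal is correct and follows the paper's proof, which likewise obtains $\mathcal J_\vy$ as the $\vy$-section of the closed set $\operatorname{graph}(\widetilde\Sigma)\subset\cK\times\cY$. Your diagonal argument also proves that this graph is closed, which the paper only asserts when it defines $\widetilde\Sigma$.
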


\begin{proof}
Since \(\operatorname{graph}(\widetilde\Sigma)\) is closed in \(\cK\times \cY\), its section $\cJ_\vy$ is relatively closed in \(\cK\). 
\end{proof}

\begin{definition}[Extended domain and discontinuity points]
First we extend $\cF$ to $\bar \vx \in \overline{\cK} \backslash \cK$ by the limit relative to $\cK$. For each branch $i=1,2,\cdots,n$, we define
\[
f_i(\bar \vx )=\left\{
\begin{aligned}
    & \lim_{\cK\ni \vx\to\bar \vx}f_i(\vx),  \quad &&  \textup{if $\lim_{\cK\ni \vx\to\bar \vx}f_i(\vx)$ exists}, \\
& 0,  && \textup{otherwise}.
\end{aligned}\right.
\]
In addition, the nearest branch selector $P_\vy(\vx)$ can also be extended to $\overline{\cK}$ accordingly. 
\end{definition}

Note that even if $f_i$ is continuously extendable to $\bar \vx$, it is still possible that $f_i$ is not locally Lipschitz continuous at the point $\bar \vx$. An example is $\sqrt{x}$ when it is extended from $(0,1]$ to $0$. We collect all these points (where $f_i$ is not locally Lipschitz) into the set $\cD$:
\[ \cD:= \Big\{ \vx \in \overline{\cK}: \textup{$f_i$ is not locally Lipschitz continuous at $\vx$ for some $i \in \{1,2,\cdots,n\}$.} \Big\} \]

\begin{lemma}[Closedness of discontinuity points] 
\label{lemma:closed-3}
$\cD$ is closed in $\overline{\cK}$ and $\cD \subset \overline{\cK}\backslash\cK$.
\end{lemma}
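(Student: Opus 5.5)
The claim has two parts: $\cD\subset\overline{\cK}\setminus\cK$, and $\cD$ is closed in $\overline{\cK}$. The plan is to prove the inclusion first, then show the complement $\overline{\cK}\setminus\cD$ is relatively open.

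\textbf{Inclusion.} This is immediate from the hypotheses. Each $f_i$ is locally Lipschitz on $\cK$. So for every $\vx\in\cK$ there is a radius $r>0$ such that $f_i$ is Lipschitz on $\sB_r(\vx)\cap\cK$. Since there are finitely many branches, we may take the minimum radius over $i$.

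To make this consistent with the definition of $\cD$, I will read ``$f_i$ is locally Lipschitz at $\vx\in\overline{\cK}$'' as follows. There is a radius $r>0$ such that the extended $f_i$ is Lipschitz on $\sB_r(\vx)\cap\overline{\cK}$, or equivalently on $\sB_r(\vx)\cap\cK$.

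The two readings agree. If $f_i$ is Lipschitz on $\sB_r(\vx)\cap\cK$, it is uniformly continuous there. Its limits then exist at every point of $\sB_r(\vx)\cap\overline{\cK}$, so the extension coincides with the continuous extension. Lipschitz bounds pass to limits, so the same constant works on $\sB_{r}(\vx)\cap\overline{\cK}$. The default value $0$ in the extension is never used near such points. Hence no point of $\cK$ lies in $\cD$.

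\textbf{Closedness.} Let $\vx_0\in\overline{\cK}\setminus\cD$. For each $i$ choose $r_i>0$ and $L_i$ with $f_i$ being $L_i$-Lipschitz on $\sB_{r_i}(\vx_0)\cap\overline{\cK}$. Set $r=\min_i r_i$.

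Take any $\vx'\in\sB_{r/2}(\vx_0)\cap\overline{\cK}$. Then $\sB_{r/2}(\vx')\subset\sB_r(\vx_0)$, so every $f_i$ is Lipschitz on $\sB_{r/2}(\vx')\cap\overline{\cK}$. Thus $\vx'\notin\cD$. This shows $\sB_{r/2}(\vx_0)\cap\overline{\cK}\subset\overline{\cK}\setminus\cD$, so the complement is relatively open and $\cD$ is closed in $\overline{\cK}$.

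\textbf{Main obstacle.} The only delicate point is fixing the meaning of local Lipschitzness at boundary points, where $f_i$ is defined through the extension. The issue is the $0$-default: it must not create spurious Lipschitz behaviour or spurious failures.

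This is handled by the observation above. Whenever $f_i$ is Lipschitz on a relative ball in $\cK$, the limit exists on the closure of that ball, so the default branch of the definition is never invoked there. At points where the limit fails to exist, $f_i$ cannot be Lipschitz near the point. If it were, it would be uniformly continuous and the limit would exist. So such points correctly belong to $\cD$.

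Once this is checked, both the inclusion and the openness argument are routine.
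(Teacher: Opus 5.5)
Your proof is correct, and it reaches the conclusion by a more self-contained route than the paper's. The paper writes $\cD=\bigcup_{i=1}^n\cD_i$, where $\cD_i$ is the set of points of $\overline{\cK}$ at which $f_i$ fails to be locally Lipschitz. It gets closedness of each $\cD_i$ by citing Lemma A.2 of \cite{liu2026expressive} and finishes with ``finite unions of closed sets are closed''; the inclusion $\cD\subset\overline{\cK}\setminus\cK$ is treated as immediate.

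You instead show directly that the complement is relatively open. You handle all branches at once by taking $r=\min_i r_i$ and using $\sB_{r/2}(\vx')\subset\sB_r(\vx_0)$. This is exactly the content the paper outsources to the citation, and your minimum radius plays the role of its finite union.

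What your version adds is an explicit treatment of the one delicate point the paper leaves implicit: the meaning of ``locally Lipschitz at $\vx\in\overline{\cK}$'' for the extended branch. You show it is equivalent to Lipschitzness on $\sB_r(\vx)\cap\cK$. A Lipschitz bound there forces the limits to exist on $\sB_r(\vx)\cap\overline{\cK}$, so the default value $0$ is never used near such points, and the same constant carries over to the closure. This also justifies the later use of $\vx_0\notin\cD$ in Lemma~\ref{lemma:lip-p-operator-joint}, where Lipschitz bounds are needed on neighborhoods relative to $\overline{\cK}$. The paper's argument is shorter, but the reader has to check that the cited lemma's notion of local Lipschitzness matches this extension-with-default convention.
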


\begin{proof}
The conclusion $\cD \subset \overline{\cK}\backslash\cK$ directly follows from the definition of $\cD$ and the local Lipschitz property of $f_i$. To show its closedness, we define:
\[ \cD_i:= \Big\{ \vx \in \overline{\cK}: \textup{$f_i$ is not locally Lipschitz continuous at $\vx$.} \Big\}\]
and hence $\cD = \cup_{i=1}^n \cD_i$. Thanks to \cite[Lemma A.2]{liu2026expressive}, each $\cD_i$ is closed. The union of finitely many closed sets is closed, then $\cD$ is closed.
\end{proof}

\begin{definition}[Bad set]
\label{def:joint-bad-set}
Equip \(\cY\times\overline{\cK}\) with the product Euclidean norm. Define $\overline{\cU}^{\,\overline{\cK}}$ as the closure of \(\cU\) relative to \(\overline{\cK}\). The joint bad set is
\[
\cB
:=
\cY\times
\Bigl(\cD\cup \overline{\cU}^{\,\overline{\cK}}\Bigr)
\;\cup\;
\Bigl\{
(\vy,\vx)\in \cY\times\overline{\cK}:
\vy\in\widetilde\Sigma(\vx)
\Bigr\}.
\]
\end{definition}

\begin{lemma}[Closedness of the joint bad set]
The set \(\mathcal B\) is closed in \(\mathcal Y\times\overline{\mathcal K}\).
\end{lemma}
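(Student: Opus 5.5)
The plan is to split $\cB$ into its two constituent pieces and show that each is closed in $\cY\times\overline{\cK}$; a finite union of closed sets is then closed. Write
\[
\cB_1:=\cY\times\bigl(\cD\cup\overline{\cU}^{\,\overline{\cK}}\bigr),\qquad
\cB_2:=\bigl\{(\vy,\vx)\in\cY\times\overline{\cK}:\vy\in\widetilde\Sigma(\vx)\bigr\}.
\]

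\emph{Step 1: $\cB_1$ is closed.} By Lemma~\ref{lemma:closed-3}, $\cD$ is closed in $\overline{\cK}$. The set $\overline{\cU}^{\,\overline{\cK}}$ is closed in $\overline{\cK}$ by definition, being a closure relative to $\overline{\cK}$. Hence $\cD\cup\overline{\cU}^{\,\overline{\cK}}$ is closed in $\overline{\cK}$. The product of the whole space $\cY$ with a closed subset of $\overline{\cK}$ is closed in the product topology, which coincides with the topology of the product Euclidean norm.

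\emph{Step 2: $\cB_2$ is closed.} The definition of $\widetilde\Sigma(\vx_0)$ only uses sequences $\vx_\ell\in\cK$ with $\vx_\ell\to\vx_0$. It therefore makes sense verbatim for $\vx_0\in\overline{\cK}$, and I read $\cB_2$ with this interpretation. Then $\cB_2$ is exactly the closure, in $\cY\times\overline{\cK}$, of the set
\[
\cG:=\{(\vy,\vx):\vx\in\cK,\ \vy\in\Sigma^\circ(\vx)\},
\]
which is $\operatorname{graph}(\Sigma^\circ)$ with its coordinates swapped.

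To verify closedness directly, I would use a diagonal argument. Take $(\vy^k,\vx^k)\in\cB_2$ with $(\vy^k,\vx^k)\to(\vy,\vx)\in\cY\times\overline{\cK}$. For each $k$, pick $\ell_k$ such that
\[
\|\vy^k_{\ell_k}-\vy^k\|+\|\vx^k_{\ell_k}-\vx^k\|<1/k,
\]
where $\vx^k_{\ell_k}\in\cK$ and $\vy^k_{\ell_k}\in\Sigma^\circ(\vx^k_{\ell_k})$. The diagonal sequence $(\vy^k_{\ell_k},\vx^k_{\ell_k})$ then lies in $\cG$ and converges to $(\vy,\vx)$. Therefore $\vy\in\widetilde\Sigma(\vx)$, and $\cB_2$ is closed.

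\emph{Step 3: conclude.} We have $\cB=\cB_1\cup\cB_2$, a union of two closed sets, so $\cB$ is closed in $\cY\times\overline{\cK}$.

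The only delicate point is Step 2. One must make sure the graph-closure switching set is used at boundary points $\vx\in\overline{\cK}\setminus\cK$, not just on $\cK$; otherwise the section-closedness of Lemma~\ref{lemma:closed-2} would not suffice. The diagonal argument handles this uniformly, since the approximating points always lie in $\cK$.
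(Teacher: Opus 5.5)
Your proof is correct and follows the paper's argument: you use the same decomposition of $\mathcal B$ into $\mathcal Y\times\bigl(\mathcal D\cup\overline{\mathcal U}^{\,\overline{\mathcal K}}\bigr)$ and the coordinate-swapped graph of $\widetilde\Sigma$, and you show each piece is closed. The paper simply asserts that $\operatorname{graph}(\widetilde\Sigma)$ is closed because it is a graph closure; your diagonal argument verifies this, and your remark that $\widetilde\Sigma$ must be interpreted at boundary points $\vx\in\overline{\mathcal K}\setminus\mathcal K$ fills a point the paper glosses over.
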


\begin{proof}
By Lemma~\ref{lemma:closed-3}, \(\mathcal D\) is closed in
\(\overline{\mathcal K}\). By definition,
\(\overline{\mathcal U}^{\,\overline{\mathcal K}}\) is closed in
\(\overline{\mathcal K}\). Hence
\[
\mathcal Y\times\bigl(\mathcal D\cup
\overline{\mathcal U}^{\,\overline{\mathcal K}}\bigr)
\]
is closed in \(\mathcal Y\times\overline{\mathcal K}\).

Moreover, \(\operatorname{graph}(\widetilde\Sigma)\) is closed in
\(\overline{\mathcal K}\times\mathcal Y\). Therefore its coordinate-swapped
version
\[
\{(\vy,\vx)\in \mathcal Y\times\overline{\mathcal K}:
\vy\in\widetilde\Sigma(\vx)\}
\]
is closed in \(\mathcal Y\times\overline{\mathcal K}\). Thus
\(\mathcal B\), being the union of two closed sets, is closed.
\end{proof}

Based on the bad set, we can provide a joint-space analogue of Lemma~\ref{lemma:lip-p-operator}.
\begin{lemma}[Joint local Lipschitzness of the nearest selector]
\label{lemma:lip-p-operator-joint}
Define
\[
P(\vy,\vx):=P_\vy(\vx).
\]
Then \(P\) is locally Lipschitz on
\[
(\cY\times\overline{\cK})\setminus \cB .
\]
\end{lemma}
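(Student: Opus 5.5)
The plan is to redo the margin argument of Lemma~\ref{lemma:lip-p-operator}, now perturbing $\vy$ and $\vx$ jointly. Fix $(\vy_0,\vx_0)\in(\cY\times\overline{\cK})\setminus\cB$. Since $\cB$ is closed, there are $\varepsilon>0$ and a relative neighborhood $\sV\subset\overline{\cK}$ of $\vx_0$ such that $\sB_{2\varepsilon}(\vy_0)\times\sV$ misses $\cB$. This yields three facts.
\begin{enumerate}
\item $\sV\cap\cD=\varnothing$. After shrinking $\sV$, all (extended) $f_i$ are $L$-Lipschitz on $\sV$. This is where the extension to $\overline{\cK}$ and Lemma~\ref{lemma:closed-3} are used.
\item $\sV$ misses $\overline{\cU}^{\,\overline{\cK}}$. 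Hence the collapse partition is constant on $\sV\cap\cK$, and by continuity, after taking limits, it persists at points of $\sV\cap(\overline{\cK}\setminus\cK)$ too. I would argue this last point carefully, or simply note that only points of $\sV\cap\cK$ matter and extend by density.
\item $\sB_{2\varepsilon}(\vy_0)$ contains no point of $\Sigma^\circ(\vx)$ for any $\vx\in\sV$.
\end{enumerate}

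\emph{Step 1: uniform margin.} For each $\vy\in\sB_\varepsilon(\vy_0)$ and $\vx\in\sV$, apply the computation from the proof of Lemma~\ref{lemma:lip-p-operator}, centred at $\vy$ with radius $\varepsilon$. This gives
\[
\|\vy-\vw\|^2-\|\vy-\vu\|^2\ge\varepsilon\|\vw-\vu\|,\qquad \vu=P(\vy,\vx),\ \vw\in\cF(\vx).
\]
In particular the nearest value is locally constant on $\sB_{2\varepsilon}(\vy_0)$, by connectedness and fact (3). Set $B:=\sup\{\|\vy-f_i(\vx)\|:\vy\in\sB_\varepsilon(\vy_0),\ \vx\in\sV,\ 1\le i\le n\}$. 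This is finite because $\sV$ is bounded and the $f_i$ are Lipschitz on $\sV$.

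\emph{Step 2: joint comparison.} Take $(\vy,\vx),(\vy',\vz)$ in $\sB_\varepsilon(\vy_0)\times\sV$. Write $\vu=P(\vy,\vx)=f_i(\vx)$ and $\vv=P(\vy',\vz)=f_j(\vz)$, and set $\vr=f_i(\vz)$, $\vw=f_j(\vx)$. Apply the margin at $(\vy,\vx)$ against $\vw$ and at $(\vy',\vz)$ against $\vr$, then add the two inequalities. The right-hand side is
\[
\bigl(\|\vy-\vw\|^2-\|\vy'-\vv\|^2\bigr)+\bigl(\|\vy'-\vr\|^2-\|\vy-\vu\|^2\bigr)
\]
plus the cross terms produced by switching $\vy\leftrightarrow\vy'$. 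Each difference of squares is bounded by $2B$ times the difference of the underlying vectors, which is at most $L\|\vx-\vz\|+\|\vy-\vy'\|$. The left-hand side is bounded below by $\varepsilon\bigl(2\|\vu-\vv\|-2L\|\vx-\vz\|\bigr)$, exactly as before. This gives
\[
\|\vu-\vv\|\le C\bigl(\|\vx-\vz\|+\|\vy-\vy'\|\bigr),\qquad C=L+\frac{4B(L+1)}{\varepsilon}.
\]

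\emph{Main obstacle.} The delicate point is the boundary: points $\vx\in\overline{\cK}\setminus\cK$ where the extended $f_i$ and $P$ were defined only by limits (or by $0$). I would handle it by first proving the estimate on $\sB_\varepsilon(\vy_0)\times(\sV\cap\cK)$. Fact (1) guarantees the limits exist there, since Lipschitz maps on $\sV\cap\cK$ extend uniquely and continuously to $\sV$. The estimate then passes to the closure by continuity, once I check that the nearest-branch label also passes to the limit, which follows from the margin being uniform.
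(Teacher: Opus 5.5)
Your proof is correct, but it takes a different route from the paper's. The paper does not reuse the quantitative margin estimate of Lemma~\ref{lemma:lip-p-operator} here. Instead it uses $\vx_0\notin\overline{\cU}^{\,\overline{\cK}}$ to fix a collapse partition near $\vx_0$. It then picks the unique nearest cluster $C_*$ at $(\vy_0,\vx_0)$, with a positive gap $\gamma$, and by continuity keeps $C_*$ strictly nearest on a product neighborhood. So $P(\vy,\vx)=f_{i_*}(\vx)$ there, which gives constant $L$ and shows that $P$ is locally a single branch, independent of $\vy$.

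You instead make the margin computation of Lemma~\ref{lemma:lip-p-operator} uniform over a product neighborhood. You use only $\vx_0\notin\cD$ and the absence of switching points in $\sB_{2\varepsilon}(\vy_0)$. This costs a worse constant, but it needs no cluster bookkeeping. The uniform margin is also exactly what carries both the estimate and the identity of the selected value to points of $\overline{\cK}\setminus\cK$. That is where the cluster argument is delicate: for $f_{1,2}(x)=\pm x$ on $\cK=(0,1)$ and $(y_0,x_0)=(1,0)\notin\cB$, the two clusters merge at $x_0$ and $\gamma=0$, while your estimate goes through unchanged. Incidentally, your Step~1 already gives $P(\vy,\vx)=P(\vy_0,\vx)$ for $\vy\in\sB_{2\varepsilon}(\vy_0)$, so the $\vy$-variation in Step~2 comes for free.

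Drop your fact (2): it is never used, and it is false in general. Take $\cK=(-1,0)\cup(0,1)$, $f_1\equiv 0$, $f_2(x)=\max(x,0)$. Then $\cU=\varnothing$ and $0\notin\cD$, yet every neighborhood of $x_0=0$ meets both $(-1,0)$, where $f_1=f_2$, and $(0,1)$, where $f_1\neq f_2$. The $\pm x$ example shows the partition need not persist at boundary points either.
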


\begin{proof}
Fix \((\vy_0,\vx_0)\notin \cB\). Then
\[
\vx_0\notin \cD\cup \overline{\cU}^{\,\overline{\cK}},
\qquad
\vy_0\notin \widetilde\Sigma(\vx_0).
\]
Therefore the nearest distinct branch value to \(\vy_0\) at \(\vx_0\) is unique.
Let \(C_*\) be the corresponding stable collapse cluster, and choose its
priority representative \(i_*\).

Since \(\vx_0\notin \overline{\cU}^{\,\overline{\cK}}\), after shrinking a
relative neighborhood \(V\subset\overline{\cK}\) of \(\vx_0\), the collapse
pattern of the branches is stable on \(V\cap\cK\). Choose one representative
\(i_\alpha\) for each stable collapse cluster \(C_\alpha\). Since \(C_*\) is
the unique nearest cluster at \((\vy_0,\vx_0)\), there is a positive margin
\[
\gamma
:=
\min_{\alpha\neq *}
\left(
\|\vy_0-f_{i_\alpha}(\vx_0)\|
-
\|\vy_0-f_{i_*}(\vx_0)\|
\right)
>0 .
\]
By continuity of the branch maps and of the functions
\[
(\vy,\vx)\mapsto \|\vy-f_{i_\alpha}(\vx)\|,
\]
after further shrinking \(V\) and taking \(\epsilon>0\) sufficiently small, we
have
\[
\|\vy-f_{i_*}(\vx)\|
<
\|\vy-f_{i_\alpha}(\vx)\|
\qquad
\text{for every }\alpha\neq *,
\]
for all \((\vy,\vx)\in B_\epsilon(\vy_0)\times V\). Hence the same stable
collapse cluster \(C_*\) remains the unique nearest cluster throughout this
product neighborhood. Therefore,
\[
P(\vy,\vx)=f_{i_*}(\vx)
\qquad
\text{for all }(\vy,\vx)\in B_\epsilon(\vy_0)\times V .
\]
Since \(\vx_0\notin \cD\), the branch \(f_i\) is locally Lipschitz near
\(\vx_0\). Therefore, for all \((\vy,\vx),(\vy',\vx')\in B_\epsilon(\vy_0)\times V\),
\[
\|P(\vy,\vx)-P(\vy',\vx')\|
=
\|f_{i_*}(\vx)-f_{i_*}(\vx')\|
\le
L\|\vx-\vx'\|
\le
L\|(\vy,\vx)-(\vy',\vx')\|.
\]
Hence \(P\) is locally Lipschitz near \((\vy_0,\vx_0)\). Since
\((\vy_0,\vx_0)\) was arbitrary, \(P\) is locally Lipschitz on $(\cY\times\overline{\cK})\setminus \cB$.
\end{proof}

\subsection{Construction of dynamics}

\textbf{Distance-to-bad-set safety factor.}
For \((\vy,\vx)\in \cY\times\overline \cK\), set
\[
\rho(\vy,\vx):=\operatorname{dist}((\vy,\vx),\mathcal B),
\qquad
\omega(\vy,\vx)
:=
\frac{\rho(\vy,\vx)}{1+\rho(\vy,\vx)}.
\]
Because \(\cB\) is closed,
\[
\omega(\vy,\vx)=0
\quad\Longleftrightarrow\quad
(\vy,\vx)\in\cB.
\]
and we consider $(\vy,\vx)$ to be a safe point if this factor $\omega$ is large enough. 
Moreover, for every fixed \(\vy\in\mathcal Y\), the map
\(\vx\mapsto \omega(\vy,\vx)\) is \(1\)-Lipschitz on
\(\mathcal K\). Indeed,
\[
\left|\rho(\vy,\vx)-\rho(\vy,\vz)\right|\le
\left\|(\vy,\vx)-(\vy,\vz)\right\|_{\mathcal Y\times\overline{\mathcal K}} =
\|\vx-\vz\|,
\]
and \(r\mapsto r/(1+r)\) is \(1\)-Lipschitz on \([0,\infty)\). Hence
\[
\left|\omega(\vy,\vx)-\omega(\vy,\vz)\right|
\le \|\vx-\vz\|
\qquad
\text{for all } \vx,\vz\in\mathcal K .
\]

\textbf{Safe tubes and safe-set profiles.}
Define the safe tube (i.e., a clear gap to the bad set):
\[
\cS(R,s):= \Big\{(\vy,\vx)\in\cY\times\overline{\cK}:~~
\|\vy\|\le R, ~\omega(\vy,\vx)\ge s.
\Big\}
\]
And then define safe-set profiles:
\[
H_1(R,s) := \sup_{\substack{(\vy,\vx) \in \cS(R,s) \\ (\vy,\vz) \in \cS(R,s) \\ \vx\neq \vz}}\frac{\|P_\vy(\vx)-P_\vy(\vz)\|}{\|\vx-\vz\|}.
\]
and 
\[
H_2(R,s) := \sup_{(\vy,\vx) \in \cS(R,s)}\|P_\vy(\vx)\|.
\]

\begin{lemma}
It holds that 
\[
H_1(R,s)<\infty, \qquad H_2(R,s)<\infty
\]
for every \(R>0\) and \(s>0\).
\end{lemma}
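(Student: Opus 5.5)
The plan is a compactness argument on the safe tube $\cS(R,s)$. Lemma~\ref{lemma:lip-p-operator-joint} gives only local Lipschitz bounds near each good point, and compactness upgrades these to a uniform bound.

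\textbf{Step 1: the tube is compact and stays away from $\cB$.} Since $\omega=\rho/(1+\rho)$ is continuous and strictly increasing in $\rho$, the condition $\omega\ge s$ is equivalent to $\rho\ge r_s:=s/(1-s)$. If $s\ge 1$ the tube is empty and both suprema are trivially finite (with the convention $\sup\varnothing=0$), so assume $s<1$. Then $\cS(R,s)$ is closed in $\cY\times\overline{\cK}$. It is also bounded, because $\|\vy\|\le R$ and $\overline{\cK}$ is compact since $\cK$ is bounded. Hence $\cS(R,s)$ is compact and satisfies $\operatorname{dist}(\cS(R,s),\cB)\ge r_s>0$. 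In particular $\cS(R,s)\cap\cB=\varnothing$. Because $\cY\times\cD\subset\cB$, the $\vx$-projection $X_s$ of $\cS(R,s)$ is a compact subset of $\overline{\cK}$ at distance at least $r_s$ from $\cD$.

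\textbf{Step 2: bound $H_2$.} Every $\vx\in X_s$ lies outside $\cD$, so each extended branch $f_i$ is Lipschitz, hence continuous and bounded, on some relative neighborhood of $\vx$ in $\overline{\cK}$. Cover $X_s$ by finitely many such neighborhoods. This gives $M:=\max_i\sup_{\vx\in X_s}\|f_i(\vx)\|<\infty$. Since $P_\vy(\vx)$ is always one of the values $f_i(\vx)$, it follows that $H_2(R,s)\le M$.

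\textbf{Step 3: bound $H_1$.} By Lemma~\ref{lemma:lip-p-operator-joint}, every point $p\in\cS(R,s)$ has an open product neighborhood $W_p=\sB_{\epsilon_p}(\vy_p)\times V_p$. On $W_p$ the selector satisfies $P=f_{i_*}(\vx)$ with $f_{i_*}$ being $L_p$-Lipschitz, so $\|P(\vy,\vx)-P(\vy',\vx')\|\le L_p\|(\vy,\vx)-(\vy',\vx')\|$. Extract a finite subcover $W_{p_1},\dots,W_{p_N}$ of the compact set $\cS(R,s)$ and set $L:=\max_k L_{p_k}$. Let $\delta>0$ be a Lebesgue number of this cover in the compact metric space $\cS(R,s)$. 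Now take $(\vy,\vx),(\vy,\vz)\in\cS(R,s)$ with $\vx\ne\vz$.
\begin{itemize}
\item If $\|\vx-\vz\|<\delta$, the two-point set $\{(\vy,\vx),(\vy,\vz)\}$ has diameter $\|\vx-\vz\|<\delta$. It therefore lies in a single $W_{p_k}$, and the difference quotient is at most $L$.
\item If $\|\vx-\vz\|\ge\delta$, the difference quotient is at most $2H_2(R,s)/\delta\le 2M/\delta$.
\end{itemize}
Hence $H_1(R,s)\le\max\{L,\,2M/\delta\}<\infty$.

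The main point requiring care is Step 1 together with the use of the extension. Lemma~\ref{lemma:lip-p-operator-joint} is stated on $(\cY\times\overline{\cK})\setminus\cB$, which includes boundary points in $\overline{\cK}\setminus\cK$. Compactness requires working in $\overline{\cK}$ rather than $\cK$. So I must make sure the tube is defined in $\overline{\cK}$, as it is, and that the local Lipschitz neighborhoods from that lemma are open relative to $\overline{\cK}$, so that the Lebesgue-number argument applies. Once this is in place, the remaining steps are routine.
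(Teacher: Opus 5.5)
Your proposal is correct and follows the paper's route: the safe tube $\cS(R,s)$ is compact in $\cY\times\overline{\cK}$ and disjoint from $\cB$, and the joint local Lipschitz lemma is upgraded to a uniform bound by compactness. The difference is only in detail. You spell out the local-to-global Lipschitz step (Lebesgue number for nearby pairs, the $H_2$ bound for far-apart pairs) and bound $H_2$ via the branches on the compact $\vx$-projection. The paper simply asserts that a locally Lipschitz map is globally Lipschitz and bounded on a compact set.
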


\begin{proof}
If \(s\ge 1\), then the safe sets are empty since \(\omega<1\), and the
claim is trivial. Hence assume \(0<s<1\).
Since \(\overline B_R^{\cY}\times\overline{\cK}\) is compact and \(\omega\)
is continuous, \(\cS(R,s)\) is compact. Moreover,
\[
\cS(R,s)\subset (\cY\times\overline{\cK})\setminus \cB .
\]
Thanks to Lemma \ref{lemma:lip-p-operator-joint}, $P(\vy,\vx)$ is joint local Lipschitz on $(\cY\times\overline{\cK})\setminus \cB$ and the restriction of \(P\) to the compact set \(\cS(R,s)\) is globally Lipschitz. Hence there exists
\(L_{R,s}<\infty\) such that
\[
\|P_\vy(\vx)-P_\vy(\vz)\|
=
\|P(\vy,\vx)-P(\vy,\vz)\|
\le
L_{R,s}\|(\vy,\vx)-(\vy,\vz)\|
=
L_{R,s}\|\vx-\vz\|
\]
for all
\((\vy,\vx),(\vy,\vz)\in \cS(R,s)\). Therefore $H_1(R,s)\le L_{R,s}<\infty$.

Since \(P\) is continuous on the compact set \(\cS(R,s)\), it is
bounded there. Thus there exists \(M_{R,s}<\infty\):
\[
\|P_\vy(\vx)\|\le M_{R,s}
\qquad
\text{for all }(\vy,\vx)\in \cS(R,s).
\]
Therefore, $H_2(R,s)\le M_{R,s}<\infty$. This completes the proof.
\end{proof}

\textbf{Damping factor and recurrent operator.}
Define
\[
\lambda_R(s)
:=
\frac{1}{H_1(R,s)+H_2(R,s)+1},
\qquad s>0.
\]
Define
\[
\Theta_R(r):=\int_0^r \lambda_R(s)\,ds,
\qquad r\ge0.
\]
Define
\[
\eta(\vy,\vx)
:=
\frac{\Theta_{1+\|\vy\|}(\omega(\vy,\vx))}
{1+\Theta_{1+\|\vy\|}(\omega(\vy,\vx))}.
\]
Finally define 
\begin{equation}
\label{eq:define-g-operator}
g(\vy,\vx):=(1-\eta(\vy,\vx))\vy+\eta(\vy,\vx)P_\vy(\vx).
\end{equation}

\begin{lemma}[Monotonicity]
\label{lemma:monotonicity}
For fixed \(R>0\), \(H_1(R,s)\) and \(H_2(R,s)\) are nonincreasing in \(s\),
and hence \(\lambda_R(s)\) is nondecreasing in \(s\). Moreover, for fixed
\(s>0\), \(H_1(R,s)\) and \(H_2(R,s)\) are nondecreasing in \(R\), and hence
\(\lambda_R(s)\) is nonincreasing in \(R\).
\end{lemma}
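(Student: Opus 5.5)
The proof reduces entirely to monotonicity of the safe tubes $\cS(R,s)$ under set inclusion, followed by the elementary observation that the map $t\mapsto 1/(t+1)$ is decreasing on $[0,\infty)$. Both $H_1(R,s)$ and $H_2(R,s)$ are suprema over pairs or points drawn from $\cS(R,s)$, so enlarging the tube can only enlarge the supremum. The plan is to establish the two nesting relations and then pass them through the definition of $\lambda_R$.

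\textbf{Step 1 (monotonicity in $s$).} Fix $R$ and take $0<s\le s'$. If $\omega(\vy,\vx)\ge s'$, then $\omega(\vy,\vx)\ge s$, so $\cS(R,s')\subset\cS(R,s)$.
\begin{itemize}
\item The admissible pairs $\big((\vy,\vx),(\vy,\vz)\big)$ in the definition of $H_1(R,s')$ also lie in $\cS(R,s)$, which gives $H_1(R,s')\le H_1(R,s)$.
\item Likewise, every point admissible for $H_2(R,s')$ is admissible for $H_2(R,s)$, which gives $H_2(R,s')\le H_2(R,s)$.
\end{itemize}

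\textbf{Step 2 (monotonicity in $R$).} Fix $s>0$ and take $R\le R'$. The constraint $\|\vy\|\le R$ implies $\|\vy\|\le R'$, so $\cS(R,s)\subset\cS(R',s)$. The same sup-over-a-larger-set argument gives $H_1(R,s)\le H_1(R',s)$ and $H_2(R,s)\le H_2(R',s)$.

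\textbf{Step 3 (transfer to $\lambda_R$).} By the preceding lemma, both profiles are finite, and both are nonnegative. The quantity $H_1+H_2+1$ is therefore a finite number of at least $1$, and $\lambda_R(s)=1/(H_1+H_2+1)$ is a decreasing function of the sum $H_1+H_2$.
\begin{itemize}
\item By Step 1, $H_1+H_2$ is nonincreasing in $s$, so $\lambda_R(s)$ is nondecreasing in $s$.
\item By Step 2, $H_1+H_2$ is nondecreasing in $R$, so $\lambda_R(s)$ is nonincreasing in $R$.
\end{itemize}

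\textbf{Empty tubes.} The only point needing care is the convention for suprema over empty sets. For $s\ge 1$, and possibly for small $R$, the tube $\cS(R,s)$ is empty. Adopting $\sup\varnothing=0$, which is consistent with the "trivial" case in the preceding lemma, keeps the profiles in $[0,\infty)$. Inclusion still yields the correct inequality, since the supremum over a subset is at most that over the superset whether either set is empty or not. With this convention in place, no step presents a real obstacle.
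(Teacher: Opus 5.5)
Your proof is correct and follows the paper's argument: the safe tubes $\cS(R,s)$ shrink as $s$ increases and grow as $R$ increases, so the suprema $H_1,H_2$ inherit the monotonicity, which then passes through $\lambda_R(s)=1/(H_1+H_2+1)$. Your explicit handling of the convention $\sup\varnothing=0$ for empty tubes, which the paper leaves implicit, is a small but worthwhile addition.
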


\begin{proof}
Since the safe tube $\cS(R,s)$ shrinks as \(s\) increases, \(H_1(R,s)\) and \(H_2(R,s)\) are nonincreasing in \(s\), \(\lambda_R(s)\) is nondecreasing in \(s\). Similar argument for \(R\).
\end{proof}

\begin{lemma}[Positivity and freezing]
\label{lemma:freezing}
For \((\vy,\vx)\in \cY\times\overline \cK\),
\[
\eta(\vy,\vx)=0
\iff
(\vy,\vx)\in\cB
\iff
\vx\in \cD\cup\cU^{\overline \cK}
~~\text{ or }~~ \vy\in\widetilde\Sigma(\vx).
\]
In particular, for \(\vx\in \cK\), since
\(\cD\cap \cK=\varnothing\) and
\(\cU^{\overline \cK}\cap \cK=\cU\), we have
\[
\vx\in\cU \implies g(\vy,\vx)=\vy.
\]
\end{lemma}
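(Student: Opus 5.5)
The plan is to trace the chain of equivalences through the definitions of $\eta$, $\Theta_R$, $\lambda_R$ and $\omega$. Write $R:=1+\|\vy\|$ and $r:=\omega(\vy,\vx)\in[0,1)$. Since $t\mapsto t/(1+t)$ vanishes only at $t=0$ on $[0,\infty)$, we have $\eta(\vy,\vx)=0$ if and only if $\Theta_R(r)=0$. By definition $\Theta_R(r)=\int_0^r\lambda_R(s)\,ds$, so the first step is to show that $\Theta_R(r)=0$ exactly when $r=0$.

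If $r=0$, the integral is over an empty interval and vanishes. If $r>0$, we need $\lambda_R(s)>0$ on a subset of $(0,r)$ of positive measure; in fact we will show $\lambda_R(s)>0$ for every $s>0$. By the preceding lemma, $H_1(R,s)$ and $H_2(R,s)$ are finite for every $R,s>0$. Hence $\lambda_R(s)=1/(H_1+H_2+1)\in(0,1]$.

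Two technical points need checking. The first is the empty-tube convention: when $\cS(R,s)=\varnothing$ the suprema should be read as $0$ (or any finite value), so $\lambda_R(s)$ is still positive. The second is measurability of $\lambda_R$. By Lemma~\ref{lemma:monotonicity}, $\lambda_R$ is monotone in $s$ and therefore Borel measurable, so the integral is well defined. Positivity of the integrand on $(0,r)$ then gives $\Theta_R(r)>0$. This establishes $\eta=0 \iff \omega=0$.

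Next, $\omega=\rho/(1+\rho)=0$ if and only if $\rho=\operatorname{dist}((\vy,\vx),\cB)=0$. Because $\cB$ is closed (the lemma on closedness of the joint bad set), this holds if and only if $(\vy,\vx)\in\cB$. Unpacking Definition~\ref{def:joint-bad-set}, membership in $\cB$ means $\vx\in\cD\cup\overline{\cU}^{\,\overline{\cK}}$ or $\vy\in\widetilde\Sigma(\vx)$, which is the third condition in the statement.

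For the \lq\lq in particular\rq\rq\ part, take $\vx\in\cK$. Lemma~\ref{lemma:closed-3} gives $\cD\subset\overline{\cK}\setminus\cK$, so $\cD\cap\cK=\varnothing$. Lemma~\ref{lemma:closed-1} says $\cU$ is relatively closed in $\cK$, so its closure in $\overline{\cK}$ meets $\cK$ exactly in $\cU$: a point of $\cK$ that is a limit of points of $\cU$ already lies in $\cU$. Hence if $\vx\in\cU$, then $(\vy,\vx)\in\cB$ for every $\vy$, so $\eta(\vy,\vx)=0$ and \eqref{eq:define-g-operator} gives $g(\vy,\vx)=\vy$.

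The only step needing care is the strict positivity of $\lambda_R$, together with its measurability and the empty-tube convention. All of these follow from the finiteness lemma and Lemma~\ref{lemma:monotonicity}.
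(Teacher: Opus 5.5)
Your proof is correct and follows the paper's argument step for step. You go from $\eta=0$ to $\Theta_R(\omega)=0$ to $\omega=0$ via strict positivity of $\lambda_R$, then from $\omega=0$ to $\rho=0$ to $(\vy,\vx)\in\cB$ via closedness of $\cB$, and freezing on $\cU$ follows because $\cU\subset\overline{\cU}^{\,\overline{\cK}}$; the only additions are details the paper leaves implicit, namely measurability of $\lambda_R$, the empty-tube convention, and the identity $\overline{\cU}^{\,\overline{\cK}}\cap\cK=\cU$ via Lemma~\ref{lemma:closed-1}.
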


\begin{proof}
By definition,
\[
\eta(\vy,\vx)
=
\frac{\Theta_{1+\|\vy\|}(\omega(\vy,\vx))}
     {1+\Theta_{1+\|\vy\|}(\omega(\vy,\vx))}.
\]
Since \(\lambda_R(s)>0\) for every \(s>0\), we have
\(\Theta_R(r)=0\) if and only if \(r=0\). Hence
\[
\eta(\vy,\vx)=0
\iff
\omega(\vy,\vx)=0.
\]
Moreover,
\[
\omega(\vy,\vx)=0
\iff
\rho(\vy,\vx)=0
\iff
(\vy,\vx)\in\cB,
\]
where the last equivalence uses the closedness of \(\cB\). By the definition of
\(\cB\), this is equivalent to
\[
\vx\in \cD\cup\cU^{\overline \cK}
\quad\text{or}\quad
\vy\in\widetilde\Sigma(\vx).
\]
Finally, if \(\vx\in\cU\), then \(\vx\in\cU^{\overline \cK}\), so
\((\vy,\vx)\in\cB\) for every \(\vy\). Thus \(\eta(\vy,\vx)=0\), and therefore
\[
g(\vy,\vx)
=
(1-\eta(\vy,\vx))\vy+\eta(\vy,\vx)P_\vy(\vx)
=
\vy.
\]
This completes the proof.
\end{proof}

\subsection{Convergence given fixed x}

\begin{lemma}[Graph-closure switching agrees with reduced switching at stable points]
\label{lemma:closure-equal-reduce}
Fix \(\vx\in \cK\setminus \cU\). Then
\[
\widetilde\Sigma(\vx)=\Sigma^\circ(\vx).
\]
\end{lemma}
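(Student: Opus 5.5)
The inclusion \(\Sigma^\circ(\vx)\subset\widetilde\Sigma(\vx)\) is immediate: take the constant sequences \(\vx_\ell=\vx\) and \(\vy_\ell=\vy\). The substance is the reverse inclusion. I would prove it through the stable collapse pattern available at \(\vx\notin\cU\), combined with continuity of the branches (each \(f_i\) is locally Lipschitz on \(\cK\)).

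Fix \(\vy\in\widetilde\Sigma(\vx)\) and sequences \(\vx_\ell\to\vx\), \(\vy_\ell\to\vy\) with \(\vy_\ell\in\Sigma^\circ(\vx_\ell)\). Since \(\vx\in\cS\), Definition~\ref{def:app-stable-collapse} gives a relative neighborhood \(\sU\) of \(\vx\) and a partition \(C_1\cup\cdots\cup C_q\) that encodes the equality pattern on all of \(\sU\). For \(\ell\) large we have \(\vx_\ell\in\sU\).

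By the definition of \(\Sigma^\circ(\vx_\ell)\), there are indices \(i_\ell\neq j_\ell\) with \(f_{i_\ell}(\vx_\ell)\ne f_{j_\ell}(\vx_\ell)\) and
\[
\|\vy_\ell-f_{i_\ell}(\vx_\ell)\|=\|\vy_\ell-f_{j_\ell}(\vx_\ell)\|=\min_k\|\vy_\ell-f_k(\vx_\ell)\|.
\]
Because there are only finitely many index pairs, I pass to a subsequence on which \((i_\ell,j_\ell)=(i,j)\) is constant. Since \(f_i(\vx_\ell)\ne f_j(\vx_\ell)\) with \(\vx_\ell\in\sU\), the indices \(i\) and \(j\) lie in different clusters. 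Stability of the pattern then gives \(f_i(\vx)\neq f_j(\vx)\), because \(\vx\in\sU\) too. This is the key step, and it is exactly where the hypothesis \(\vx\notin\cU\) is used. Without it, two branches could be distinct along the sequence yet coincide in the limit, and the limit point would not lie in \(\Sigma^\circ(\vx)\).

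Now pass to the limit using continuity of each \(f_k\) at \(\vx\) and continuity of the norm. Both the equality \(\|\vy-f_i(\vx)\|=\|\vy-f_j(\vx)\|\) and the identity with \(\min_k\|\vy-f_k(\vx)\|\) survive, since a minimum of finitely many continuous functions is continuous. Together with \(f_i(\vx)\ne f_j(\vx)\), this gives \(\vy\in\Sigma^\circ(\vx)\).

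I expect no real obstacle. The only delicate point is the subsequence extraction and the use of the stable partition to keep the two minimizing branches distinct at the limit. Continuity of the \(f_k\) at \(\vx\in\cK\) follows from local Lipschitzness, so the extension to \(\overline{\cK}\) is not needed.
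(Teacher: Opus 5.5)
Your proposal is correct and follows essentially the same route as the paper: constant sequences give the easy inclusion, and for the reverse inclusion you fix a witnessing pair $(i,j)$ along a subsequence. You then use the stable collapse partition near $\vx$ to keep $f_i(\vx)\neq f_j(\vx)$, and continuity of the branches at $\vx\in\cK$ to pass the equal-distance and minimality conditions to the limit.
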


\begin{proof} 
The inclusion $\Sigma^\circ(\vx)\subseteq \widetilde\Sigma(\vx)$ is immediate from the definition of graph closure. We prove the reverse inclusion. 

Let \(\vq\in\widetilde\Sigma(\vx)\). By definition, there exist
\(\vx_\ell\to \vx\) and \(\vq_\ell\to \vq\) such that
\[
\vq_\ell\in \Sigma^\circ(\vx_\ell).
\]
For each \(\ell\), choose labels \(i_\ell\neq j_\ell\) witnessing the switching, namely
\[
f_{i_\ell}(\vx_\ell)\neq f_{j_\ell}(\vx_\ell),
\]
and
\[
\|\vq_\ell-f_{i_\ell}(\vx_\ell)\| = \|\vq_\ell-f_{j_\ell}(\vx_\ell)\| = \min_{1\le k\le n}\|\vq_\ell-f_k(\vx_\ell)\|.
\]
Since there are only finitely many label pairs, after passing to a subsequence we may assume
\[
i_\ell=i,\qquad j_\ell=j
\]
for fixed \(i\neq j\).

Because \(\vx\in \cK\setminus\mathcal U\), the collapse pattern is stable near \(\vx\). Hence, after
shrinking to a neighborhood of \(\vx\), whether two labels are collapsed is constant. Since
\[
f_i(\vx_\ell)\neq f_j(\vx_\ell)
\]
for infinitely many \(\ell\), the labels \(i\) and \(j\) cannot belong to the same stable collapse
cluster. Therefore
\[
f_i(\vx)\neq f_j(\vx).
\]
Now let \(\ell\to\infty\). By continuity of the branch maps,
\[
\|\vq-f_i(\vx)\| = \|\vq-f_j(\vx)\| = \min_{1\le k\le n}\|\vq-f_k(\vx)\|.
\]
Together with \(f_i(\vx)\neq f_j(\vx)\), this implies $\vq\in\Sigma^\circ(\vx)$. Therefore, $\widetilde\Sigma(\vx)\subseteq \Sigma^\circ(\vx)$, and hence $\widetilde\Sigma(\vx)=\Sigma^\circ(\vx)$.
\end{proof}

\begin{lemma}[Voronoi-segment invariance]
\label{lemma:segment-invariance}
Fix \(\vx\in \cK\setminus\cU\). Let $\vy_0\notin\Sigma^\circ(\vx)$, and define $\vu:=P_{\vy_0}(\vx)$. Then for every \(\vz\in[\vy_0,\vu]\) (Here, $[\vy_0,\vu]$ represents the line segment connecting the endpoints $\vy_0$ and $\vu$.),
\[
P_\vz(\vx)=\vu.
\]
In particular,
\[
[\vy_0,\vu]\cap \widetilde\Sigma(\vx)=\varnothing.
\]
\end{lemma}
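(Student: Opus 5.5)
The proof is a strict-inequality argument along the segment, using the geometry of nearest points in Euclidean space. Write $\vu=f_{i}(\vx)$, where $i=\pi_{\vy_0}(\vx)$ is the priority-minimal nearest label.

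Since $\vy_0\notin\Sigma^\circ(\vx)$, no distinct branch value ties with $\vu$ for the minimal distance. Hence
\[
\|\vy_0-\vu\|<\|\vy_0-\vw\|
\]
for every $\vw\in\cF(\vx)$ with $\vw\neq\vu$. Labels $k$ with $f_k(\vx)=\vu$ give the same value, so ties among collapsed labels never change $P$.

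Fix $\vz=(1-t)\vy_0+t\vu$ with $t\in[0,1]$. Then $\|\vz-\vu\|=(1-t)\|\vy_0-\vu\|$. For any competitor $\vw\neq\vu$, the triangle inequality gives
\[
\|\vz-\vw\|\ge\|\vy_0-\vw\|-\|\vy_0-\vz\|=\|\vy_0-\vw\|-t\|\vy_0-\vu\|.
\]
The right-hand side is strictly larger than $(1-t)\|\vy_0-\vu\|=\|\vz-\vu\|$, by the strict inequality above. So $\vu$ is the unique nearest distinct value at $\vz$.

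It follows that $\operatorname{argmin}_k\|\vz-f_k(\vx)\|$ is exactly the set of labels $k$ with $f_k(\vx)=\vu$, which is the same label set as at $\vy_0$. The priority minimum is therefore $i$ again, and $P_\vz(\vx)=\vu$.

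For the second claim, uniqueness of the nearest distinct value at each $\vz\in[\vy_0,\vu]$ means $\vz\notin\Sigma^\circ(\vx)$. Because $\vx\in\cK\setminus\cU$, Lemma~\ref{lemma:closure-equal-reduce} gives $\widetilde\Sigma(\vx)=\Sigma^\circ(\vx)$, so the segment avoids $\widetilde\Sigma(\vx)$.

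The only point needing care is the bookkeeping between labels and values: the argmin is over labels, and several labels may collapse to $\vu$. This is handled by working with distinct values throughout, as above.
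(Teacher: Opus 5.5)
Your proof is correct and follows the paper's argument essentially step for step. Both use the strict nearest-value inequality at $\vy_0$, the triangle-inequality estimate $\|\vz_t-\vv\|\ge\|\vy_0-\vv\|-t\|\vy_0-\vu\|>\|\vz_t-\vu\|$ along the segment, and Lemma~\ref{lemma:closure-equal-reduce} to pass from $\Sigma^\circ(\vx)$ to $\widetilde\Sigma(\vx)$. The differences are minor improvements on your side: you read the strict inequality directly off $\vy_0\notin\Sigma^\circ(\vx)$, where the paper makes an unnecessary detour through $\widetilde\Sigma$, and you spell out the label-versus-value bookkeeping for the priority selector, which the paper leaves implicit.
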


\begin{proof}
Since \(\vx\in \cK\setminus\mathcal U\), Lemma \ref{lemma:closure-equal-reduce} gives $\widetilde\Sigma(\vx)=\Sigma^\circ(\vx)$ and hence 
\[
\vy_0\notin\widetilde\Sigma(\vx).
\]
Hence the selected value \(\vu=P_{\vy_0}(\vx)\) is the unique nearest branch value to \(\vy_0\).

Let \(\vv\in \cF(\vx)\) be any distinct value with \(\vv\neq \vu\). Then
\[
\|\vy_0-\vu\|<\|\vy_0-\vv\|.
\]
For \(t\in[0,1]\), set
\[
\vz_t:=(1-t)\vy_0+t\vu.
\]
Then
\[
\|\vz_t-\vu\|=(1-t)\|\vy_0-\vu\|.
\]
By the triangle inequality,
\[
\|\vz_t-\vv\| \ge \|\vy_0-\vv\|-\|\vy_0-\vz_t\| = \|\vy_0-\vv\|-t\|\vy_0-\vu\|.
\]
Since \(\|\vy_0-\vv\|>\|\vy_0-\vu\|\), we obtain
\[
\|\vz_t-\vv\| > (1-t)\|\vy_0-\vu\| = \|\vz_t-\vu\|.
\]
Therefore \(\vu\) remains the unique nearest branch value for every
\(\vz_t\in[\vy_0,\vu]\). Hence
\[
P_{\vz_t}(\vx)=\vu
\]
for every \(t\in[0,1]\), and then $\vz_t\notin \Sigma^\circ(\vx)$ for every \(t\in[0,1]\).

Using again $\widetilde\Sigma(\vx)=\Sigma^\circ(\vx)$, we conclude that
\[
\vz_t\notin \widetilde\Sigma(\vx)
\]
for every \(t\in[0,1]\). Therefore $[\vy_0,\vu]\cap \widetilde\Sigma(\vx)=\varnothing$, which concludes the proof.
\end{proof}

\begin{lemma}[Uniform positive bad-set distance along the Voronoi segment]
\label{lemma:uniform-dist-in-cell}
Fix \(\vx\in \cK\setminus\mathcal U\). Let
\(\vy_0\notin\Sigma^\circ(\vx)\) and define $\vu:=P_{\vy_0}(\vx)$.
Then
\[
\inf_{\vz\in[\vy_0,\vu]}\rho(\vz,\vx)>0.
\]
\end{lemma}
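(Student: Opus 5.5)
The idea is a compactness argument. The segment $[\vy_0,\vu]$ is compact, and $\vz\mapsto\rho(\vz,\vx)$ is continuous (indeed $1$-Lipschitz) as a distance function. So the infimum is attained at some $\vz_*\in[\vy_0,\vu]$. It therefore suffices to show $\rho(\vz,\vx)>0$ for every $\vz$ on the segment. Since $\cB$ is closed in $\cY\times\overline{\cK}$ (the closedness lemma for the joint bad set), $\rho(\vz,\vx)>0$ is equivalent to $(\vz,\vx)\notin\cB$.

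Fix $\vz\in[\vy_0,\vu]$ and check the three components of $\cB$ in Definition~\ref{def:joint-bad-set}.

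\begin{enumerate}
\item Since $\vx\in\cK$, Lemma~\ref{lemma:closed-3} gives $\cD\subset\overline{\cK}\setminus\cK$, so $\vx\notin\cD$.
\item Since $\vx\in\cK\setminus\cU=\mathcal S$ and $\mathcal S$ is relatively open in $\cK$ (Lemma~\ref{lemma:closed-1}), there is a relative neighborhood of $\vx$ in $\cK$ disjoint from $\cU$. This should give $\vx\notin\overline{\cU}^{\,\overline{\cK}}$. The identity $\overline{\cU}^{\,\overline{\cK}}\cap\cK=\cU$ is used in Lemma~\ref{lemma:freezing}, and I would justify it as follows. Suppose $\vx=\lim\vx_\ell$ with $\vx_\ell\in\cU\subset\cK$. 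Then eventually $\vx_\ell$ lies in the open ball around $\vx$ whose intersection with $\cK$ lies in $\mathcal S$, which is a contradiction.
\item Lemma~\ref{lemma:segment-invariance} gives $[\vy_0,\vu]\cap\widetilde\Sigma(\vx)=\varnothing$, so $\vz\notin\widetilde\Sigma(\vx)$.
\end{enumerate}

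Hence $(\vz,\vx)\notin\cB$, so $\rho(\vz,\vx)>0$ pointwise. Evaluating at the minimizer $\vz_*$ yields a strictly positive infimum.

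The main (mild) obstacle is the second item: making sure that relative openness of $\mathcal S$ in $\cK$ really excludes $\vx$ from the closure of $\cU$ taken in $\overline{\cK}$. The argument above handles this, since a ball in $\R^d$ around $\vx$ meets $\cK$ only in $\mathcal S$, and every sequence from $\cU$ converging to $\vx$ must eventually enter that ball. Everything else is continuity of the distance function plus compactness of the segment.
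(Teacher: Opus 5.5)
Your proof is correct and follows the paper's argument. Both use compactness of the segment (the paper phrases it via the compact set $\{(\vz,\vx):\vz\in[\vy_0,\vu]\}$), closedness of $\cB$, and a component-by-component check that $(\vz,\vx)\notin\cB$ based on $\cD\subset\overline{\cK}\setminus\cK$ and Lemma~\ref{lemma:segment-invariance}. Your explicit verification that $\vx\notin\overline{\cU}^{\,\overline{\cK}}$, rather than merely $\vx\notin\cU$, is a welcome addition, since the paper only writes $\vx\notin\cU\cup\cD$ and leaves the closure step implicit.
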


\begin{proof}
Define
\[
\Gamma:=\{(\vz,\vx):\vz\in[\vy_0,\vu]\}
\subset \cY\times\overline{\cK}.
\]
The set \(\Gamma\) is compact because \([\vy_0,\vu]\) is compact.

We claim that
\[
\Gamma\cap\mathcal B=\varnothing .
\]
Indeed, since \(\vx\in\cK\setminus\mathcal U\) and
\(\mathcal D\subset\overline{\cK}\setminus\cK\), we have $\vx\notin \mathcal U\cup\mathcal D$.
Moreover, by Lemma~\ref{lemma:segment-invariance}, $[\vy_0,\vu]\cap\widetilde\Sigma(\vx)=\varnothing$.
Therefore, for every \(\vz\in[\vy_0,\vu]\), the point
\((\vz,\vx)\) belongs to none of the components defining \(\mathcal B\).
Hence \(\Gamma\cap\mathcal B=\varnothing\).

Since \(\mathcal B\) is closed and \(\Gamma\) is compact, the continuous
function
\[
(\vz,\vx)\mapsto \operatorname{dist}((\vz,\vx),\mathcal B)
\]
attains a strictly positive minimum on \(\Gamma\). Therefore
\[
\inf_{\vz\in[\vy_0,\vu]}\rho(\vz,\vx)
=
\operatorname{dist}(\Gamma,\mathcal B)
>0.
\]
This proves the claim.
\end{proof}

\begin{lemma}[Positive damping]
\label{lemma:positive-damping}
Fix \(\vx\in \cK\setminus\mathcal U\). Let $\vy_0\notin\Sigma^\circ(\vx)$ and define $\vu:=P_{\vy_0}(\vx)$. 
Then
\[
c:=\inf_{\vz\in[\vy_0,\vu]}\eta(\vz,\vx)>0.
\]
\end{lemma}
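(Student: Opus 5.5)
The plan is to combine Lemma~\ref{lemma:uniform-dist-in-cell} with the monotonicity of $\Theta_R$ in both arguments. By Lemma~\ref{lemma:uniform-dist-in-cell}, set
\[
\rho_*:=\inf_{\vz\in[\vy_0,\vu]}\rho(\vz,\vx)>0 .
\]
Since $r\mapsto r/(1+r)$ is increasing, we get $\omega(\vz,\vx)\ge s_*:=\rho_*/(1+\rho_*)>0$ for every $\vz$ on the segment. Next we bound the radius uniformly. The segment is compact, so
\[
R_*:=1+\max_{\vz\in[\vy_0,\vu]}\|\vz\|\le 1+\max(\|\vy_0\|,\|\vu\|)<\infty ,
\]
and every $\vz$ on the segment has $1+\|\vz\|\le R_*$.

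We then reduce to a single positive number. Because $\lambda_R(s)>0$ for all $s>0$, $\Theta_R$ is nondecreasing in $r$, so $\Theta_{1+\|\vz\|}(\omega(\vz,\vx))\ge \Theta_{1+\|\vz\|}(s_*)$. By Lemma~\ref{lemma:monotonicity}, $\lambda_R(s)$ is nonincreasing in $R$; integrating gives that $\Theta_R(s_*)$ is nonincreasing in $R$. Hence
\[
\Theta_{1+\|\vz\|}(s_*)\ge\Theta_{R_*}(s_*)=\int_0^{s_*}\lambda_{R_*}(s)\,ds .
\]
This integral is strictly positive. For $s\in(0,s_*]$, Lemma~\ref{lemma:monotonicity} gives $\lambda_{R_*}(s)$ nondecreasing in $s$, and the finiteness lemma for $H_1,H_2$ gives $\lambda_{R_*}(s)>0$.

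The main obstacle is that integral, since the integrand may degenerate as $s\to0$. Positivity on $(0,s_*]$ suffices, because a nonnegative function positive on a set of positive measure has positive integral. Monotonicity also makes the integrand measurable, so the integral is well defined; it is even bounded below by, e.g., $(s_*/2)\lambda_{R_*}(s_*/2)>0$. Finally, $t\mapsto t/(1+t)$ is increasing, so
\[
c\ge \frac{\Theta_{R_*}(s_*)}{1+\Theta_{R_*}(s_*)}>0 .
\]
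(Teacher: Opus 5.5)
Your proof is correct and follows the paper's argument: you lower-bound $\omega$ on the segment via Lemma~\ref{lemma:uniform-dist-in-cell}, bound the radius by $R_*$, and use the monotonicity of $\Theta_R$ in $R$ to obtain $c\ge \Theta_{R_*}(s_*)/(1+\Theta_{R_*}(s_*))>0$. You also spell out the monotonicity in $r$ and the measurability and positivity of the integral, which the paper uses only implicitly.
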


\begin{proof}
By Lemma \ref{lemma:uniform-dist-in-cell}, there exists \(\rho_0>0\) such that
\[
\rho(\vz,\vx)\ge \rho_0
\]
for every \(\vz\in[\vy_0,\vu]\). Hence
\[
\omega(\vz,\vx) = \frac{\rho(\vz,\vx)}{1+\rho(\vz,\vx)} \ge \frac{\rho_0}{1+\rho_0} =:\omega_0>0.
\]
Let
\[
R_0:=1+\max_{\vz\in[\vy_0,\vu]}\|\vz\|.
\]
Then
\[
1+\|\vz\|\le R_0
\]
for every \(\vz\in[\vy_0,\vu]\). Since \(H_1(R,s)\) and \(H_2(R,s)\) are
nondecreasing in \(R\), the function \(\lambda_R(s)\) is nonincreasing in \(R\).
Therefore
\[
\Theta_{1+\|\vz\|}(\omega_0)
\ge
\Theta_{R_0}(\omega_0).
\]
Thus, for every \(\vz\in[\vy_0,\vu]\),
\[
\eta(\vz,\vx)
\ge
\frac{\Theta_{R_0}(\omega_0)}
{1+\Theta_{R_0}(\omega_0)}
=:c.
\]
Since \(\omega_0>0\) and \(\lambda_{R_0}(s)>0\) for every \(s>0\),
\[
\Theta_{R_0}(\omega_0)
=
\int_0^{\omega_0}\lambda_{R_0}(s)\,ds
>0.
\]
Hence \(c>0\).
\end{proof}

\begin{theorem}[Convergence with fixed $\vx$]
\label{thm:convergence-fix-x}
Fix \(\vx\in \cK\setminus\mathcal U\). Let $\vy_0\notin\Sigma^\circ(\vx)$ and define $\vu:=P_{\vy_0}(\vx)$. 
Consider the iteration
\[
\vy_{t+1}=g(\vy_t,\vx).
\]
Then
\[
\vy_t\to \vu.
\]
More precisely, there exists \(c>0\) such that
\[
\|\vy_t-\vu\|\le (1-c)^t\|\vy_0-\vu\|.
\]
where $c$ may depend on $\vx$ and $\vy_0$.
\end{theorem}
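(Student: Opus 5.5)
The argument is an induction showing that the iterates never leave the segment $[\vy_0,\vu]$ and contract toward $\vu$ at a uniform rate. The constant will be the one from Lemma~\ref{lemma:positive-damping}:
\[
c:=\inf_{\vz\in[\vy_0,\vu]}\eta(\vz,\vx)>0 .
\]
Since $\eta$ takes values in $[0,1)$, we have $c\in(0,1)$. We prove by induction on $t$ the two statements
\[
\vy_t\in[\vy_0,\vu]
\qquad\text{and}\qquad
\|\vy_t-\vu\|\le (1-c)^t\|\vy_0-\vu\| .
\]
The case $t=0$ is trivial.

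For the inductive step, assume $\vy_t\in[\vy_0,\vu]$. By Lemma~\ref{lemma:segment-invariance}, the nearest selector at $\vy_t$ is still $P_{\vy_t}(\vx)=\vu$. Substituting into the definition \eqref{eq:define-g-operator} gives
\[
\vy_{t+1}=(1-\eta_t)\vy_t+\eta_t\vu,
\qquad \eta_t:=\eta(\vy_t,\vx)\in[c,1).
\]
So $\vy_{t+1}$ is a convex combination of $\vy_t$ and $\vu$. It therefore lies on $[\vy_t,\vu]$, which is contained in $[\vy_0,\vu]$ because $\vy_t$ already lies on that segment. This gives the first statement. Subtracting $\vu$ from both sides yields
\[
\vy_{t+1}-\vu=(1-\eta_t)(\vy_t-\vu),
\]
hence
\[
\|\vy_{t+1}-\vu\|\le(1-c)\|\vy_t-\vu\| .
\]
Combined with the inductive hypothesis, this closes the induction. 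Since $1-c<1$, the geometric bound gives $\vy_t\to\vu$.

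The only delicate point is that $\eta$ and $P_\vy$ both depend on the current state. This requires the uniform lower bound on $\eta$ and the constancy of the selector to hold along the whole trajectory, not just at $\vy_0$. Both facts are already secured on the entire segment: constancy of the selector by Lemma~\ref{lemma:segment-invariance}, and the positive infimum of $\eta$ by Lemma~\ref{lemma:positive-damping}, which itself relies on Lemma~\ref{lemma:uniform-dist-in-cell}. The segment-invariance part of the induction is exactly what makes those lemmas apply at every step. We also check that $\eta<1$, which follows from the form $\Theta/(1+\Theta)$; this guarantees the update is a genuine convex combination and does not overshoot past $\vu$. Finally, $c$ depends on $\vx$ and $\vy_0$ only through the segment, as the statement allows.
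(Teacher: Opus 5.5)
Your proof is correct and follows essentially the same route as the paper. The paper also takes $c$ from Lemma~\ref{lemma:positive-damping}, shows by induction that the iterates stay on $[\vy_0,\vu]$ (where Lemma~\ref{lemma:segment-invariance} gives $P_{\vy_t}(\vx)=\vu$), and then contracts by the factor $(1-c)$ at each step. Your extra remark that $\eta<1$ is harmless but not needed, since $\eta\le 1$ already keeps $\vy_{t+1}$ on $[\vy_t,\vu]$.
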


\begin{proof}By Lemma \ref{lemma:positive-damping},
\[
c:=\inf_{\vz\in[\vy_0,\vu]}\eta(\vz,\vx)>0.
\]
By Lemma \ref{lemma:segment-invariance},
\[
P_\vz(\vx)=\vu
\]
for every \(\vz\in[\vy_0,\vu]\).

We prove by induction that
\[
\vy_t\in[\vy_0,\vu]
\]
for every \(t\). This is true for \(t=0\). If \(\vy_t\in[\vy_0,\vu]\), then
\[
\vy_{t+1} = (1-\eta(\vy_t,\vx))\vy_t+\eta(\vy_t,\vx)\vu.
\]
Since
\[
0\le\eta(\vy_t,\vx)\le1,
\]
we have
\[
\vy_{t+1}\in[\vy_t,\vu]\subset[\vy_0,\vu].
\]
Thus \(\vy_t\in[\vy_0,\vu]\) for all \(t\).

Therefore
\[
\vy_{t+1}-\vu = (1-\eta(\vy_t,\vx))(\vy_t-\vu),
\]
and so
\[
\|\vy_{t+1}-\vu\|\le(1-c)\|\vy_t-\vu\|.
\]
Iterating gives
\[
\|\vy_t-\vu\| \le (1-c)^t\|\vy_0-\vu\|.
\]
Therefore \(\vy_t\to \vu\).
\end{proof}

\subsection{Input regularity}

\begin{lemma} 
\label{lemma:lip-damping}
Fix $\vy \in \cY$. The map $\vx\mapsto \eta(\vy,\vx)P_\vy(\vx)$ is globally Lipschitz on \(\overline{\cK}\).
\end{lemma}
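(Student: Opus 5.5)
The plan is to fix $\vy$ once and for all and set $R:=1+\|\vy\|$. Since $\|\vy\|\le R$, every point $(\vy,\vx)$ with $\omega(\vy,\vx)\ge s$ lies in the safe tube $\cS(R,s)$. The mechanism is that the damping $\Theta_R$ was built with density $\lambda_R(s)\le 1/(H_1(R,s)+H_2(R,s)+1)$. This density exactly cancels the possible blow-up of the Lipschitz constant and of the size of $P_\vy$ as $(\vy,\vx)$ approaches the bad set $\cB$.

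I will use the following elementary facts.
\begin{enumerate}[leftmargin=15pt]
\item The map $t\mapsto t/(1+t)$ is $1$-Lipschitz on $[0,\infty)$ and bounded by $t$. Hence $|\eta(\vy,\vx)-\eta(\vy,\vz)|\le|\Theta_R(a)-\Theta_R(b)|$ and $\eta(\vy,\vz)\le\Theta_R(b)$, where $a:=\omega(\vy,\vx)$ and $b:=\omega(\vy,\vz)$.
\item By Lemma~\ref{lemma:monotonicity}, $\lambda_R$ is nondecreasing in $s$. Therefore $|\Theta_R(a)-\Theta_R(b)|\le\lambda_R(a)|a-b|$ when $a\ge b$, and $\Theta_R(b)\le b\,\lambda_R(b)$.
\item $|a-b|\le\|\vx-\vz\|$, by the $1$-Lipschitz property of $\omega(\vy,\cdot)$. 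This was shown for $\vx,\vz\in\cK$, but the same argument works on $\overline{\cK}$.
\item $a,b<1$.
\end{enumerate}

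Take $\vx,\vz\in\overline{\cK}$ and, by symmetry, assume $a\ge b$.

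\emph{Case $b=0$.} Then $\eta(\vy,\vz)=0$, so it suffices to bound $\eta(\vy,\vx)\|P_\vy(\vx)\|$. If $a=0$ there is nothing to prove. Otherwise $(\vy,\vx)\in\cS(R,a)$, so $\|P_\vy(\vx)\|\le H_2(R,a)$. Combining this with $\eta(\vy,\vx)\le\Theta_R(a)\le a\lambda_R(a)$ gives
\[
\eta(\vy,\vx)\|P_\vy(\vx)\|\le a\,\lambda_R(a)H_2(R,a)\le a=|a-b|\le\|\vx-\vz\|.
\]

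\emph{Case $b>0$.} Split
\[
\eta(\vy,\vx)P_\vy(\vx)-\eta(\vy,\vz)P_\vy(\vz)=\bigl(\eta(\vy,\vx)-\eta(\vy,\vz)\bigr)P_\vy(\vx)+\eta(\vy,\vz)\bigl(P_\vy(\vx)-P_\vy(\vz)\bigr).
\]
For the first term, use $(\vy,\vx)\in\cS(R,a)$ to get the bound
\[
\lambda_R(a)|a-b|\,H_2(R,a)\le\|\vx-\vz\|.
\]
For the second term, note that both $(\vy,\vx)$ and $(\vy,\vz)$ lie in $\cS(R,b)$. Hence $\|P_\vy(\vx)-P_\vy(\vz)\|\le H_1(R,b)\|\vx-\vz\|$, and with $\eta(\vy,\vz)\le b\lambda_R(b)$ the term is at most $b\,\lambda_R(b)H_1(R,b)\|\vx-\vz\|\le b\|\vx-\vz\|\le\|\vx-\vz\|$. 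Altogether the map is $2$-Lipschitz on $\overline{\cK}$, with a constant independent even of $\vy$.

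The one delicate point is how the difference is split. The factor $\eta$ difference must be attached to $P$ evaluated at the \emph{safer} point (larger $\omega$, here $\vx$), so that the $H_2$ bound uses the same level $a$ at which $\lambda_R$ is evaluated. The $P$-difference must be weighted by $\eta$ at the \emph{less safe} point $\vz$, so that both points lie in the tube $\cS(R,b)$ where $H_1$ applies. With the opposite split, the constants would not cancel against $\lambda_R$.

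A small bookkeeping check is also needed: $H_1$ and $H_2$ are defined over $\overline{\cK}$ via the extended $P_\vy$, so points of $\overline{\cK}\setminus\cK$ are covered. Points of the bad set have $\omega=0$ and are handled by the case $b=0$.
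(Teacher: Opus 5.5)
Your proof is correct and is essentially the paper's argument. It uses the same choice $R=1+\|\vy\|$, the same bounds $\Theta_R(r)\le r\lambda_R(r)$ and $|\Theta_R(r_1)-\Theta_R(r_2)|\le (r_2-r_1)\lambda_R(r_2)$, and the same split: the $\eta$-difference is paired with $P$ at the safer point, and the $P$-difference is weighted by $\eta$ at the less safe point, giving the constant $2$. The only cosmetic difference is that you merge the paper's first two cases (both points in $\cB$, or exactly one in $\cB$) into your single case $b=0$.
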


\begin{proof}
Fix \(\vy\). 
Set \(R:=1+\|\vy\|\). Then \(\|\vy\|\le R\), and
\[
\eta(\vy,\vx)=
\frac{\Theta_R(\omega(\vy,\vx))}
{1+\Theta_R(\omega(\vy,\vx))}.
\]
For simplicity, we write
\[
\omega(\vx):=\omega(\vy,\vx),
\qquad
P(\vx):=P_\vy(\vx),
\qquad
\eta(\vx):=\eta(\vy,\vx).
\]
Since \(\vx\mapsto\omega(\vx)\) is \(1\)-Lipschitz, we have
\[
|\omega(\vx)-\omega(\vz)|\le \|\vx-\vz\|.
\]
\textbf{Case 1:}
If \((\vy,\vx),(\vy,\vz)\in \cB\), according to Lemma \ref{lemma:freezing}, we have $\eta(\vx)=\eta(\vz)=0$ and hence
\[
\eta(\vx)P(\vx)=\eta(\vz)P(\vz)=0.
\]
\textbf{Case 2:}
If \((\vy,\vx)\in \cB\) and \((\vy,\vz)\notin \cB\), set
\[
r:=\omega(\vz)>0.
\]
Then \((\vy,\vz)\in\mathcal \cS({R,r})\), so
\[
\|P(\vz)\|\le H_2(R,r).
\]
Also,
\[
\eta(\vz)\le \Theta_R(r)\le r\lambda_R(r).
\]
Hence
\[
\|\eta(\vz)P(\vz)\| \le r\lambda_R(r)H_2(R,r) \le r.
\]
Since
\[
r=\omega(\vz)-\omega(\vx)\le\|\vz-\vx\|,
\]
we get
\[
\|\eta(\vx)P(\vx)-\eta(\vz)P(\vz)\| \le \|\vx-\vz\|.
\]
\textbf{Case 3:}
Now suppose \((\vy,\vx),(\vy,\vz)\notin \cB\). Without loss of generality, assume
\[
r_1:=\omega(\vx)\le r_2:=\omega(\vz).
\]
Then \((\vy,\vz),(\vy,\vx)\in\mathcal \cS({R,r_1})\), so
\[
\|P(\vx)-P(\vz)\| \le H_1(R,r_1)\|\vx-\vz\|.
\]
Also,
\[
\|P(\vz)\|\le H_2(R,r_2).
\]
Therefore
\[
\big\|\eta(\vx)P(\vx)-\eta(\vz)P(\vz)\big\| \le \eta(\vx)\|P(\vx)-P(\vz)\| + |\eta(\vx)-\eta(\vz)| \big\|P(\vz)\big\|.
\]
For the first term,
\[
\eta(\vx)\le\Theta_R(r_1)\le r_1\lambda_R(r_1),
\]
so
\[
\eta(\vx)\|P(\vx)-P(\vz)\| \le r_1\lambda_R(r_1)H_1(R,r_1)\|\vx-\vz\| \le \|\vx-\vz\|.
\]
For the second term, since \(\lambda_R\) is nondecreasing,
\[
|\eta(\vx)-\eta(\vz)| \le |\Theta_R(r_1)-\Theta_R(r_2)| \le (r_2-r_1)\lambda_R(r_2).
\]
Thus
\[
|\eta(\vx)-\eta(\vz)| \big\|P(\vz)\big\| \le (r_2-r_1)\lambda_R(r_2)H_2(R,r_2) \le r_2-r_1.
\]
Since \(\omega\) is \(1\)-Lipschitz,
\[
r_2-r_1 \le \|\vx-\vz\|.
\]
Combining the two estimates gives
\[
\|\eta(\vx)P(\vx)-\eta(\vz)P(\vz)\| \le 2\|\vx-\vz\|.
\]
Therefore \(\vx\mapsto\eta(\vy,\vx)P_\vy(\vx)\) is globally Lipschitz on \(\overline\cK\).
\end{proof}

\begin{theorem}[Global Lipschitzness of the damped operator]
\label{thm:lip-g}
For every fixed \(\vy\in \cY\), the map
\[
\vx\mapsto g(\vy,\vx)
\]
is globally Lipschitz on \(\overline\cK\).
\end{theorem}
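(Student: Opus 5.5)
Fix $\vy\in\cY$. By the definition \eqref{eq:define-g-operator} we write
\[
g(\vy,\vx)=\vy-\eta(\vy,\vx)\,\vy+\eta(\vy,\vx)P_\vy(\vx),
\]
so that $\vx\mapsto g(\vy,\vx)$ is the sum of the constant $\vy$, the term $-\eta(\vy,\vx)\vy$, and the term $\eta(\vy,\vx)P_\vy(\vx)$. A finite sum of globally Lipschitz maps is globally Lipschitz, so it suffices to treat the last two terms separately.

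The third term, $\vx\mapsto\eta(\vy,\vx)P_\vy(\vx)$, is globally Lipschitz on $\overline\cK$ with constant $2$ by Lemma~\ref{lemma:lip-damping}. No further work is needed there.

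The second term is $\|\vy\|$ times the scalar map $\vx\mapsto\eta(\vy,\vx)$, so we only need a Lipschitz bound on $\eta(\vy,\cdot)$. Set $R=1+\|\vy\|$ and let $\omega(\vx):=\omega(\vy,\vx)$. The map $\vx\mapsto\omega(\vx)$ is $1$-Lipschitz, as shown after the definition of the safety factor. The function $\Theta_R$ has derivative $\lambda_R(s)=1/(H_1+H_2+1)\le 1$, so it is $1$-Lipschitz on $[0,\infty)$; since the integrand is positive and bounded by $1$, this holds even where $\lambda_R$ is only monotone rather than continuous. Finally, $\tau\mapsto\tau/(1+\tau)$ is $1$-Lipschitz on $[0,\infty)$. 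Composing these three maps gives
\[
|\eta(\vy,\vx)-\eta(\vy,\vz)|\le\|\vx-\vz\|.
\]
If one or both points lie in $\cB$, Lemma~\ref{lemma:freezing} gives $\eta=0$ there, and this is consistent with the same bound because $\omega=0$ exactly on $\cB$. No case split is needed, since the composition argument applies uniformly to all of $\overline\cK$.

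Combining the two bounds yields
\[
\|g(\vy,\vx)-g(\vy,\vz)\|\le(\|\vy\|+2)\|\vx-\vz\|
\]
for all $\vx,\vz\in\overline\cK$, which is global Lipschitzness with a constant depending only on $\vy$. The only point that needs care is that $\lambda_R$ is measurable and bounded by $1$, so that $\Theta_R$ is well defined and $1$-Lipschitz. Measurability follows from the monotonicity in Lemma~\ref{lemma:monotonicity}, and the bound $\lambda_R\le 1$ is immediate from its definition. Beyond this, no step poses a real obstacle, since all the difficulty was absorbed into Lemma~\ref{lemma:lip-damping}.
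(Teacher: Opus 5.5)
Your proposal is correct and follows the paper's proof essentially step for step. You use the same decomposition $g(\vy,\vx)=\vy-\eta(\vy,\vx)\vy+\eta(\vy,\vx)P_\vy(\vx)$, invoke Lemma~\ref{lemma:lip-damping} for the last term, and bound $\eta(\vy,\cdot)$ by composing the $1$-Lipschitz maps $\omega(\vy,\cdot)$, $\Theta_R$ (via $\lambda_R\le 1$), and $r\mapsto r/(1+r)$, which gives the constant $\|\vy\|+2$; your remark that monotonicity makes $\lambda_R$ measurable, so $\Theta_R$ is well defined, is a detail the paper leaves implicit.
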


\begin{proof}
We write
\[
g(\vy,\vx) = \vy-\eta(\vy,\vx)\vy+\eta(\vy,\vx)P_\vy(\vx).
\]
By Lemma \ref{lemma:lip-damping}, $\vx\mapsto \eta(\vy,\vx)P_\vy(\vx)$ is globally Lipschitz.

It remains to control \(\vx\mapsto\eta(\vy,\vx)\vy\). Since \(\vy\) is fixed, it is enough to control \(\vx\mapsto\eta(\vy,\vx)\). Because \(r\mapsto r/(1+r)\) is \(1\)-Lipschitz and
\[
\Theta_R'(r)=\lambda_R(r)\le1
\]
for a.e. \(r\), we have
\[
|\eta(\vy,\vx)-\eta(\vy,\vz)| \le |\omega(\vy,\vx)-\omega(\vy,\vz)| \le
\|\vx-\vz\|.
\]
Therefore
\[
\|\eta(\vy,\vx)\vy-\eta(\vy,\vz)\vy\| \le \|\vy\|\|\vx-\vz\|.
\]
Combining the two estimates, \(\vx\mapsto g(\vy,\vx)\) is globally Lipschitz on \(\overline\cK\).
\end{proof}

\subsection{Final proof of Theorem \ref{thm:expressivity}}

Finally, we restate Theorem \ref{thm:expressivity} in a more complete version and prove it.

\begin{theorem}
Let \(\cK\subset \mathbb{R}^d\) be bounded and let \(\cY=\mathbb{R}^m\) with the Euclidean norm. Assume $f_1,\dots,f_n:\cK\to \cY$ be locally Lipschitz. These branches can collapse. Then there exists $g:\cY\times\cK\to\cY$ satisfying:
\begin{enumerate}
\item For every \(\vy\in \cY\), the map $\vx\mapsto g(\vy,\vx)$ is globally Lipschitz on \(\cK\).

\item For every \(\vx\in\cU\), $g(\vy,\vx)=\vy$ for every \(\vy\in \cY\). Hence the iteration freezes at unstable collapse points.

\item For every \(\vx\in \cK\setminus\mathcal U\) and every $\vy_0\notin\Sigma^\circ(\vx)$, the iteration $\vy_{t+1}=g(\vy_t,\vx)$ converges to one of the branch in $\cF(\vx)$:
\[
\vy_t \to P_{\vy_0}(\vx)\in \cF(\vx)
\]
Moreover, $\Sigma^\circ(\vx)$ has Lebesgue measure zero in \(\cY\).
Therefore, for every \(\vx\in \cK\setminus\cU\), the convergence statement holds for Lebesgue-almost every initialization \(\vy_0\in \cY\).

\item For every \(\vx\in \cK\setminus\cU\), it holds that 
\[
\left\{
\lim_{t\to\infty}\vy_t(\vx;\vy_0): \vy_0\in \cY\setminus\Sigma^\circ(\vx)
\right\}
= \cF(\vx).
\]
In other words, every branch in $\cF(\vx)$ can be achieved
\end{enumerate}
\end{theorem}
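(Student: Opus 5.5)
We take $g$ to be the damped operator \eqref{eq:define-g-operator}, restricted to $\cY\times\cK$, and check the four items one at a time, mostly by citing the lemmas already proved.

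\textbf{Item 1.} Theorem~\ref{thm:lip-g} shows that $\vx\mapsto g(\vy,\vx)$ is globally Lipschitz on $\overline\cK$. Restricting to $\cK$ gives the claim.

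\textbf{Item 2.} This is the last assertion of Lemma~\ref{lemma:freezing}. If $\vx\in\cU$, then $(\vy,\vx)\in\cB$ for every $\vy$, so $\eta=0$ and $g(\vy,\vx)=\vy$.

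\textbf{Item 3, convergence.} For $\vx\in\cK\setminus\cU$ and $\vy_0\notin\Sigma^\circ(\vx)$, Theorem~\ref{thm:convergence-fix-x} gives geometric convergence $\vy_t\to P_{\vy_0}(\vx)$. The limit lies in $\cF(\vx)$ because $P_{\vy_0}(\vx)=f_{\pi_{\vy_0}(\vx)}(\vx)$.

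\textbf{Item 3, null set.} This is the only genuinely new step. Fix $\vx$ and let $a_1,\dots,a_q$ be the distinct values in $\cF(\vx)$. By definition,
\[
\Sigma^\circ(\vx)\subset\bigcup_{\alpha<\beta}\bigl\{\vy:\|\vy-a_\alpha\|=\|\vy-a_\beta\|\bigr\}.
\]
Expanding the squares, each set on the right is $\{\vy:2\langle \vy,a_\beta-a_\alpha\rangle=\|a_\beta\|^2-\|a_\alpha\|^2\}$. Since $a_\alpha\ne a_\beta$, this is an affine hyperplane in $\mathbb R^m$ and so has measure zero. A finite union of null sets is null, so $\Sigma^\circ(\vx)$ is Lebesgue-null.

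\textbf{Item 4, the easy inclusion.} Item 3 shows that every limit from $\vy_0\in\cY\setminus\Sigma^\circ(\vx)$ lies in $\cF(\vx)$, so the set of limits is contained in $\cF(\vx)$.

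\textbf{Item 4, the reverse inclusion.} Given a value $a=f_i(\vx)$, initialize at $\vy_0:=a$ itself.
\begin{itemize}
\item Every other distinct value $b\ne a$ satisfies $\|a-b\|>0=\|a-a\|$, so $a$ is the unique nearest distinct value. Hence $a\notin\Sigma^\circ(\vx)$.
\item Consequently $P_a(\vx)=a$. The priority tie-break only chooses among labels that share the value $a$, so it does not affect the output value.
\item Theorem~\ref{thm:convergence-fix-x} then gives the limit $a$; in fact $a$ is a fixed point, since $g(a,\vx)=a$.
\end{itemize}
If one prefers a "generic" witness, a small open ball around $a$ also avoids the closed set $\Sigma^\circ(\vx)$ and maps to $a$, which gives a positive-measure basin.

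\textbf{Where care is needed.} Most of this is bookkeeping. The two places needing attention are the following.
\begin{itemize}
\item In the null-set argument, only pairs of distinct values enter the definition of $\Sigma^\circ$. This is what keeps each bisector a genuine hyperplane rather than all of $\cY$.
\item In Item 4, we must confirm that $P_a(\vx)$ returns the value $a$ regardless of which label the priority rule picks.
\end{itemize}
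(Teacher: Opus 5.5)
Your proposal is correct and follows the paper's proof essentially step for step. Items 1 through 3 come from Theorem~\ref{thm:lip-g}, Lemma~\ref{lemma:freezing}, and Theorem~\ref{thm:convergence-fix-x}, the null-set claim from the finite union of bisector hyperplanes, and the reverse inclusion in item 4 from initializing at $\vy_0=\vu$. Your extra remarks, that the priority tie-break does not change the selected value and that a small ball around $\vu$ gives a positive-measure basin, are correct refinements that the paper leaves implicit.
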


\begin{proof}
The first claim follows from Theorem \ref{thm:lip-g}.

The second claim follows from Lemma \ref{lemma:freezing}.

The third claim follows from Theorem \ref{thm:convergence-fix-x}. In addition, For a stable \(\vx\), the graph-closure switching set agrees locally with the ordinary reduced Voronoi switching set (Lemma \ref{lemma:closure-equal-reduce}). Since \(\cF(\vx)\) is finite,
\(\Sigma^\circ(\vx)\) is contained in a finite union of pairwise bisector
hyperplanes. Hence \(\Sigma^\circ(\vx)\) has Lebesgue measure zero in \(\cY\).

For the fourth claim, fix \(\vx\in \cK\setminus\mathcal U\). By the third claim, every initialization \(\vy_0\notin\Sigma^\circ(\vx)\) converges to some element of \(\cF(\vx)\). Hence
\[
\left\{
\lim_{t\to\infty}\vy_t(\vx;\vy_0):
\vy_0\in \cY\setminus\Sigma^\circ(\vx)
\right\}
\subset \cF(\vx).
\]
Conversely, take any \(\vu\in \cF(\vx)\), and initialize at
\[
\vy_0:=\vu.
\]
Then \(\vu\) is the unique nearest point value to itself among the distinct branch values in \(\cF(\vx)\). Therefore
\[
\vy_0\notin\Sigma^\circ(\vx),
\qquad
P_{\vy_0}(\vx)=\vu.
\]
By Theorem \ref{thm:convergence-fix-x}, the iteration converges to \(\vu\). Hence every
\(\vu\in \cF(\vx)\) is obtained as a limit, proving equality.
\end{proof}

\section{Proof of Theorem \ref{thm:main-lip-selector}}
\label{app:theorem-2}

We first describe Theorem \ref{thm:main-lip-selector} in a more precise version and then provide its complete proof.

\begin{theorem}
\label{thm:app-lip-selector}
Let \(\cK\subset \mathbb{R}^d\) be bounded and measurable, let
\(\cY=\mathbb{R}^m\), \(m\ge 1\), and let $f_1,\ldots,f_n:\cK\to \cY$ be locally Lipschitz branch maps. Let $\cK^\ast := \cK\setminus\cU$.
Let \(g:\cY\times \cK\to \cY\) be the recurrent operator constructed in Theorem \ref{thm:expressivity}. 
For \(\vy_0\in \cY\), define the induced solution selector
\[
u(\vy_0,\vx):=\lim_{t\to\infty}g^t(\vy_0,\vx),
\]
whenever the limit exists.

Then there exists a Lebesgue-null set \(\cE\subset \cY\) such that for every
\(\vy_0\in \cY\setminus \cE\), the fixed-initialization switching set
\[
\cN(\vy_0) := \{\vx\in \cK^\star: \vy_0\in \Sigma^\circ(\vx)\}
\]
has zero \(d\)-dimensional Lebesgue measure:
\[
\mathcal{L}^d(\cN(\vy_0))=0.
\]
Moreover, for every \(\vy_0\in \cY\setminus \cE\), the limit $u(\vy_0,\vx)$ exists for all $\vx\in \cK^\star\setminus \cN(\vy_0)$, and the map $\vx\mapsto u(\vy_0,\vx)$ is locally Lipschitz on \(\cK^\star\setminus \cN(\vy_0)\).
\end{theorem}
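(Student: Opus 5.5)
The plan is a Fubini/Tonelli argument on the joint switching set, followed by a local application of Lemma~\ref{lemma:lip-p-operator}. Consider
\[
Z:=\{(\vx,\vy)\in\cK^\star\times\cY:\ \vy\in\Sigma^\circ(\vx)\}.
\]
For each $\vx\in\cK^\star$, the section $\{\vy:(\vx,\vy)\in Z\}=\Sigma^\circ(\vx)$ is a Lebesgue-null subset of $\cY$. This was shown in the final proof of Theorem~\ref{thm:expressivity}: it lies in finitely many bisector hyperplanes. To apply Tonelli we need $Z$ to be measurable in $\mathbb{R}^d\times\mathbb{R}^m$.

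For measurability, I would write $Z=\bigcup_{i\neq j}Z_{ij}$ with
\[
Z_{ij}=\{(\vx,\vy):\ f_i(\vx)\neq f_j(\vx),\ \|\vy-f_i(\vx)\|=\|\vy-f_j(\vx)\|=\min_k\|\vy-f_k(\vx)\|\}.
\]
Each $f_k$ is continuous on the measurable set $\cK$, hence Borel on $\cK$. Each $Z_{ij}$ is then the intersection of $(\cK^\star\times\cY)$ with sets defined by continuous equalities and one open inequality, so it is Borel relative to $\cK\times\cY$. Also $\cK^\star$ is measurable, since $\cU$ is relatively closed by Lemma~\ref{lemma:closed-1}.

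Tonelli then gives
\[
\int_\cY \mathcal L^d(\cN(\vy_0))\,d\vy_0=\int_{\cK^\star}\mathcal L^m(\Sigma^\circ(\vx))\,d\vx=0.
\]
Hence $\mathcal L^d(\cN(\vy_0))=0$ outside a null set $\cE\subset\cY$, which is the first claim.

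For the second claim, fix $\vy_0\notin\cE$ and any $\vx\in\cK^\star\setminus\cN(\vy_0)$. Then $\vy_0\notin\Sigma^\circ(\vx)$, so Theorem~\ref{thm:convergence-fix-x} shows the limit exists and $u(\vy_0,\vx)=P_{\vy_0}(\vx)$.

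It remains to show $P_{\vy_0}$ is locally Lipschitz at every point of $\cK^\star\setminus\cN(\vy_0)$. Take $\vx$ in this set. By Lemma~\ref{lemma:closure-equal-reduce}, since $\vx$ is stable, $\widetilde\Sigma(\vx)=\Sigma^\circ(\vx)\not\ni\vy_0$, so $\vx\notin\cJ_{\vy_0}$. Lemma~\ref{lemma:lip-p-operator} then says $P_{\vy_0}$ is locally Lipschitz on the relatively open set $\cK\setminus\cJ_{\vy_0}$, which contains $\vx$. Restricting to $\cK^\star\setminus\cN(\vy_0)$, the map $\vx\mapsto u(\vy_0,\vx)$ agrees with $P_{\vy_0}$ there and is therefore locally Lipschitz, as the theorem asserts.

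I expect the main obstacle to be the measurability of $Z$ needed for Tonelli; the rest is bookkeeping with earlier lemmas. If Borel-ness via continuity on the merely measurable set $\cK$ turns out delicate, the fallback is to take the union over the finitely many pairs $(i,j)$, show each $Z_{ij}$ is relatively closed in $\{\vx:f_i(\vx)\neq f_j(\vx)\}\times\cY$, and note that this set is relatively open in $\cK\times\cY$.
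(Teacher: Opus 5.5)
Your proof is correct. The measure-theoretic half matches the paper: sectionwise nullity of $\Sigma^\circ(\vx)$ via bisector hyperplanes, measurability of the incidence set, then Fubini--Tonelli. Your measurability argument (relatively Borel in the Lebesgue-measurable set $\cK\times\cY$, with $\cK^\star$ measurable by Lemma~\ref{lemma:closed-1}) is more careful than the paper's one-line justification. Applying Tonelli directly on the $\sigma$-finite space also lets you skip the paper's localization to balls $\sB_R^\cY$ and the countable union $\cE=\bigcup_\ell \cE_\ell$.

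The genuine difference is in the local Lipschitz step. The paper argues directly. At $\vx_0\in\cK^\star\setminus\cN(\vy_0)$ there is a unique nearest stable collapse cluster $C_{\alpha_\star}$ with a strict distance gap. Continuity of the branches and constancy of the collapse pattern make this gap persist on a relative neighborhood $\sV_0$. Hence $u(\vy_0,\cdot)=P_{\vy_0}=f_{i_{\alpha_\star}}$ on $\sV_0$, and Lipschitzness is inherited from that single branch.

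You instead use Lemma~\ref{lemma:closure-equal-reduce} to identify $\cK^\star\cap\cJ_{\vy_0}$ with $\cN(\vy_0)$. You then invoke the quantitative margin estimate of Lemma~\ref{lemma:lip-p-operator}, which gives a Lipschitz constant $L(1+2B/\varepsilon)$ on $\cK\setminus\cJ_{\vy_0}$ without reference to collapse stability. Stability enters your argument only through Lemma~\ref{lemma:closure-equal-reduce} and Theorem~\ref{thm:convergence-fix-x}.

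What each route buys: yours is shorter on the page and reuses established lemmas, with relative openness of $\cK^\star\setminus\cN(\vy_0)$ coming from Lemma~\ref{lemma:closed-2}. The paper's route is more elementary and self-contained. It also yields a stronger structural fact: the induced selector does not switch branches at all near such $\vx_0$, so the margin machinery is unnecessary here.
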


Note: In Appendix~\ref{app:theorem-2}, we write $\mathcal L^d$ for
$d$-dimensional Lebesgue measure to avoid ambiguity in the Fubini argument.
Elsewhere, we use $|\cdot|$ for Lebesgue measure in the relevant ambient space.

\begin{proof}
We first show that for almost every fixed initialization \(\vy_0\), the
switching set \(\cN(\vy_0)\) has zero measure.
For fixed \(\vx\in \cK^\star\), the switching set \(\Sigma^\circ(\vx)\) is
contained in a finite union of affine hyperplanes in \(\cY\). Indeed, for
each \(i\neq j\) with \(f_i(\vx)\neq f_j(\vx)\), expanding $\|\vy-f_i(\vx)\|^2=\|\vy-f_j(\vx)\|^2$ gives
\[
2\langle \vy,f_j(\vx)-f_i(\vx)\rangle = \|f_j(\vx)\|^2-\|f_i(\vx)\|^2.
\]
Since \(f_i(\vx)\neq f_j(\vx)\), the vector \(f_j(\vx)-f_i(\vx)\) is nonzero,
and hence this equation defines a genuine affine hyperplane in
\(\cY=\mathbb{R}^m\). Therefore it has \(m\)-dimensional Lebesgue measure
zero. Since there are only finitely many pairs \((i,j)\), we obtain
\[
\mathcal{L}^m(\Sigma^\circ(\vx))=0
\qquad
\text{for every }\vx\in \cK^\star.
\]

Now define the switching incidence set in the joint space
\[
\Gamma := \{(\vx,\vy)\in \cK^\star\times \cY:\ \vy\in\Sigma^\circ(\vx)\}.
\]
Equivalently,
\[
\Gamma = \bigcup_{1\le i<j\le n} \Gamma_{ij},
\]
where \(\Gamma_{ij}\) is the set of all \((\vx,\vy)\in \cK^\star\times \cY\)
such that
\[
f_i(\vx)\neq f_j(\vx), \qquad \|\vy-f_i(\vx)\| = \|\vy-f_j(\vx)\| = \min_{1\le k\le n}\|\vy-f_k(\vx)\|,
\]
Each \(\Gamma_{ij}\) is measurable because it is described by finitely
many equalities and inequalities involving continuous functions of
\((\vx,\vy)\). Hence \(\Gamma\) is measurable.

For \(R>0\), define the closed radius-\(R\) ball in the initialization
space \(\cY\) by
\[
\sB_R^\cY:=\{\vy\in \cY:\|\vy\|\le R\}.
\]
We use \(\sB_R^\cY\) only to localize the argument to a finite-measure set:
since \(\cK\) is bounded and measurable, \(\cK^\star\subset \cK\) has finite
Lebesgue measure, and \(\sB_R^\cY\) also has finite Lebesgue measure. Thus
the indicator function
\[
\mathbf{1}_{\Gamma\cap (\cK^\star\times \sB_R^\cY)}
\]
is integrable on \(\cK^\star\times \sB_R^\cY\), so Fubini's theorem applies.

For fixed \(\vx\in \cK^\star\), define the vertical fiber
\[
\Gamma_\vx^R := \{\vy\in \sB_R^\cY:(\vx,\vy)\in\Gamma\}.
\]
By definition,
\[
\Gamma_\vx^R = \Sigma^\circ(\vx)\cap \sB_R^\cY.
\]
Therefore
\[
\mathcal{L}^m(\Gamma_\vx^R) = \mathcal{L}^m(\Sigma^\circ(\vx)\cap \sB_R^\cY) = 0 \qquad \text{for every }\vx\in \cK^\star.
\]
Applying Fubini's theorem \cite[Theorem 1.22]{evans2015measure} in the order \(y\)-then-\(x\), we get
\[
\mathcal{L}^{d+m}\bigl(\Gamma\cap(\cK^\star\times \sB_R^\cY)\bigr)
= \int_{\cK^\star} \left( \int_{\sB_R^\cY} \mathbf{1}_{\Gamma}(\vx,\vy)\,\mathrm d \vy \right)\mathrm d\vx = \int_{\cK^\star} \mathcal{L}^m(\Gamma_\vx^R)\,\mathrm d\vx  = 0
\]
Next apply Fubini's theorem in the other order. For fixed
\(\vy\in \sB_R^\cY\), define the horizontal fiber
\[
\Gamma_R^\vy := \{\vx\in \cK^\star:(\vx,\vy)\in\Gamma\}.
\]
By definition,
\[
\Gamma_R^\vy = \{\vx\in \cK^\star:\vy\in\Sigma^\circ(\vx)\} = \cN(\vy).
\]
Therefore,
\[
\begin{aligned}
0
= \mathcal{L}^{d+m}\bigl(\Gamma\cap(\cK^\star\times \sB_R^\cY)\bigr)
&= \int_{\sB_R^\cY} \left( \int_{\cK^\star} \mathbf{1}_{\Gamma}(\vx,\vy)\,\mathrm d\vx \right)\mathrm d\vy  \\
&= \int_{\sB_R^\cY} \mathcal{L}^d(\Gamma_R^\vy)\,\mathrm d \vy  = \int_{\sB_R^\cY} \mathcal{L}^d(\cN(\vy))\,\mathrm d \vy.
\end{aligned}
\]
The function $\vy\mapsto \mathcal{L}^d(\cN(\vy))$ is nonnegative. Hence the last identity implies that
\[
\mathcal{L}^d(\cN(\vy))=0
\]
for Lebesgue-a.e. \(\vy\in \sB_R^\cY\).

Let \(\cE_R\subset \sB_R^\cY\) be the exceptional set
\[
\cE_R := \{\vy\in \sB_R^\cY:\mathcal{L}^d(\cN(\vy))>0\}.
\]
Then
\[
\mathcal{L}^m(\cE_R)=0.
\]
Now define
\[
\cE:=\bigcup_{\ell=1}^{\infty}\cE_\ell,
\]
where \(\cE_\ell\) is the exceptional set corresponding to the ball
\(\sB_\ell^\cY\). Since \(\cE\) is a countable union of Lebesgue-null sets,
\[
\mathcal{L}^m(\cE)=0.
\]
Take any \(\vy_0\in \cY\setminus \cE\). Choose an integer \(\ell\) large enough
so that \(\vy_0\in \sB_\ell^\cY\). Since \(\vy_0\notin \cE_\ell\), we have
\[
\mathcal{L}^d(\cN(\vy_0))=0.
\]
This proves the desired almost-everywhere statement for fixed
initializations.

It remains to prove the local Lipschitz property. Fix
\(\vy_0\in \cY\setminus \cE\) and \(\vx_0\in \cK^\star\setminus \cN(\vy_0)\). Since \(\vx_0\notin \cN(\vy_0)\), we have $\vy_0\notin\Sigma^\circ(\vx_0)$. Thus, among the distinct branch values in
\[
\cF(\vx_0)=\{f_1(\vx_0),\ldots,f_n(\vx_0)\},
\]
there is a unique branch value closest to \(\vy_0\).

Because \(\vx_0\in \cK^\star\), the branch collapse pattern is stable near
\(\vx_0\). Therefore there exist a relative neighborhood
\(\sV_0\subset \cK\) of \(\vx_0\) and a partition
\[
\{1,\ldots,n\}=C_1\cup\cdots\cup C_q
\]
such that, for every \(\vx\in \sV_0\),
\[
f_i(\vx)=f_j(\vx)
\quad\Longleftrightarrow\quad
i,j\in C_\alpha
\text{ for some }\alpha .
\]
Choose one representative \(i_\alpha\in C_\alpha\) for each cluster
\(C_\alpha\). Since \(\vy_0\notin\Sigma^\circ(\vx_0)\), there is a unique
cluster \(C_{\alpha_\star}\) whose branch value is closest to \(\vy_0\) at
\(\vx_0\). That is,
\[
\|\vy_0-f_{i_{\alpha_\star}}(\vx_0)\| < \|\vy_0-f_{i_\beta}(\vx_0)\|
\qquad
\text{for every }\beta\neq \alpha_\star.
\]
If \(q=1\), then there is only one distinct branch value near \(\vx_0\),
and the conclusion is immediate. Otherwise, define the positive margin
\[
\delta := \min_{\beta\neq \alpha_\star}
\left[ \|\vy_0-f_{i_\beta}(\vx_0)\| - \|\vy_0-f_{i_{\alpha_\star}}(\vx_0)\|
\right] >0.
\]
By continuity of the finitely many branch maps, after shrinking \(\sV_0\)
if necessary, we may ensure that for every \(\vx\in \sV_0\) and every
\(\beta\neq\alpha_\star\),
\[
\|\vy_0-f_{i_{\alpha_\star}}(\vx)\| < \|\vy_0-f_{i_\beta}(\vx)\|.
\]
Thus the same cluster \(C_{\alpha_\star}\) remains the unique nearest
branch cluster throughout \(\sV_0\). Consequently,
\[
P_{\vy_0}(\vx)=f_{i_{\alpha_\star}}(\vx) \qquad \text{for every }\vx\in \sV_0.
\]
In particular, it holds that
\[
\sV_0\cap \cN(\vy_0)=\emptyset,
\]
and equivalently
\[ \vy_0 \notin \Sigma^\circ(\vx),\qquad \text{for every }\vx\in \sV_0. \]
By Theorem \ref{thm:convergence-fix-x}, for every \(\vx\in \sV_0\),
\[
u(\vy_0,\vx) = \lim_{t\to\infty}g^t(\vy_0,\vx) = P_{\vy_0}(\vx) = f_{i_{\alpha_\star}}(\vx).
\]
Since \(f_{i_{\alpha_\star}}\) is locally Lipschitz, the map $\vx\mapsto u(\vy_0,\vx)$ is locally Lipschitz in a neighborhood of \(\vx_0\).
Because \(\vx_0\in \cK^\star\setminus \cN(\vy_0)\) was arbitrary, hence \(\vx\mapsto u(\vy_0,\vx)\) is locally Lipschitz on
\(\cK^\star\setminus \cN(\vy_0)\).
\end{proof}

\textbf{An example to understand Theorem \ref{thm:app-lip-selector}.}
Consider the one-dimensional set-valued map
\[
F(x)=\{x,-x\},
\qquad x\in \mathcal K\subset\mathbb R,
\]
with branches \(f_1(x)=x\) and \(f_2(x)=-x\). The two branches collapse at
\(x=0\), and this collapse is not stable: in every neighborhood of \(0\),
the branches are equal at \(0\) but distinct away from \(0\). Hence the
stable domain is
\[
\mathcal K^\star=\mathcal K\setminus\{0\}.
\]
Now fix an initialization \(y_0\). If \(y_0=0\), then for every
\(x\in\mathcal K^\star\),
\[
|y_0-x|=|y_0+x|,
\]
so \(y_0\) lies exactly on the Voronoi switching boundary for every stable
input. Therefore
\[
\cN(0)=\mathcal K^\star.
\]
Thus \(y_0=0\) is a bad initialization: it is switching for a positive-measure,
indeed full-measure, set of stable inputs. However, this bad initialization is
exceptional. If \(y_0\neq 0\), then
\[
|y_0-x|=|y_0+x|
\]
implies
\[
(y_0-x)^2=(y_0+x)^2,
\]
and hence
\[
4y_0x=0.
\]
Since \(y_0\neq 0\), this forces \(x=0\), which is not contained in
\(\mathcal K^\star\). Therefore
\[
\cN(y_0)=\varnothing
\qquad
\text{for every }y_0\neq 0.
\]
This example shows why the theorem must exclude a null set of initializations:
some special initialization, here \(y_0=0\), can be bad for many inputs, but the
set of such initializations has Lebesgue measure zero in \(Y=\mathbb R\). For
almost every fixed initialization \(y_0\), the induced selector is stable and
locally Lipschitz on the whole stable domain \(\mathcal K^\star\).

\section{Proof of Theorem \ref{thm:arbitrary-irregular-selector}}

We now restate Theorem \ref{thm:arbitrary-irregular-selector} to facilitate readability and proceed to prove it. 

\begin{theorem}
Let $\cK^\star\subset \mathbb{R}^d$ be bounded and measurable, and let $\cY=\mathbb{R}^m$. Let $f_1,\ldots,f_n:\cK^\star\to\cY$ be locally Lipschitz branch maps.
Define the genuinely multi-branch region
\[
    \cM
    :=
    \{\vx\in\cK^\star:|\cF(\vx)|\ge 2\}
    =
    \{\vx\in\cK^\star:\exists i\neq j
    \text{ such that } f_i(\vx)\neq f_j(\vx)\}.
\]
Then, for every
\[
    a\in[0,|\cM|],
\]
there exists a measurable solution selector
\[
    u_a:\cK^\star\to\cY,
    \qquad
    u_a(\vx)\in\cF(\vx),
\]
such that
\[
    \left|
        \operatorname{Disc}_{\cK^\star}(u_a)
    \right|
    =
    a.
\]
Here $\operatorname{Disc}_{\cK^\star}(u_a)$ denotes the set of
discontinuity points of $u_a$ relative to the domain $\cK^\star$.
\end{theorem}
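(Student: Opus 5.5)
The plan is to start from a continuous selector and deliberately corrupt it on a countable dense subset of a carefully chosen closed set $A\subset\cM$ with $|A|=a$. The corruption is designed so that the discontinuity set is exactly $A$.

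\textbf{Base selector.} Take $u_0:=f_1$. It is a valid selector, and since $f_1$ is locally Lipschitz it is continuous on $\cK^\star$, so $\operatorname{Disc}_{\cK^\star}(u_0)=\varnothing$. This settles the case $a=0$, so assume $a>0$. Define
\[
h(\vx):=\max_{1\le j\le n}\|f_j(\vx)-f_1(\vx)\|,
\]
which is continuous, and note that $\cM=\{h>0\}$. In particular $\cM$ is relatively open in $\cK^\star$ and hence measurable. For $\vx\in\cM$ let $j^\star(\vx)$ be the smallest index attaining the maximum, so that $\|f_{j^\star(\vx)}(\vx)-f_1(\vx)\|=h(\vx)$.

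\textbf{Choosing $A$.} By inner regularity there are compact $C_k\subset\cM$ with $|C_k|\uparrow|\cM|$. If $a<|\cM|$, pick one compact $C\subset\cM$ with $|C|>a$. If $a=|\cM|$, we cannot take a single compact set, so we handle this case by a countable union of compacts arranged so that every point of $\cM$ has a neighborhood meeting only finitely many pieces. Concretely, exhaust the open set $\cM$ by sets $\{h\ge 1/k\}$ intersected with compact pieces; the local finiteness then holds away from the boundary of $\cM$. The function $r\mapsto|C\cap\overline{\sB_r(0)}|$ is continuous, because spheres are null, and it runs from $0$ to $|C|$. So we can pick $r$ with $|C\cap\overline{\sB_r(0)}|=a$.

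Next replace this compact set by the support of Lebesgue measure restricted to it:
\[
A:=\{\vx:\ |C\cap\overline{\sB_r(0)}\cap\sB_\epsilon(\vx)|>0\ \ \forall\epsilon>0\}.
\]
This set is closed, contained in $C\cap\overline{\sB_r(0)}\subset\cM$, and differs from it by a null set, so $|A|=a$. Every point of $A$ sees positive measure of $A$ in each of its neighborhoods. By compactness and continuity of $h$, there is $\delta>0$ with $h\ge\delta$ on $A$.

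\textbf{The corrupted selector.} Let $Q\subset A$ be countable and dense in $A$. Define $u_a(\vx):=f_{j^\star(\vx)}(\vx)$ for $\vx\in Q$ and $u_a(\vx):=f_1(\vx)$ otherwise. This is a valid selector, and it is measurable because it differs from $f_1$ only on a countable set. We then check $\operatorname{Disc}(u_a)=A$ in three cases.
\begin{enumerate}
\item If $\vx\notin A$, then since $A$ is closed there is a relative neighborhood of $\vx$ disjoint from $Q$. There $u_a=f_1$, which is continuous.
\item If $\vx\in Q$, every neighborhood contains positive measure of $A$, hence points of $A\setminus Q$. Along those points $u_a=f_1\to f_1(\vx)$, whereas $\|u_a(\vx)-f_1(\vx)\|\ge\delta$. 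So $u_a$ is discontinuous at $\vx$.
\item If $\vx\in A\setminus Q$, density gives $\vq_\ell\in Q$ with $\vq_\ell\to\vx$. Then $\|u_a(\vq_\ell)-f_1(\vq_\ell)\|\ge\delta$ while $f_1(\vq_\ell)\to f_1(\vx)=u_a(\vx)$. Hence $u_a(\vq_\ell)\not\to u_a(\vx)$, and $u_a$ is discontinuous at $\vx$.
\end{enumerate}
Therefore $|\operatorname{Disc}_{\cK^\star}(u_a)|=|A|=a$.

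\textbf{Main obstacle.} The delicate step is choosing $A$ with exact measure $a$ while keeping it closed and free of "thin" points. Closedness is what keeps $u_a$ continuous off $A$. The support property makes $Q$ not absorb whole neighborhoods, and the uniform gap $\delta$ makes jumps at limit points genuine. The endpoint $a=|\cM|$, where one compact set does not suffice, is where care is needed. The approach there is the locally finite countable union described above, or, if that proves awkward, replacing the uniform $\delta$ by local gaps $\delta_k$ on each compact piece, since the argument in cases 2 and 3 only uses local lower bounds on $h$.
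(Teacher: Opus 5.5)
For $0\le a<|\cM|$ your argument is correct, and it takes a different route from the paper's. The paper covers $\cM$ up to a null set by disjoint Vitali balls, each carrying a fixed pair with $f_{i_\ell}\neq f_{j_\ell}$. It truncates this cover to a relatively open $G_a$ with $|G_a|=a$ and null relative boundary, and alternates $f_{i_E}$ and $f_{j_E}$ on a countable dense subset of each piece. The lower bound then comes from Lebesgue density points, and the upper bound from $\operatorname{Disc}_{\cK^\star}(u_a)\subset G_a\cup(\partial_{\cK^\star}G_a\cap\cM)$. You replace the pairwise cover by the single continuous gap function $h$, and the open set by a compact $A$ obtained from inner regularity. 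Closedness gives you the upper bound for free, with no boundary estimate, and you even get $\operatorname{Disc}_{\cK^\star}(u_a)=A$ exactly. (Passing to the support is only needed for that exact identity. For the measure statement, $A\setminus Q\subset\operatorname{Disc}_{\cK^\star}(u_a)\subset A$ already holds with $A=C\cap\overline{\sB_r(0)}$.)

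The endpoint $a=|\cM|>0$, however, is left unproved, and the mechanism you sketch cannot close it. Your Case 1 rests on $A$ being closed. Take a countable union $A=\bigcup_k A_k$ of compacts with $|A|=|\cM|$, even one that is locally finite inside $\cM$. The points of $\overline{A}\setminus A$ then lie in $\cK^\star\setminus\cM$, and local finiteness necessarily fails at them. This set can have positive measure. For example, take $\cK^\star=[0,1]$, $f_1=0$ and $f_2=\operatorname{dist}(\cdot,\cC)$ with $\cC$ a fat Cantor set. Then $\cM=[0,1]\setminus\cC$, and every point of $\cC$ lies in $\overline{A}\setminus A$. Local gaps $\delta_k$ only repair Cases 2 and 3 (the lower bound), not continuity off $A$.

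The missing ingredient is the first step of the paper's proof. At $\vx_0\in\cK^\star\setminus\cM$ all branch values coincide, so every selector is continuous there; in your notation, $\|u_a(\vz)-f_1(\vz)\|\le h(\vz)\to h(\vx_0)=0$. With this, the endpoint needs no exhaustion at all. Take $Q$ countable dense in $\cM$ itself, and set $u_a=f_{j^\star}$ on $Q$ and $u_a=f_1$ elsewhere. For $\vx\in\cM\setminus Q$ and $Q\ni\vq_\ell\to\vx$, continuity of $h$ gives $\liminf_{\ell}\|u_a(\vq_\ell)-u_a(\vx)\|\ge h(\vx)>0$. Hence $\cM\setminus Q\subset\operatorname{Disc}_{\cK^\star}(u_a)\subset\cM$, and $|\operatorname{Disc}_{\cK^\star}(u_a)|=|\cM|$.
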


\begin{proof}
For each pair $1\le i<j\le n$, define
\[
    \cO_{ij}
    :=
    \{\vx\in\cK^\star:f_i(\vx)\neq f_j(\vx)\}.
\]
Since $f_i$ and $f_j$ are continuous relative to $\cK^\star$,
the set $\cO_{ij}$ is relatively open in $\cK^\star$. Moreover,
\[
    \cM
    =
    \bigcup_{1\le i<j\le n}\cO_{ij}.
\]
Thus $\cM$ is relatively open in $\cK^\star$ and is Lebesgue measurable.

\textbf{Step 1: Scope of discontinuity.}
We first note that no selector can be discontinuous outside $\cM$.
Indeed, let $\vx_0\in\cK^\star\setminus\cM$. Then all branch values agree:
\[
    f_1(\vx_0)=\cdots=f_n(\vx_0)=:\vy_0.
\]
Let $u$ be any selector satisfying $u(\vx)\in\cF(\vx)$. For any sequence
$\vx_k\to\vx_0$ in $\cK^\star$, there exists an index
$i_k\in\{1,\ldots,n\}$ such that
\[
    u(\vx_k)=f_{i_k}(\vx_k).
\]
Hence
\[
    \|u(\vx_k)-\vy_0\|
    =
    \|f_{i_k}(\vx_k)-f_{i_k}(\vx_0)\|
    \le
    \max_{1\le i\le n}
    \|f_i(\vx_k)-f_i(\vx_0)\|
    \to 0.
\]
Therefore $u$ is continuous at $\vx_0$, and consequently
\[
    \operatorname{Disc}_{\cK^\star}(u)\subset \cM
\]
for every solution selector $u$.

\textbf{Step 2: Construction of disjoint open cover.}
Now fix $a\in[0,|\cM|]$. If $a=0$, choose
\[
    u_a(\vx)=f_1(\vx).
\]
Since $f_1$ is locally Lipschitz, it is continuous, and hence
\[
    \left|
        \operatorname{Disc}_{\cK^\star}(u_a)
    \right|
    =
    0.
\]

It remains to consider $a>0$. Since $\cM$ is relatively open in
$\cK^\star$, for every $\vx\in\cM$ there exists a pair $i<j$ and an
open ball $B(\vx,r)$ such that
\[
    B(\vx,r)\cap\cK^\star
    \subset
    \cO_{ij}.
\]
The family of all such ambient balls is a Vitali cover of $\cM$.
By the Vitali covering theorem \cite[Theorem 1.24]{evans2015measure}, there exists a countable pairwise
disjoint collection of open balls
\[
    B_1,B_2,\ldots
\]
such that, for each $\ell$, there is a pair $(i_\ell,j_\ell)$ with
\[
    G_\ell:=B_\ell\cap\cK^\star
    \subset
    \cO_{i_\ell j_\ell},
\]
and
\begin{equation}
\label{eq:vitali-cover}
\left| \cM\setminus \bigcup_{\ell=1}^{\infty}G_\ell \right| = 0.
\end{equation}
Since the balls $B_\ell$ are pairwise disjoint, the sets
$G_\ell$ are pairwise disjoint. Therefore
\[
    \sum_{\ell=1}^{\infty}|G_\ell|
    =
    \left|
        \bigcup_{\ell=1}^{\infty}G_\ell
    \right|
    =
    |\cM|.
\]

We now construct a relatively open set $G_a\subset\cM$ with $|G_a|=a$.

If $0<a<|\cM|$, define partial sums
\[
    S_N:=\sum_{\ell=1}^{N}|G_\ell|,
    \qquad
    S_0:=0.
\]
Since $S_N\to|\cM|>a$, there exists a first index $N$ such that $S_N>a$. Hence
\[
    S_{N-1}\le a<S_N.
\]
Set $r:=a-S_{N-1}$. Then $0\le r<|G_N|$. If $r=0$, just define
\[
    G_a:=\bigcup_{\ell=1}^{N-1}G_\ell.
\]
If $r>0$, let
\[
    H_t:=\{\vx\in\mathbb{R}^d:x_1<t\}.
\]
The function $\phi(t):=|G_N\cap H_t|$ is continuous, satisfies $\phi(t)\to 0$ as $t\to-\infty$, and satisfies $\phi(t)\to |G_N|$ as $t\to+\infty$. Hence there exists $t_0$ such that $|G_N\cap H_{t_0}|=r$. Now define
\[
    G_a
    :=
    \left(
        \bigcup_{\ell=1}^{N-1}G_\ell
    \right)
    \cup
    (G_N\cap H_{t_0}).
\]
Then $G_a\subset\cM$, $G_a$ is relatively open in $\cK^\star$, and
\[
    |G_a|=a.
\]
Moreover, its relative boundary satisfies
\[
    |\partial_{\cK^\star}G_a|=0,
\]
because $\partial_{\cK^\star}G_a$ is contained in a finite union of sets
of the form $\cK^\star\cap\partial B_\ell$ and possibly one hyperplane
slice $\cK^\star\cap\{x_1=t_0\}$, all of which have $d$-dimensional
Lebesgue measure zero.

If $a=|\cM|$, define
\[
    G_a:=\bigcup_{\ell=1}^{\infty}G_\ell.
\]
Then $G_a\subset\cM$, $G_a$ is relatively open in $\cK^\star$, and
\[
    |\cM\setminus G_a|=0,
    \qquad
    |G_a|=|\cM|=a.
\]
In this case, $\partial_{\cK^\star}G_a\cap\cM \subset \cM\setminus G_a$ because $G_a$ is relatively open and $G_a\subset\cM$. Then by \eqref{eq:vitali-cover}, we have
\[
    |\partial_{\cK^\star}G_a\cap\cM|=0.
\]

\textbf{Step 3: Construction of solution selector.}
We next define the selector $u_a$. The set $G_a$ is a finite or countable
union of disjoint selected pieces. Each selected piece $E$ is of one of
the forms
\[
    E=G_\ell
    \qquad\text{or}\qquad
    E=G_N\cap H_{t_0},
\]
and is contained in some $\cO_{i_Ej_E}$ for a pair $i_E<j_E$.
For each such selected piece $E$, choose a countable dense subset
$Q_E\subset E$ with respect to the relative topology of $E$.
Define
\[
    u_a(\vx)
    =
    \begin{cases}
        f_{i_E}(\vx), & \vx\in Q_E,\\[1mm]
        f_{j_E}(\vx), & \vx\in E\setminus Q_E,
    \end{cases}
    \qquad \vx\in E.
\]
Outside $G_a$, define
\[
    u_a(\vx)=f_1(\vx).
\]
Because the selected pieces are measurable and at most countably many,
and because each $Q_E$ is countable, the map $u_a$ is measurable.
Moreover,
\[
    u_a(\vx)\in\{f_1(\vx),\ldots,f_n(\vx)\}=\cF(\vx)
\]
for every $\vx\in\cK^\star$.

\textbf{Step 4: Discontinuity set.}
We now estimate the discontinuity set of $u_a$.

First, let $E$ be one selected piece. Since $E$ is measurable, the Lebesgue density theorem \cite[Theorem 1.35]{evans2015measure} implies that almost every point of $E$ is a density point of $E$. Let $\vx_0$ be such a density point. Because $Q_E$ is dense in $E$, there exists a sequence
\[
    \vx_k\in Q_E,
    \qquad
    \vx_k\to\vx_0.
\]
Also, since $Q_E$ is countable and $\vx_0$ is a density point of $E$,
every sufficiently small relative neighborhood of $\vx_0$ contains
points of $E\setminus Q_E$. Hence there exists a sequence
\[
    \vz_k\in E\setminus Q_E,
    \qquad
    \vz_k\to\vx_0.
\]
Along these two sequences,
\[
    u_a(\vx_k)=f_{i_E}(\vx_k)\to f_{i_E}(\vx_0),
\]
while
\[
    u_a(\vz_k)=f_{j_E}(\vz_k)\to f_{j_E}(\vx_0).
\]
Since $E\subset\cO_{i_Ej_E}$, we have
\[
    f_{i_E}(\vx_0)\neq f_{j_E}(\vx_0).
\]
Thus $u_a$ is discontinuous at $\vx_0$. Therefore $u_a$ is discontinuous
at almost every point of every selected piece $E$, and hence $|G_a\setminus \operatorname{Disc}_{\cK^\star}(u_a)|=0$. Consequently,
\[
    \left|
        \operatorname{Disc}_{\cK^\star}(u_a)
    \right|
    \ge
    |G_a|
    =
    a.
\]

Conversely, we already showed that no selector can be discontinuous
outside $\cM$. Moreover, if
\[
    \vx_0\in \cM\setminus\left(G_a\cup\partial_{\cK^\star}G_a\right),
\]
then $\vx_0$ has a relative neighborhood in $\cK^\star$ disjoint from
$G_a$. On that neighborhood, $u_a=f_1$, so $u_a$ is continuous at
$\vx_0$. Therefore
\[
    \operatorname{Disc}_{\cK^\star}(u_a)
    \subset
    G_a\cup(\partial_{\cK^\star}G_a\cap\cM).
\]

If $0<a<|\cM|$, then $|\partial_{\cK^\star}G_a|=0$, so
\[
    \left|
        \operatorname{Disc}_{\cK^\star}(u_a)
    \right|
    \le
    |G_a|
    =
    a.
\]
Together with the lower bound, this gives
\[
    \left|
        \operatorname{Disc}_{\cK^\star}(u_a)
    \right|
    =
    a.
\]

If $a=|\cM|$, then $|\partial_{\cK^\star}G_a\cap\cM|=0$, so again
\[
    \left|
        \operatorname{Disc}_{\cK^\star}(u_a)
    \right|
    =
    |G_a|
    =
    |\cM|
    =
    a.
\]
This completes the proof.
\end{proof}

\section{Experiment Details for Toy Examples}
\label{app:toy-experiments}

This appendix records the implementation details for the toy-example experiments in Section~\ref{sec:theory}. In particular, it specifies the parameterization of the weight-tied dynamics, the construction of the training/validation/test data, and the staged training procedure used in the numerical verifications for Theorems~\ref{thm:expressivity}--\ref{thm:arbitrary-irregular-selector}.

\textbf{Model.}
In all toy-example experiments, we parameterize the weight-tied dynamics $g_\theta$ by a two-layer MLP with embedding width $256$. We use the same parameterization across the two toy examples. The model is trained in the unrolled form used throughout the paper, with shared parameters across all iterations.

\textbf{Data.}
The first toy example \textbf{(Algebraic)} is constructed from the two branches of the map $x \mapsto \{\pm 1/|x|\}$. Its training inputs are sampled on a geometric grid
\[
\mathcal{X}_{\mathrm{train}}^{(1)}=\{\pm (1+\delta)^s\},
\]
with $\delta=0.1$, truncated to the range determined by $x_{\max}=10$ and $y_{\max}=10$; in particular, the grid excludes $x=0$. The validation inputs are generated independently on a different geometric grid with step size $0.2$, so the validation points do not coincide with the training points. For visualization and testing, we use denser grids that are again generated independently from the training grid.

The second toy example \textbf{(Double-well)} is constructed from the two stable branches of the double-well equation
\[
4y^3-4y+x=0.
\]
Its training inputs are sampled uniformly on $[-2.5,2.5]$ with $500$ points, while the validation inputs are sampled independently on the same interval with $128$ points. For visualization and testing, we again use denser independent grids. Therefore, in both toy examples, the train and test data are independent.

\textbf{Training.}
We largely follow Algorithm~\ref{alg:toy-supervised}, with one practical modification: we use staged training with increasing unroll lengths. For the first toy example, the three stages use unroll lengths
\[
[10,20,30],
\]
and for the second toy example, they use
\[
[3,6,9].
\]
All runs use learning rate $10^{-4}$, $5000$ epochs per stage, and $3$ stages in total. The initialization distribution is
\[
y_0 \sim \mathrm{Unif}[-5,5],
\]
and for each input we sample $10$ initializations during training. In the supervised toy-example experiments, when multiple target branches are available for the same input, we assign each sampled initialization $y_0$ to the branch that is closest to $y_0$ in Euclidean distance.

\textbf{Optimization and model selection.}
We use the same optimizer settings across the two toy examples, together with the stage-wise continuation described above. Within each stage, model selection is based on the validation loss, and the best checkpoint in that stage is used to initialize the next stage. This produces the final supervised checkpoints used in the toy-example visualizations.

\textbf{Branch selector construction.}
For the branch-selection experiments in Section~\ref{sec:multi-branch-benefit}, we derive single-valued supervised datasets from the original two-branch toy examples by imposing an interval-wise alternating selector. Specifically, we partition the input domain into $N_{\rm int}$ intervals and assign one branch to each interval, with adjacent intervals always selecting different branches. In the first toy example, the partition is uniform in $\log |x|$, so that branch switching is distributed evenly across scales near the singular region. In the second toy example, the partition is uniform in $x$. We consider interval counts $N_{\rm int} \in \{2,4,8,16,32,64\}$. This construction keeps the underlying solution set unchanged while producing increasingly irregular single-valued targets as $N_{\rm int}$ grows, thereby isolating the effect of arbitrary human branch selection from the effect of the original multi-solution geometry.

\textbf{Hardware.} All neural training and evaluation runs in this paper were performed on a single workstation with four NVIDIA Quadro RTX~6000 GPUs (24\,GB each).

\section{Ising-GNN Experiments: Details and Addtional Results}
\label{app:ising-scaling}

\subsection{Experiment Details}
This subsection provides more experiment details as a supplement to the main text.

\textbf{Dataset construction.}
Each graph is a triangular lattice generated by python package \texttt{networkx}, with all couplings antiferromagnetic: $J_{ij}\sim -\mathrm{Unif}[0.5,1.5]$.
This choice makes every edge prefer opposite spins, while the triangular geometry prevents all constraints from being satisfied simultaneously. The result is a geometrically frustrated family of Ising instances with many competing low-energy patterns.
The stored dataset contains $2500$ training graphs and $500$ held-out test graphs. The graph sizes vary substantially: the number of nodes ranges from $6$ to $55$ with mean $27.37$, and the number of undirected edges ranges from $9$ to $135$ with mean about $62.08$. 

For supervised comparisons, we use Gurobi on \eqref{eq:physical-energy} to compute spin label $\vs\in\{-1,+1\}^{|\cV|}$. This label is only one branch among many physically admissible low-energy branches.

\textbf{Graph models.} We largely adopt the implicit GNN architecture from \cite{liu2026expressive}, unrolling the dynamics for $T=10$ iterations. However, when training the supervised baseline against exact Gurobi labels, we observed that the standard update was highly unstable and struggled to minimize the loss. To achieve reliable convergence, we modify the supervised model to use a damped recurrent update:
$$\vs_{t+1} = (1-\alpha) \vs_t + \alpha\,g_{\rm GNN}(\vs_t, \cG), \qquad \alpha \in (0,1].$$
This damping mitigates the training instability associated with forcing the weight-tied dynamics to fit a single target branch. We treat $\alpha$ as a hyperparameter and tune it over the grid $\{0.01, 0.05, 0.1, 0.5, 1\}$. For the vanilla GNN counterpart, we use the codebase from \cite{chen2023on}.

\textbf{Other details.}
All models use hidden width $h=64$. For the original small-scale dataset, models are trained with Adam, learning rate $10^{-3}$, batch size $32$, and a maximum of $10{,}000$ epochs.

\subsection{Additional Results: Distribution of the Solution Number}

While Table~\ref{tab:ising-accuracy} in the main text summarizes the average number of distinct solutions discovered by the weight-tied GNN dynamics, this appendix provides the complete distributional breakdown. Figure~\ref{fig:ising-hist} demonstrates that the multi-branch behavior is consistently expressed across the dataset: rather than being skewed by a few extreme outliers, the model successfully recovers a high multiplicity of distinct low-energy states for the vast majority of frustrated graph instances.

\begin{figure}[t]
    \centering
    \includegraphics[width=0.65\linewidth]{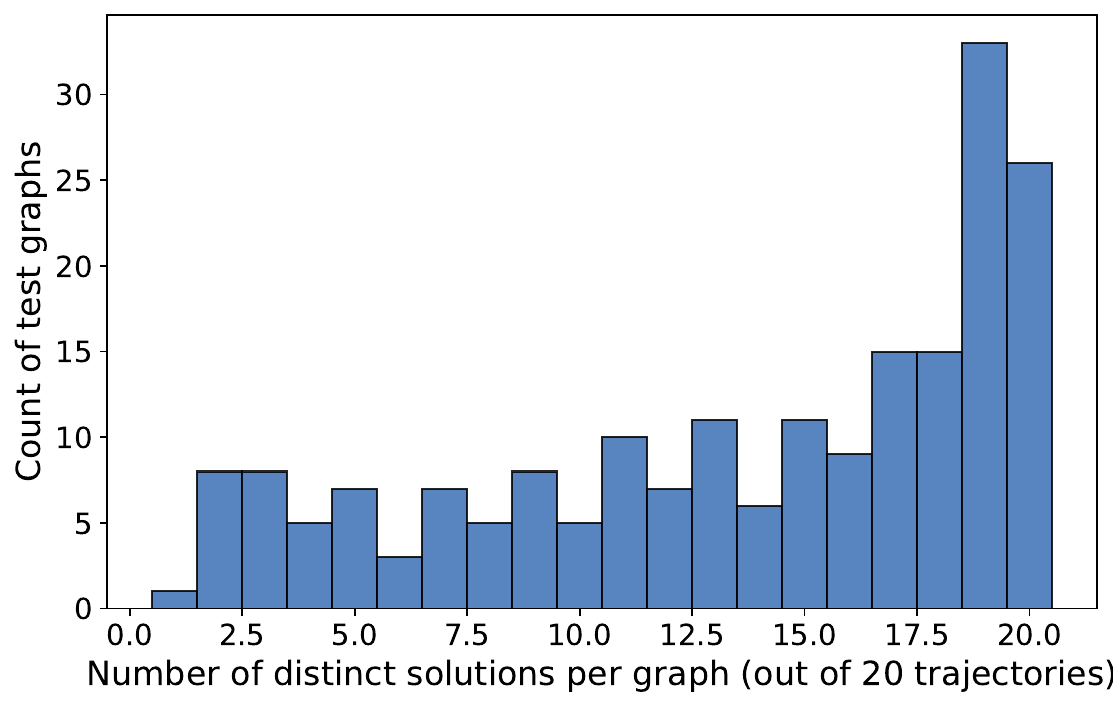}
    \caption{Histogram of the number of distinct rounded solutions per test graph for the weight-tied GNN dynamic. Evaluation uses $200$ test graphs and $k=20$ independent initializations per graph.}
    \label{fig:ising-hist}
\end{figure}

\subsection{Additional Results: Scaling up the Graph Size}

This subsection reports the larger-scale Ising experiments used to assess how the models behave as the graph size increases. All datasets in this subsection use the same triangular antiferromagnetic construction as the base dataset in the main text. Model, training, and evaluation protocols are also the same. The only difference is the graph size.

\textbf{Dataset levels.}
We generated eight dataset levels, denoted $L1$ through $L8$, by increasing the target graph-size range in the same data-generation script used for the small dataset. Each level contains $2500$ training graphs and $500$ test graphs. Table~\ref{tab:ising-level-sizes} reports the realized graph-size statistics.

\begin{table*}
\centering
\caption{Realized graph-size statistics for the eight triangular antiferromagnetic dataset levels used in the scaling experiments. Edge counts are undirected counts.}
\label{tab:ising-level-sizes}
\small
\setlength{\tabcolsep}{5pt}
\begin{tabular}{c c c c c c c}
\toprule
Level & Split & \# Graphs & Node Range & Mean Nodes & Edge Range & Mean Edges \\
\midrule
L1 & Train & 2500 & $66$--$120$ & $86.64$ & $165$--$315$ & $222.05$ \\
L1 & Test  & 500  & $66$--$120$ & $86.05$ & $165$--$315$ & $220.40$ \\
\midrule
L2 & Train & 2500 & $136$--$253$ & $186.28$ & $360$--$693$ & $502.65$ \\
L2 & Test  & 500  & $136$--$253$ & $186.40$ & $360$--$693$ & $503.01$ \\
\midrule
L3 & Train & 2500 & $276$--$528$ & $388.27$ & $759$--$1488$ & $1083.06$ \\
L3 & Test  & 500  & $276$--$496$ & $387.77$ & $759$--$1395$ & $1081.62$ \\
\midrule
L4 & Train & 2500 & $561$--$990$ & $751.35$ & $1584$--$2838$ & $2139.65$ \\
L4 & Test  & 500  & $561$--$946$ & $756.40$ & $1584$--$2709$ & $2154.40$ \\
\midrule
L5 & Train & 2500 & $990$--$1953$ & $1476.19$ & $2838$--$5673$ & $4267.92$ \\
L5 & Test  & 500  & $990$--$1953$ & $1474.49$ & $2838$--$5673$ & $4262.84$ \\
\midrule
L6 & Train & 2500 & $2016$--$3916$ & $2979.14$ & $5859$--$11484$ & $8708.41$ \\
L6 & Test  & 500  & $2016$--$3916$ & $2989.31$ & $5859$--$11484$ & $8738.53$ \\
\midrule
L7 & Train & 2500 & $4005$--$7875$ & $5961.49$ & $11748$--$23250$ & $17559.91$ \\
L7 & Test  & 500  & $4005$--$7875$ & $5876.83$ & $11748$--$23250$ & $17308.27$ \\
\midrule
L8 & Train & 2500 & $8001$--$15931$ & $11975.45$ & $23625$--$47259$ & $35465.78$ \\
L8 & Test  & 500  & $8001$--$15931$ & $12218.36$ & $23625$--$47259$ & $36189.83$ \\
\bottomrule
\end{tabular}
\end{table*}

\textbf{Interpretation.}
Table~\ref{tab:ising-emb64-all-levels} shows that energy-based training scales much more reliably than Gurobi-label fitting. 
Across all levels $L1$--$L8$, the energy-based models remain trainable on increasingly large frustrated graphs, and the recurrent GNN consistently gives the lowest rounded-spin energies; even at $L8$, where test graphs have about $1.2\times 10^4$ nodes and $3.6\times 10^4$ edges, its test energy remains around $-1.16$.

In contrast, Gurobi-label training is available only for $L1$--$L5$ due to labeling cost, and its performance degrades rapidly as graph size grows. 
This is consistent with the main-text interpretation: fitting one selected branch becomes increasingly brittle in large multi-solution landscapes, whereas energy-based training only asks the model to find any low-energy valid branch.

\begin{table*}
\caption{Ising results across dataset levels $L1$--$L8$. Supervised results are available only for $L1$--$L5$, since Gurobi labeling takes $>50$ hrs for L6 labeling. }
\label{tab:ising-emb64-all-levels}
\centering
\small
\setlength{\tabcolsep}{4pt}
\begin{tabular}{l l l rr}
\toprule
Dataset Level & Objective & Model & Train energy & Test energy \\
\midrule
L1 & Gurobi labeled & Vanilla GNN & $0.4413 \pm 0.5470$ & $0.4474 \pm 0.5096$  \\
L1 & Gurobi labeled & Recurrent GNN & $-0.1735 \pm 0.2357$ & $-0.1634 \pm 0.2294$  \\
L1 & Energy based & Vanilla GNN & $-0.8761 \pm 0.0751$ & $-0.8765 \pm 0.0701$  \\
L1 & Energy based & Recurrent GNN & $-1.0612 \pm 0.0407$ & $-1.0638 \pm 0.0422$  \\
\midrule
L2 & Gurobi labeled & Vanilla GNN & $0.7460 \pm 0.3787$ & $0.7644 \pm 0.4004$  \\
L2 & Gurobi labeled & Recurrent GNN & $0.3858 \pm 0.2314$ & $0.3875 \pm 0.2295$  \\
L2 & Energy based &Vanilla GNN & $-0.9003 \pm 0.0532$ & $-0.9017 \pm 0.0497$  \\
L2 & Energy based &Recurrent GNN & $-1.1185 \pm 0.0276$ & $-1.1165 \pm 0.0291$ \\
\midrule
L3 & Gurobi labeled & Vanilla GNN & $2.2180 \pm 0.2777$ & $2.2055 \pm 0.2773$  \\
L3 & Gurobi labeled & Recurrent GNN & $1.0121 \pm 0.2629$ & $0.9885 \pm 0.2653$  \\
L3 & Energy based &Vanilla GNN & $-0.9181 \pm 0.0389$ & $-0.9167 \pm 0.0390$  \\
L3 & Energy based &Recurrent GNN & $-1.1458 \pm 0.0189$ & $-1.1457 \pm 0.0179$  \\
\midrule
L4 & Gurobi labeled & Vanilla GNN & $2.5518 \pm 0.1508$ & $2.5516 \pm 0.1463$  \\
L4 & Gurobi labeled & Recurrent GNN & $1.7608 \pm 0.1986$ & $1.7606 \pm 0.1809$  \\
L4 & Energy based &Vanilla GNN & $-0.9352 \pm 0.0259$ & $-0.9356 \pm 0.0264$  \\
L4 & Energy based &Recurrent GNN & $-1.1504 \pm 0.0139$ & $-1.1508 \pm 0.0139$  \\
\midrule
L5 & Gurobi labeled & Vanilla GNN & $2.5416 \pm 0.0850$ & $2.5427 \pm 0.0817$  \\
L5 & Gurobi labeled & Recurrent GNN & $2.4534 \pm 0.0923$ & $2.4482 \pm 0.0877$  \\
L5 & Energy based &Vanilla GNN & $-0.9448 \pm 0.0199$ & $-0.9435 \pm 0.0202$  \\
L5 & Energy based &Recurrent GNN & $-1.1464 \pm 0.0101$ & $-1.1461 \pm 0.0096$  \\
\midrule
L6 & Gurobi labeled & Vanilla GNN & -- & --  \\
L6 & Gurobi labeled & Recurrent GNN & -- & --  \\
L6 & Energy based &Vanilla GNN & $-0.9540 \pm 0.0139$ & $-0.9531 \pm 0.0137$ \\
L6 & Energy based &Recurrent GNN & $-1.1554 \pm 0.0071$ & $-1.1554 \pm 0.0071$ \\
\midrule
L7 & Gurobi labeled & Vanilla GNN & -- & -- \\
L7 & Gurobi labeled & Recurrent GNN & -- & -- \\
L7 & Energy based &Vanilla GNN & $-0.9583 \pm 0.0097$ & $-0.9577 \pm 0.0100$  \\
L7 & Energy based &Recurrent GNN & $-1.1695 \pm 0.0080$ & $-1.1698 \pm 0.0082$  \\
\midrule
L8 & Gurobi labeled & Vanilla GNN & -- & --  \\
L8 & Gurobi labeled & Recurrent GNN & -- & --  \\
L8 & Energy based &Vanilla GNN & $-0.9616 \pm 0.0067$ & $-0.9617 \pm 0.0068$ \\
L8 & Energy based &Recurrent GNN & $-1.1605 \pm 0.0075$ & $-1.1605 \pm 0.0074$  \\
\bottomrule
\end{tabular}
\end{table*}

\section{Allen-Cahn Experiments: More Details and Additional Results}
\label{app:ac-additional}

\subsection{Experiment Details}

This subsection provides more experiment details as a supplement to the main text.

\textbf{Dataset construction.}
All experiments in this section use the fixed interface parameter \(\varepsilon=0.01\).
Each forcing field is generated on an \(N\times N\) periodic grid over \(\Omega=(0,2\pi)^2\). Unless otherwise stated, \(N=64\).
For each instance, we sample a periodic Gaussian random field (GRF). In particular, we draw independent Gaussian Fourier noise, impose the conjugate-symmetry condition needed to obtain a real-valued spatial field, set the zero mode to zero, and multiply each Fourier mode \(k\) by the
filter
\[
  \exp(-\|k\|_2^2/(2\ell^2))(1+\|k\|_2^2)^{-\alpha/2},
\]
where the smoothness and spectral-scale parameters are sampled independently as
\(\alpha\sim\mathrm{Unif}[1.4,2.8]\) and \(\ell\sim\mathrm{Unif}[8.0,14.0]\).
After applying the inverse FFT, we subtract the empirical spatial mean and divide by the empirical
spatial standard deviation, obtaining a normalized field \(\tilde f\). The final forcing is
\(f=A\tilde f+m\), where
\(A\sim\mathrm{Unif}[0.08,0.20]\) controls the amplitude and
\(m\sim\mathrm{Unif}[-0.03,0.03]\) controls the mean shift.
The dataset contains \(1000\) training instances, \(1000\) validation instances, and \(1000\)
held-out test instances, generated independently using the same procedure.

\textbf{Model structure.}
We largely follow the implicit FNO architecture of~\cite{marwah2023deep}. In that framework, a neural operator is used as a shared update rule for a steady-state solver rather than as a one-shot feed-forward predictor. At each iteration, the model takes the current state estimate together with the fixed PDE input, applies an FNO-based update block, and reuses the same parameters across all unrolled iterations. This gives a weight-tied dynamics whose equilibrium is interpreted as the predicted steady solution.
In our Allen--Cahn experiments, the input is the normalized forcing field \(f\), and the recurrent state is the phase-field estimate \(u_t\). We use the residual update
\[
  u_{t+1}
  =
  g_\theta(u_t,f)
  =
  u_t + \eta \Phi_\theta(u_t,f),
  \qquad \eta = 0.2,
\]
where \(\Phi_\theta\) is an FNO-based state-dependent update field shared across all iterations, following \cite{marwah2023deep}. 
  In particular, we choose such a block which has depth one and consists of one spectral convolution, one pointwise \(1\times1\) convolution, additive forcing injection, and a GELU nonlinearity. 
  For the main \(64\times64\) experiments, all models use width \(32\), \(32\times32\) retained Fourier modes, one recurrent update block, one internal layer inside that block, and one forcing input channel. 
  
  The initial recurrent state \(u_0\) is sampled from a GRF distribution with scale \(0.5\), smoothness parameter \(\alpha=3.0\), and spectral scale \(\ell=8.0\). During training, each forcing instance is replicated with \(8\) independent GRF initializations, while validation and test-time evaluation use \(16\) independent GRF initializations per instance. This trains the same weight-tied operator to evolve from multiple candidate starting states for the same physical input.

\textbf{Supervised label making.}
For supervised training, we construct labels by solving the steady Allen--Cahn equation with a classical periodic Implicit-Explicit (IMEX) solver \cite{chen1998applications} on the same spatial grid as the dataset. This is a commonly used method to solve Allen--Cahn. For each forcing instance, the classical solver is initialized from a GRF draw generated by the same initialization procedure used in model training, namely with scale \(0.5\), smoothness parameter \(\alpha=3.0\), and spectral scale \(\ell=8.0\). The solver tolerance is fixed at \(10^{-5}\). The resulting steady solutions are used as the supervised targets in all supervised Allen--Cahn experiments reported in the main text and appendix.

\textbf{Solution counting.} To count the number of distinct solution branches discovered by the model, we cluster the $M$ final states based on their pairwise root mean squared error (RMSE). Predictions are grouped into the same branch if their distance falls below a resolution-specific threshold. To ensure a fair baseline, this threshold is empirically calibrated such that the unregularized model (where the diversity penalty is zero) naturally collapses to a single dominant branch for the vast majority of test instances.

\textbf{Training.}
We largely follow Algorithm~\ref{alg:unsupervised}, but use a fixed six-stage curriculum in the unroll length. Allmodels are trained for \(120\) epochs with the Adam optimizer, weight decay \(10^{-4}\), batch size \(16\), and initial learning rate \(10^{-3}\). The six curriculum stages start at epochs \(0,20,40,60,80,100\), with training unroll lengths \(2,4,8,12,16,20\), respectively. The learning rate is kept constant within each stage and decreased across stages according to the schedule \([10^{-3},8\times10^{-4},6\times10^{-4},4\times10^{-4},3\times10^{-4},2\times10^{-4}]\). During training, each forcing instance is paired with \(8\) independent GRF initial states. During validation and testing, we use \(16\) independent GRF initial states per forcing instance. At each epoch, the model is evaluated on the validation set using the current curriculum stage, with the validation rollout length set to twice the training rollout length at that stage. We select the checkpoint with the smallest validation residual and report its final test performance. For supervised runs, we keep the same optimizer, epoch budget, learning-rate schedule, and unroll curriculum, but replace the unsupervised objective by a pure supervised regression loss to the chosen labeled branch.

\textbf{Diversity-weight curriculum.}
The diversity coefficient is stage-dependent rather than fixed throughout training. We use the same
six rollout stages as the unroll-length curriculum, with rollout lengths \(2,4,8,12,16,20\) respectively.
For each diversity-regularized run, we start with a relatively large diversity weight to encourage
separation among trajectories from different initial states, and then decay the weight so that later
stages focus more on physical refinement. In the last stage the diversity weight is set to zero, allowing
the model to improve the Allen--Cahn energy without further forcing separation. We report each
schedule in the main text by its initial value \(\lambda_{\rm init}\). The schedules are
\[
\begin{aligned}
\lambda_{\rm init}=0 &: [0,0,0,0,0,0],\\
\lambda_{\rm init}=0.5 &: [0.5,0.2,0.06,0.02,0.006,0],\\
\lambda_{\rm init}=1.2 &: [1.2,0.5,0.16,0.05,0.016,0],\\
\lambda_{\rm init}=2.0 &: [2.0,0.9,0.28,0.09,0.028,0].
\end{aligned}
\]
This curriculum separates the two roles of the objective: early stages encourage multiple basins of
attraction, while later stages refine the discovered states toward lower-energy Allen--Cahn solutions.

\subsection{Additional Experiments: ML-initialized Solver}
\label{app:ac-convergence-compare}

We first compare three test-time solvers on the canonical $64\times 64$ problem:
\begin{enumerate}
  \item \textbf{pure ML}: the learned weight-tied dynamic trained with $\lambda=0$ (energy only),
  \item \textbf{numerical solver}: the classical periodic IMEX solver,
  \item \textbf{hybrid solver}: a hybrid scheme that first applies the
  learned solver and then hands the iterate to the IMEX solver.
\end{enumerate}
The hybrid solver uses the learned model for at most the first $50$ steps. Let
\[
  r_t^{\mathrm{fp}}
  =
  \left(
    \frac{1}{n^2}
    \sum_{p,q}
    \bigl(u_{t+1}(p,q)-u_t(p,q)\bigr)^2
  \right)^{1/2}
\]
denote the fixed-point residual of the learned recurrence on an $n\times n$ grid. The handoff criterion is $r_t^{\mathrm{fp}} \le 10^{-2}$.
If this never occurs, the switch is forced at step $50$. After the switch, the IMEX solver continues from the learned state using the same total budget of $200$ recorded iterations. 

\textbf{Result.}
Figure~\ref{fig:ac-n64-ai-num-hybrid} reports the resulting energy and residual
trajectories. The learned model decreases the variational objective rapidly, but
its residual error plateaus. The numerical solver reduces the residual more
steadily, but may remain at a higher energy for a substantial part of the
trajectory. The hybrid method combines the two effects: it exploits the
learned solver to reach a low-energy basin quickly and then uses the numerical
solver to refine the state toward a smaller residual.

\begin{figure}[t]
  \centering
  \includegraphics[width=\textwidth]{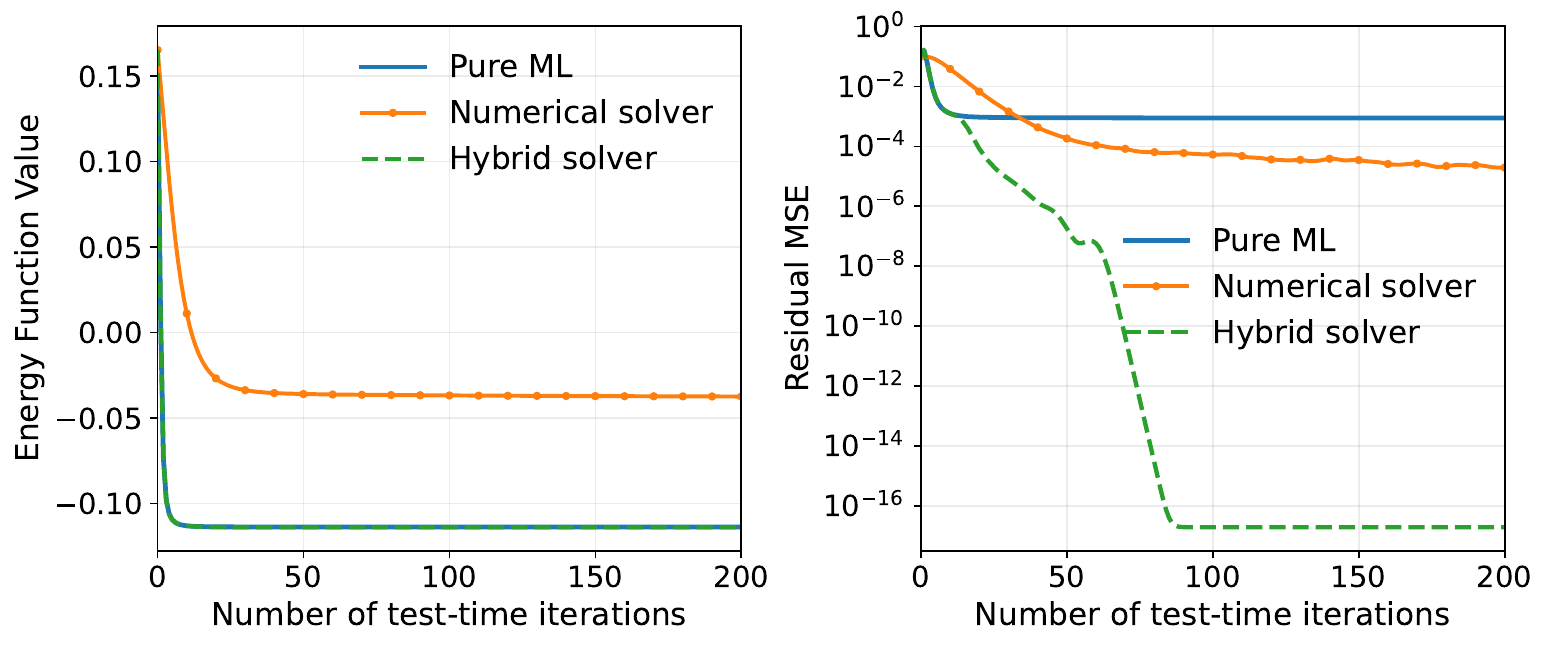}

  \caption{Convergence comparison for \emph{pure ML}, the \emph{numerical
  solver}, and the \emph{hybrid solver}.}
  \label{fig:ac-n64-ai-num-hybrid}
\end{figure}

\subsection{Additional Experiments: Resolution Scaling}

We extend the main Allen--Cahn comparison beyond the \(64\times64\) setting to
\(96\times96\) and \(128\times128\) grids. The physical problem and all training
hyperparameters are kept fixed. To fit memory, we use 32 retained Fourier modes
at all resolutions and reduce the batch size from 16 at \(64\times64\), to 8 at
\(96\times96\), and 4 at \(128\times128\).

\textbf{Result.}
Table~\ref{tab:fixedeps_resolution_implicit_plus_supervised} shows that the
same qualitative pattern persists across resolutions. IMEX-label fitting
collapses to one solution cluster and gives the worst physics errors. Energy-only
training remains the most accurate but also produces essentially one cluster.
Increasing the initial diversity weight \(\lambda_{\rm init}\) can recover many
distinct final-state clusters, especially at larger weights, but this generally
comes with worse residual error and energy. Thus the accuracy--diversity tradeoff
observed in the main \(64\times64\) experiment is not an artifact of a single
grid resolution.

\begin{table*}
\centering
\caption{Resolution extension of the Allen--Cahn study. We report mean best residual MSE, mean best energy, and the average number of distinct final-state clusters over 20 trajectories per forcing instance. Lower error/energy is better; larger \# Sols indicates greater diversity.}
\label{tab:fixedeps_resolution_implicit_plus_supervised}
\small
\setlength{\tabcolsep}{4pt}
\begin{tabular}{llrrrrr}
\toprule
Grid & Training obj. & Train error & Test error & Train energy & Test energy & \# Sols \\
\midrule
$96\times 96$ & IMEX label
& $9.51 \times 10^{-2}$ & $9.53 \times 10^{-2}$
& $1.38 \times 10^{-1}$ & $1.37 \times 10^{-1}$ & $1.00$ \\
$96\times 96$ & Energy only
& $1.55 \times 10^{-3}$ & $1.54 \times 10^{-3}$
& $-1.15 \times 10^{-1}$ & $-1.16 \times 10^{-1}$ & $1.04$ \\
$96\times 96$ & Div. $\lambda_{\rm init}=0.5$
& $1.47 \times 10^{-3}$ & $1.46 \times 10^{-3}$
& $-1.15 \times 10^{-1}$ & $-1.16 \times 10^{-1}$ & $1.11$ \\
$96\times 96$ & Div. $\lambda_{\rm init}=1.2$
& $1.27 \times 10^{-3}$ & $1.26 \times 10^{-3}$
& $-1.15 \times 10^{-1}$ & $-1.16 \times 10^{-1}$ & $1.14$ \\
$96\times 96$ & Div. $\lambda_{\rm init}=2.0$
& $1.45 \times 10^{-2}$ & $1.45 \times 10^{-2}$
& $-5.89 \times 10^{-2}$ & $-5.96 \times 10^{-2}$ & $19.74$ \\
\midrule
$128\times 128$ & IMEX label
& $9.26 \times 10^{-2}$ & $9.34 \times 10^{-2}$
& $1.37 \times 10^{-1}$ & $1.36 \times 10^{-1}$ & $1.00$ \\
$128\times 128$ & Energy only
& $2.24 \times 10^{-3}$ & $2.22 \times 10^{-3}$
& $-1.13 \times 10^{-1}$ & $-1.14 \times 10^{-1}$ & $1.01$ \\
$128\times 128$ & Div. $\lambda_{\rm init}=0.5$
& $1.88 \times 10^{-3}$ & $1.85 \times 10^{-3}$
& $-1.13 \times 10^{-1}$ & $-1.14 \times 10^{-1}$ & $1.05$ \\
$128\times 128$ & Div. $\lambda_{\rm init}=1.2$
& $1.97 \times 10^{-3}$ & $1.94 \times 10^{-3}$
& $-1.12 \times 10^{-1}$ & $-1.14 \times 10^{-1}$ & $1.13$ \\
$128\times 128$ & Div. $\lambda_{\rm init}=2.0$
& $2.69 \times 10^{-3}$ & $2.70 \times 10^{-3}$
& $-1.11 \times 10^{-2}$ & $-1.12 \times 10^{-2}$ & $19.98$ \\
\bottomrule
\end{tabular}
\end{table*}

\end{document}